\newcommand{\mymacro}[1]{{\color{MacroColor} #1}}
\newcommand{\mathhl}[2][ETHRed!25]{\mathchoice%
    {\colorbox{#1}{$\displaystyle#2$}}%
    {\colorbox{#1}{$\textstyle#2$}}%
    {\colorbox{#1}{$\scriptstyle#2$}}%
    {\colorbox{#1}{$\scriptscriptstyle#2$}}}%
\newcommand{\defn}[1]{\textbf{#1}}
\newcommand{\paroutline}[3][false]{%
    \ifnum\pdfstrcmp{#1}{true}=0
        #3
    \else
        [\textit{\textcolor{DiverseMagenta}{#2}}] \textcolor{AccentBlue}{#3}
    \fi
}
\newcommand{\pfsaAcr}{{\mymacro{PFSA}}\xspace}
\newcommand{\ppdaAcr}{{\mymacro{PPDA}}\xspace}
\newcommand{\inv}[1]{{\mymacro{#1^{-1}}}}
\newcommand{\pdens}{{\mymacro{ p}}}
\newcommand{\qdens}{{\mymacro{ q}}}
\newcommand{\acronymText}[1]{\mymacro{\text{#1}}}
\newcommand{\ptm}{{\mymacro{\acronymText{PTM}}}\xspace}
\newcommand{\fst}{{\mymacro{\acronymText{FST}}}\xspace}
\newcommand{\twopda}{{\mymacro{\acronymText{2PDA}}}\xspace}
\newcommand{\cfg}{{\mymacro{\acronymText{CFG}}}\xspace}
\newcommand{\Q}{{\mymacro{ \mathbb{Q}}}}
\newcommand{\Qnonnegative}{{\mymacro{\Q_{\geq 0}}}}
\newcommand{\R}{{\mymacro{ \mathbb{R}}}}
\newcommand{\func}{{\mymacro{ f}}}
\newcommand{\vfunc}{{\mymacro{ \mathbf{f}}}}
\newcommand{\norm}[1]{{\mymacro{ \left\lVert #1 \right\rVert}}}
\newcommand{\abs}[1]{{\mymacro{ \left| #1 \right|}}}
\newcommand{\innerProd}[2]{{\mymacro{ \langle #1, #2 \rangle }}}
\newcommand{\alphabet}{{\mymacro{ \Sigma}}}
\newcommand{\stackalphabet}{{\mymacro{ \Gamma}}}
\newcommand{\oalphabet}{\mymacro{\Xi}}
\newcommand{\yalphabet}{\mymacro{\Upsilon}}
\newcommand{\outalphabet}{{\mymacro{ \Delta}}}
\newcommand{\eosalphabet}{{\mymacro{ \overline{\alphabet}}}}
\newcommand{\bosalphabet}{{\mymacro{ \underline{\alphabet}}}}
\newcommand{\lang}{\mymacro{L}}
\newcommand{\kleene}[1]{{\mymacro{#1^*}}}
\newcommand{\regfn}{{\mymacro{\phi}}}
\newcommand{\str}{{\mymacro{\boldsymbol{y}}}}
\newcommand{\cotstr}{{\cotMacro{\str}}}
\newcommand{\cotstrlt}{{\mymacro{\str_{<\tstep}^{\scaleto{\thought}{7pt}}}}}
\newcommand{\cotsym}{{\cotMacro{\sym}}}
\newcommand{\cotbos}{{\mymacro{\bos^{\scaleto{\thought}{7pt}}}}}
\newcommand{\strlt}{{\mymacro{ \str_{<\tstep}}}}
\newcommand{\strlet}{{\mymacro{ \str_{\leq\tstep}}}}
\newcommand{\strlen}{{\mymacro{T}}}
\newcommand{\strx}{{\mymacro{\boldsymbol{x}}}}
\newcommand{\stry}{{\mymacro{\boldsymbol{y}}}}
\newcommand{\strz}{{\mymacro{\boldsymbol{z}}}}
\newcommand{\sym}{{\mymacro{y}}}
\newcommand{\eossym}{{\mymacro{\overline{\sym}}}}
\newcommand{\bossym}{{\mymacro{\underline{\sym}}}}
\newcommand{\symx}{{\mymacro{x}}}
\newcommand{\symy}{{\mymacro{y}}}
\newcommand{\symz}{{\mymacro{z}}}
\newcommand{\stacksym}{{\mymacro{\stacksymbol{\gamma}}}}
\newcommand{\defeq}{\mathrel{\stackrel{\textnormal{\tiny def}}{=}}}
\newcommand{\NTo}[1]{{\mymacro{\left[ #1 \right]}}}
\newcommand{\set}[1]{{\mymacro{\left\{ #1 \right\}}}}
\newcommand{\justification}[1]{%
    \refstepcounter{equation}%
    \tag{\theequation \textcolor{black!50}{, \footnotesize{#1}}}
}
\newcommand{\idx}{{\mymacro{ n}}}
\newcommand{\idxn}{{\mymacro{ n}}}
\newcommand{\idxd}{{\mymacro{ d}}}
\newcommand{\idxi}{{\mymacro{ i}}}
\newcommand{\idxj}{{\mymacro{ j}}}
\newcommand{\nstates}{{\mymacro{ |\states|}}}
\newcommand{\nsymbols}{{\mymacro{ |\alphabet|}}}
\newcommand{\tstep}{{\mymacro{ t}}}
\newcommand{\finaltstep}{{\mymacro{ T}}}
\newcommand{\measure}{\mymacro{\mu}}
\newcommand{\pLM}{\mymacro{\pdens}}
\newcommand{\pLMA}{\mymacro{\pdens}_\automaton}
\newcommand{\cotMacro}[1]{\mymacro{{#1}^{\scaleto{\thought}{7pt}}}}
\newcommand{\pLMcot}{\mymacro{\cotMacro{\pdens}}}
\newcommand{\qLM}{\mymacro{\qdens}}
\newcommand{\pLNSM}{\mymacro{\pdens}}
\newcommand{\bos}{{\mymacro{\textsc{bos}}}}
\newcommand{\eos}{{\mymacro{\textsc{eos}}}}
\newcommand{\embedDim}{{\mymacro{ R}}}
\newcommand{\onehot}[1]{{\mymacro{\llbracket#1\rrbracket}}}
\newcommand{\inEmbedding}{{\mymacro{ \vr}}}
\newcommand{\inEmbeddingFun}[2][]{{\mymacro{ \inEmbedding\!\left(#2\right)}}}
\newcommand{\inEmbedSymt}{{\mymacro{ \inEmbeddingFun{\sym_\tstep}}}}
\newcommand{\embedMtx}{{\mymacro{ \mE}}}
\newcommand{\eembedMtx}{{\mymacro{ \emE}}}
\newcommand{\precision}{{\mymacro{\psi}}}
\newcommand{\precisionFun}[1]{{\precision\left(#1\right)}}
\newcommand{\symt}{{\mymacro{ \sym_{\tstep}}}}
\newcommand{\symone}{{\mymacro{ \sym_{1}}}}
\newcommand{\biasVech}{{\mymacro{ \vb}}}
\newcommand{\zero}{{\mymacro{\boldsymbol{0}}}}
\newcommand{\one}{{\mymacro{\boldsymbol{1}}}}
\newcommand{\automaton}{{\mymacro{ \mathcal{A}}}}
\newcommand{\wfsa}{{\mymacro{ \automaton}}}
\newcommand{\wfsaFun}[1]{{\wfsa\left(#1\right)}}
\newcommand{\transducer}{{\mymacro{ \mathcal{T}}}}
\newcommand{\wfst}{{\mymacro{ \transducer}}}
\newcommand{\stateq}{{\mymacro{ q}}}
\newcommand{\states}{{\mymacro{ Q}}}
\newcommand{\actions}{{\mymacro{A}}}
\newcommand{\trans}{{\mymacro{ \delta}}}
\newcommand{\weight}{{\mymacro{ \textnormal{w}}}}
\newcommand{\prefixweight}{{\mymacro{\widetilde\weight}}}
\newcommand{\apath}{{\mymacro{ \boldsymbol \pi}}}
\newcommand{\pathlen}{{\mymacro{ N}}}
\newcommand{\paths}{{\mymacro{ \Pi}}}
\newcommand{\initial}{{\mymacro{ I}}}
\newcommand{\final}{{\mymacro{ F}}}
\newcommand{\initf}{{\mymacro{ \lambda}}}
\newcommand{\finalf}{{\mymacro{ \rho}}}
\newcommand{\initfFun}[1]{{\mymacro{\initf\left(#1\right)}}}
\newcommand{\finalfFun}[1]{{\mymacro{\finalf\left(#1\right)}}}
\newcommand{\qinit}{{\mymacro{ q_{\iota}}}}
\newcommand{\qfinal}{{\mymacro{ q_{\varphi}}}}
\newcommand{\wfsatuple}{{\mymacro{ \left( \alphabet, \states, \initf, \finalf, \trans \right)}}}
\newcommand{\edgenoweight}[3]{#1 \xrightarrow{#2} #3}
\newcommand{\edge}[4]{{\mymacro{#1 \xrightarrow{#2 / #3} #4}}}
\newcommand{\wfsttuple}{{\mymacro{ \left( \alphabet, \oalphabet, \states, \trans, \initf, \finalf \right)}}}
\newcommand{\fsttuple}{{\mymacro{ \left( \states, \alphabet, \oalphabet, \initial, \final, \trans \right)}}}
\newcommand{\yield}{{\mymacro{\textbf{s}}}}
\newcommand{\stateeq}{{\mymacro{ \sim_{\trans}}}}
\newcommand{\tm}{{\mymacro{ \mathcal{M}}}}
\newcommand{\ptmtuple}{\left(\states, \alphabet, \tapealphabet, \trans, \qinit, \qfinal \right)}
\newcommand{\tapealphabet}{\mymacro{ \Gamma}}
\newcommand{\posAt}[1]{\mymacro{c}\left(#1\right)}
\newcommand{\visitedAt}[1]{\mymacro{L}\left(#1\right)}
\newcommand{\lastVisit}[1]{\mymacro{\ell}\left(#1\right)}
\newcommand{\tmdir}{\mymacro{d}}
\newcommand{\tmleft}{\mymacro{ L}}
\newcommand{\tmright}{\mymacro{ R}}
\newcommand{\tmops}{\{\tmleft,\tmright\}}
\newcommand{\blanksym}{\mymacro{\sqcup}}
\newcommand{\tapesym}{\mymacro{\stacksym}}
\newcommand{\qptm}{{\mymacro{\Q\ptm}}}
\newcommand{\qptmtuple}{\mymacro{\left( \states, \alphabet, \tapealphabet, \trans_\tm, \qinit, \qfinal \right)}}
\newcommand{\elmanrnntuple}{{\mymacro{ \left( \alphabet, \sigmoid, \hiddDim, \recMtx, \inMtx, \biasVech, \initstate\right)}}}
\newcommand{\rnn}{{\mymacro{ \mathcal{R}}}}
\newcommand{\dpfsaAcr}{{\mymacro{DPFSA}}}
\newcommand{\recMtx}{{\mymacro{ \mU}}}
\newcommand{\inMtx}{{\mymacro{ \mV}}}
\newcommand{\outMtx}{{\mymacro{ \mE}}}
\newcommand{\eOutMtx}{{\mymacro{ \eembedMtx}}}
\newcommand{\hiddDim}{{\mymacro{ D}}}
\newcommand{\simplexFun}[1]{{\mymacro{ \boldsymbol{\Delta}}^{#1}}}
\newcommand{\Simplexnminus}{{\mymacro{ \boldsymbol{\Delta}^{N-1}}}}
\newcommand{\hiddState}{{\mymacro{ \vh}}}
\newcommand{\hiddStatet}{{\mymacro{ \hiddState_\tstep}}}
\newcommand{\hiddStatetminus}{{\mymacro{ \hiddState_{\tstep - 1}}}}
\newcommand{\vhzero}{{\mymacro{ \vh_0}}}
\newcommand{\hiddStateZero}{{\mymacro{ \vhzero}}}
\newcommand{\initstate}{{\mymacro{\boldsymbol{\eta}}}}
\newcommand{\vhtminus}{{\mymacro{ \vh_{t-1}}}}
\newcommand{\grammar}{{\mymacro{ \mathcal{G}}}}
\newcommand{\NT}[1]{{\mymacro{ \mathrm{#1}}}}
\newcommand{\enc}{{\mymacro{\mathsf{enc}}}}
\newcommand{\negterm}[1]{{\mymacro{ {\raise.17ex\hbox{$\scriptstyle\sim$}} #1}}}
\newcommand{\ifcondition}{\textbf{if }}
\newcommand{\otherwisecondition}{\textbf{otherwise}}
\newcommand{\wpdatuple}{\left(\states, \alphabet, \stackalphabet, \trans, \initialconfig, \finalconfig \right)}
\newcommand{\initialconfig}{\pdaconfig{\NT{S}}{\qinit}}
\newcommand{\finalconfig}{\pdaconfig{\eps}{\qfinal}}
\newcommand{\pushdown}{\mymacro{ \mathcal{P}}}
\newcommand{\pda}{\mymacro{ \pushdown}}
\newcommand{\twoPdaEdgenoweight}[8]{\mymacro{ #1 \xrightarrow[#6 \rightarrow #8]{#2,#3, #5 \rightarrow #7} #4}}
\newcommand{\arun}{{\mymacro{ \apath}}}
\newcommand{\runs}{{\mymacro{ \paths}}}
\newcommand{\stackseq}{{\mymacro{ {\boldsymbol{\gamma}}}}}
\newcommand{\pdaconfig}[2]{{\mymacro{ \left(#2, #1\right)}}}
\newcommand{\stacksymbol}[1]{{\mymacro{ #1 }}}
\newcommand{\atrans}{{\mymacro{ \tau}}}
\newcommand{\ignore}[1]{}
\newcommand{\expandLater}[1]{}
\newcommand{\tfheadnum}{\mymacro{H}}
\newcommand{\qTransf}{\mymacro{Q}}
\newcommand{\kTransf}{\mymacro{K}}
\newcommand{\vTransf}{\mymacro{V}}
\newcommand{\oTransf}{\mymacro{O}}
\newcommand{\fTransf}{\mymacro{F}}
\newcommand{\attn}{\mymacro{\texttt{Att}}}
\newcommand{\tfheadCombine}{\mymacro{\mathcal{H}}}
\newcommand{\staticRepr}{{\mymacro{\mathcal{R}}}}
\newcommand{\tf}{\mymacro{\mathcal{T}}}
\newcommand{\tfFun}[1]{\tf\left(#1\right)}
\newcommand{\transformernetwork}{\mymacro{\mathcal{T}}}
\newcommand{\tfpLM}{\mymacro{\pLM_\transformernetwork}}
\newcommand{\tfencfun}{\mymacro{\enc}}
\newcommand{\tfscorefun}{\mymacro{\func}}
\newcommand{\hardmax}{\mymacro{\mathrm{hardmax}}}
\newcommand{\hardmaxAvg}{\mymacro{\hardmax}}
\newcommand{\tflayer}{\mymacro{\mathrm{\mathcal{L}}}}
\newcommand{\tflayerinputsy}{\mymacro{\vx}}
\newcommand{\tflayeroutputsy}{\mymacro{\vz}}
\newcommand{\tflayeridx}{\mymacro{\ell}}
\newcommand{\tfnumlayer}{\mymacro{L}}
\newcommand{\posEnc}{\mymacro{\Pi}}
\newcommand{\posEncFun}[1]{\mymacro{\posEnc\left(#1\right)}}
\newcommand{\posInEmbedding}{\mymacro{\inEmbedding_{\scaleto{\posEnc}{4pt}}}}
\newcommand{\posInEmbeddingFun}[1]{\mymacro{\posInEmbedding\left(#1\right)}}
\newcommand{\epsalphabet}{\mymacro{\alphabet_\eps}}
\def\1{\mathbf{1}}
\def\eps{{\mymacro{ \varepsilon}}}
\def\sep{{\mymacro{X\xspace}}}
\def\rvz{{{\mymacro{ \mathbf{z}}}}}
\def\va{{{\mymacro{ \mathbf{a}}}}}
\def\vb{{{\mymacro{ \mathbf{b}}}}}
\def\vh{{{\mymacro{ \mathbf{h}}}}}
\def\vk{{{\mymacro{ \mathbf{k}}}}}
\def\vq{{{\mymacro{ \mathbf{q}}}}}
\def\vr{{{\mymacro{ \mathbf{r}}}}}
\def\vs{{{\mymacro{ \mathbf{s}}}}}
\def\vv{{{\mymacro{ \mathbf{v}}}}}
\def\vx{{{\mymacro{ \mathbf{x}}}}}
\def\vy{{{\mymacro{ \mathbf{y}}}}}
\def\vz{{{\mymacro{ \mathbf{z}}}}}
\def\evs{{{\mymacro{ s}}}}
\def\evx{{{\mymacro{ x}}}}
\def\mA{{{\mymacro{ \mathbf{A}}}}}
\def\mE{{{\mymacro{ \mathbf{E}}}}}
\def\mI{{{\mymacro{ \mathbf{I}}}}}
\def\mK{{{\mymacro{ \mathbf{K}}}}}
\def\mO{{{\mymacro{ \mathbf{O}}}}}
\def\mQ{{{\mymacro{ \mathbf{Q}}}}}
\def\mU{{{\mymacro{ \mathbf{U}}}}}
\def\mV{{{\mymacro{ \mathbf{V}}}}}
\def\mW{{{\mymacro{ \mathbf{W}}}}}
\def\mX{{{\mymacro{ \mathbf{X}}}}}
\def\sS{{{\mymacro{ \mathcal{S}}}}}
\def\emE{{\mymacro{ E}}}
\newcommand{\N}{{\mymacro{ \mathbb{N}}}}
\newcommand{\Nzero}{{\mymacro{ \mathbb{N}_{\geq 0}}}}
\newcommand{\projfunc}{{\mymacro{\vfunc}}}
\newcommand{\projfuncEosalphabetminus}{{\mymacro{\vfunc}}}
\newcommand{\ReLU}{{\mymacro{ \mathrm{ReLU}}}}
\newcommand{\sparsemaxfunc}[2]{{\mymacro{ \mathrm{sparsemax}\!\left(#1\right)_{#2}}}} 
\newcommand{\ReLUfunc}[1]{{\mymacro{ \ReLU\!\left(#1\right)}}} 
\newcommand{\heaviside}{{\mymacro{\mathrm{H}}}}
\newcommand{\heavisideFun}[1]{{\mymacro{ \heaviside\left(#1\right)}}}
\newcommand{\sigmoid}{{\mymacro{ \sigma}}}
\DeclareMathOperator*{\argmax}{{\mymacro{ argmax}}}
\DeclareMathOperator*{\argmin}{{\mymacro{ argmin}}}
\newcommand{\bigO}[1]{{\mymacro{ \mathcal{O}\left(#1\right)}}}
\newcommand{\activation}{\mymacro{\boldsymbol{\alpha}}}
\newcommand{\mlpActivation}{\mymacro{\boldsymbol{\beta}}}
\newcommand{\weakeq}{\mymacro{\ \cong\ }} 
\newcommand{\ind}[1]{\mathbbm{1} \left\{ #1 \right\}}
\title{On the Representational Capacity of Neural Language Models\\ with Chain-of-Thought Reasoning}
\author{
Franz Nowak\thanks{Equal contribution.}%
~\;~\;~
Anej Svete\footnotemark[1]%
~\;~\;~
Alexandra Butoi%
~\;~\;~Ryan Cotterell\\
\texttt{\{\href{mailto:franz.nowak@inf.ethz.ch}{franz.nowak}, \href{mailto:anej.svete@inf.ethz.ch}{anej.svete},
\href{mailto:alexandra.butoi@inf.ethz.ch}{alexandra.butoi}, \href{mailto:ryan.cotterell@inf.ethz.ch}{ryan.cotterell}\}@inf.ethz.ch}\\
    {%
\setlength{\fboxsep}{2.5pt}%
\setlength{\fboxrule}{2.5pt}%
\fcolorbox{white}{white}{
    \includegraphics[width=.15\linewidth]{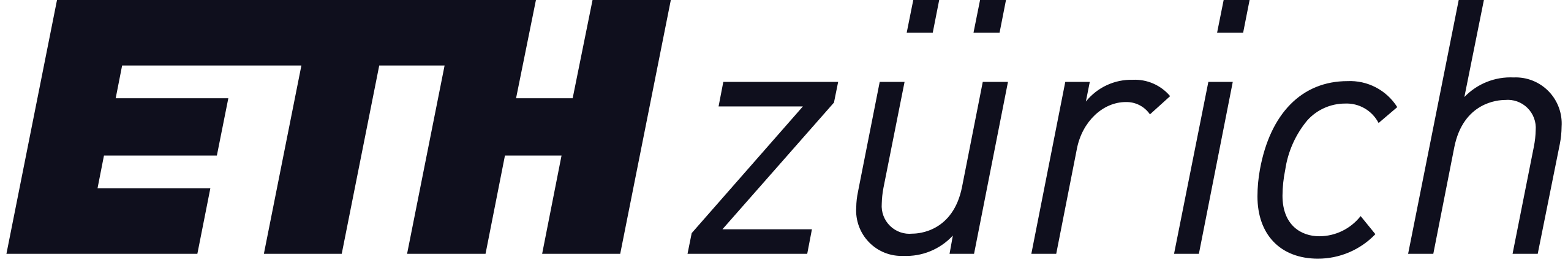}
}
}}
\begin{document}
\maketitle
\begin{abstract}
    The performance of modern language models (LMs) has been improved by \underline{c}hain-\underline{o}f-\underline{t}hought (CoT) reasoning, i.e., the process of generating intermediate results that guide the model towards a final answer.
    A possible explanation for this improvement is that CoT reasoning extends an LM's computational power, as RNNs and transformers with additional scratch space are known to be Turing complete.
    Comparing LMs to Turing machines, however, introduces a category error---Turing machines decide language membership, whereas LMs define \emph{distributions} over strings.
    To bridge this gap, we formalize CoT reasoning in a probabilistic setting.
    We present several results on the representational capacity of recurrent and transformer LMs with CoT reasoning, showing that they can represent the same family of distributions over strings as \emph{probabilistic} Turing machines.\looseness=-1

    \vspace{0.5em}
    {\includegraphics[width=1.36em,height=1.25em]{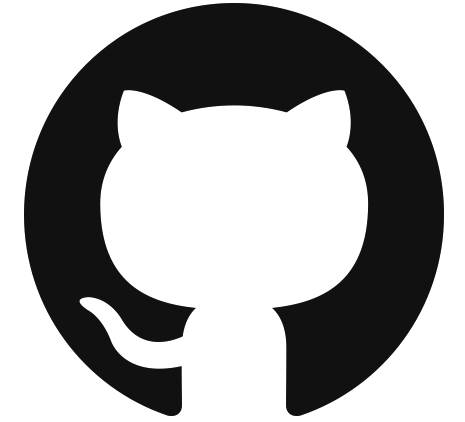}\hspace{2pt}\parbox{\dimexpr\linewidth-2\fboxsep-2\fboxrule}{\url{https://github.com/rycolab/cot-lms}}}
\end{abstract}

\section{Introduction}
\label{introduction}
Motivated by how humans solve complex problems by noting down intermediary results, \citet{wei2023chain} introduced \defn{chain-of-thought} (CoT) reasoning.\footnote{We use the more general term CoT \emph{reasoning} over the original term CoT \emph{prompting} as prompting is just one way to elicit CoT reasoning \citep{wang2024chainofthought}.}
CoT reasoning helps language models (LMs) solve reasoning tasks by allowing them to store intermediary results in a scratch space that is \emph{not} part of the final output.
The empirical success of CoT reasoning has made it an established component of modern neural LMs \citep{nye2021work,wei2022emergent,wei2023chain, suzgun2022challenging,kojima2023large,wies2023subtask}---seemingly overnight.
Such empirical success motivates a thorough understanding of the abilities of CoT-augmented LMs.
While existing theoretical treatments have shed light on some aspects of this framework, we are still far from a concrete theoretical understanding of CoT \citep{feng2023revealing}.\looseness=-1

\begin{figure}
    \centering
    \relsize{-1}
    \begin{tikzpicture}
        \draw[draw=none, fill=ETHYellow!80, rounded corners] (-3.75, -3.4) rectangle (3.75, 1.85);
        \draw[dashed, fill=ETHYellow!50, rounded corners] (-3.55, -3.2) rectangle (3.55, -0.75);
        \draw[dotted, thick, ETHGray!90] (0.1, -0.6) -- (0.1, 1);
        \draw[dotted, thick, ETHGray!90] (0.1, -3.) -- (0.1, -1.4);
        \draw[dotted, thick, ETHGray!90] (0.1, -4.2) -- (0.1, -3.5);
        \node[align=center] at (0, 1.425) {\textcolor{brown!80!black}{\underline{\textbf{Probabilistic unrestricted languages}}}};
        \node[align=center] at (-2.1, 0.7) {Unbounded precision \\ Transformers};
        \node[align=center] at (0.3, 0.5) {$\autorightleftharpoons{\cref{thm:cot-transformer-ptm}}{\cref{thm:ptm-cot-rnn-transformer}}$};
        \node[align=center] at (2.4, 0.2) {\textcolor{black}{Probabilistic} \\ \textcolor{black}{Turing machines}};
        \node[align=center] at (0.3, -0.1) {$\autoleftrightharpoons{\phantom{\cref{thm:ptm-cot-rnn-transformer}}}{\cref{thm:cot-rnn-ptm}}$};
        \node[align=center] at (-2.1, -0.2) {Unbounded precision \\ Elman RNNs};
        \node[align=center] at (0, -1.125) {\textcolor{ETHBrown!80!ETHYellow}{\underline{\textbf{Probabilistic regular languages}}}};
        \node[align=center] at (2.3, -2.15) {Probabilistic \\ Finite State \\ Automata};
        \node[align=center] at (-1.8, -1.8) {Fixed-precision \\ Elman RNNs};
        \node[align=center] at (0.3, -1.9) {$\autorightleftharpoons{\cref{thm:pfsa-rnn}}{\cref{thm:rnn-pfsa}}$};
        \node[align=center] at (-1.8, -2.6) {Fixed-precision \\ Transformers};
        \node[align=center] at (0.3, -2.5) {$\autoleftarrow{\phantom{\cref{thm:pfsa-transformer}}}{\cref{thm:pfsa-transformer}}$};
        \node[align=center] at (-1.7, -3.9) {Neural architecture \\ with CoT reasoning};
        \node[align=center] at (1.8, -3.9) {Corresponding model \\ of computation};

    \end{tikzpicture}
    \caption[figure1]{A schematic overview of our main results, showing which neural language models extended with CoT reasoning (left) correspond to which probabilistic models of computation (right). }
    \label{fig:figure-1}
    \vspace{-15pt}
\end{figure}

CoT reasoning can be interpreted as encouraging LMs to perform additional sequential computation steps while storing the result of that computation.
Naturally, this suggests that CoT reasoning should be formalized in a manner that exploits our understanding of well-known, stateful models of computation.
Theory of computation provides a natural toolbox for this avenue of exploration.
Indeed, the internal configurations of many neural language models have been previously linked to intermediate steps in neural LMs \citep{perez-etal-2021-attention,merrill2024the,feng2023revealing}.

A \defn{language model} is definitionally a distribution over strings from some fixed alphabet.%
\footnote{An alphabet is a finite, non-empty set.}
This definition is somewhat at odds with the way existing work has investigated the representational capacity of CoT-augmented LMs.
Firstly, most results in this area concern unweighted language \emph{recognition}, which is different from the inherently probabilistic task of language modeling.
Secondly, Turing completeness results such as
\citeposs{perez-etal-2021-attention} construction---a seminal result showing the Turing completeness of transformer LMs--- usually require \emph{additional symbols} not present in the original alphabet to correctly simulate a Turing machine.
Because the output alphabets of the transformer LM and the Turing machine it simulates fundamentally differ, it is difficult to discuss their \emph{equivalence}.
This motivates a more fine-grained treatment of the representational capacity of neural LMs in which:
\begin{enumerate*}[label=\textit{(\arabic*)}]
    \item LMs are treated as \emph{probabilistic} models of computation, i.e., they assign weights to strings rather than deciding language membership, and
    \item language model equivalence is defined on the output string level, and the analysis takes into account the additional information required to be encoded in additional outputs to achieve the equivalence.
\end{enumerate*}\looseness=-1

We address the first point by analyzing CoT-augmented LMs using \emph{probabilistic} models of computation, which provide a convenient way of describing families of distributions over strings with standard models of sequential reasoning.
To address the second point, we define a new type of relationship between language models, which we call \defn{regular reducibility}.
Intuitively, an LM is regularly reducible into another one if the strings generated by the latter are sufficiently simple transformations of those generated by the former.
An instance of such a transformation is the deletion of the intermediary computation steps of CoT reasoning.
Formalizing CoT reasoning in this framework, we find that it increases the computational power in both RNN and transformer LMs.
Concretely, we find that CoT-augmented constant-precision RNN LMs are equivalent to probabilistic finite-state automata (\pfsaAcr{}s).
This is in contrast to the constant-precision RNN LMs \emph{without} CoT reasoning, which are equivalent to \emph{deterministic} \pfsaAcr{}s \citep{svete-cotterell-2023-recurrent}.
Additionally, we show how Turing-complete linear-precision RNN LMs can be thought of as performing CoT reasoning.
Finally, we show that both linearly bounded precision RNN LMs and logarithmically bounded precision decoder-only transformer LMs with CoT-reasoning can simulate any probabilistic Turing machine.
Taken together, our results frame CoT reasoning in pure language modeling terms and describe its intuitive and formal connections to probabilistic models of computation.

\section{Preliminaries}
\label{sec:preliminaries}
In our analysis, we assume rational (rather than real) arithmetic for all definitions and computations.\looseness=-1

\subsection{Language Models}\label{language-models}
A \defn{formal language} $\lang$ is a subset of the Kleene closure $\kleene{\alphabet}$ of some alphabet $\alphabet$.
We call an element $\str$ of $\kleene{\alphabet}$ a \defn{string}.
We denote the empty string by $\eps$, and we assume that $\eps \notin \alphabet$.
A \defn{discrete semimeasure} over $\kleene{\alphabet}$ is a function $\measure \colon \kleene{\alphabet} \rightarrow [0,1]$ such that $\sum_{\str \in \kleene{\alphabet}} \measure \left(\str \right) \leq 1$ \cite{bawens2013, icard2020calibratinggm}.
If the semimeasure of all strings sums to one, i.e., $\sum_{\str\in \kleene{\alphabet}}\measure\left(\str\right) = 1$, then $\measure$ is called a \defn{probability measure}.%
\footnote{This definition differs from \citeposs{li2008kolmogorov} who instead define semimeasures over \emph{prefix strings}.}
Finally, for any alphabet $\alphabet$, we define the set $\epsalphabet \defeq \alphabet \cup \set{\eps}$.

\begin{definition}\label{def:lm}
    A \defn{language model} (LM) $\pLM$ is a semimeasure over $\kleene{\alphabet}$.
    If $\pLM$ is a probability measure, it is called a \defn{tight} language model.
\end{definition}
\begin{definition}\label{def:lm-equivalence}
    Two LMs $\pLM$ and $\qLM$ are \defn{weakly equivalent} if $\pLM(\str) = \qLM(\str)$ for all $\str\in\kleene{\alphabet}$.
\end{definition}
Most modern LMs are autoregressive, i.e., they define $\pLM\left(\str\right)$ through conditional distributions of the next symbol given the string produced so far and the probability of ending the string, i.e.,
\begin{equation} \label{eq:autoregressive-lm}
    \pLM\left(\str\right) \defeq \pLM\left(\eos\mid \str\right) \prod_{\tstep = 1}^{|\str|} \pLM\left(\sym_\tstep\mid\str_{<\tstep}\right).
\end{equation}
Here, $\eos$ denotes the special \underline{e}nd-\underline{o}f-\underline{s}tring symbol, which specifies that the generation of a string has halted.
The inclusion of $\eos$ allows (but does not guarantee) that a language model $\pLM$, autoregressively, constitutes a probability measure over $\kleene{\alphabet}$ \citep{du-etal-2023-measure}; a model defined as in \cref{eq:autoregressive-lm} may sum to \emph{less than} 1 in a pathological case.
For any alphabet $\alphabet$, we define the set $\eosalphabet \defeq \alphabet \cup \set{\eos}$.
The conditional probability distributions are usually defined based on \emph{vectorial} representations of $\strlt$ computed by a \defn{language encoder}
$\enc\colon \kleene{\alphabet} \to \R^\hiddDim$ \citep{chan-etal-homotopies}.\looseness=-1
\begin{definition} \label{def:repr-lm}
A \defn{representation-based} LM is any LM that can be written as an autoregressive language model (\cref{eq:autoregressive-lm}) where the conditional distributions over the next symbol $\eossym_\tstep\in\eosalphabet$ are given by\looseness=-1
\begin{equation}\label{eq:repr-lm}
        \pLNSM\left(\eossym_\tstep \mid \strlt\right) \defeq \projfuncEosalphabetminus(\outMtx \; \enc\left(\strlt\right))_{\eossym_\tstep},
\end{equation}
where $\enc\colon \kleene{\alphabet} \to \R^\hiddDim$ is a language encoder, $\outMtx \in \R^{|\eosalphabet| \times \hiddDim}$ is an output matrix, and $\projfuncEosalphabetminus$ is a projection function.\footnote{A common choice for $\projfuncEosalphabetminus$ is the softmax. Since our analyses use rational arithmetic, we instead opt for sparsemax \citep{sparsemax}. However, all of our results can be extended to the use of the more common softmax function through the use of log activations and the extended real numbers $\R\cup\{-\infty, \infty\}$ \citep{svete-etal-2024-lower}.}
\end{definition}
\subsection{Regular Language Models} \label{sec:wfsas}
Probabilistic finite-state automata are a well-understood probabilistic computational model.\looseness=-1
\begin{definition}\label{def:stochastic-wfsa}
    A \defn{probabilistic finite-state automaton} (\pfsaAcr{}) is a tuple $\wfsatuple$ where $\alphabet$ is an alphabet, $\states$ is a finite set of states, $\trans \subseteq \states \times \alphabet \times \Qnonnegative \times \states$ is a finite set of weighted transitions
    where we write transitions  $\left(\stateq, \sym, w, \stateq^\prime\right) \in \trans$ as $\edge{\stateq}{\sym}{w}{\stateq^\prime}$,
    and $\initf, \finalf\colon \states \rightarrow \Qnonnegative$ are functions that assign each state its initial and final weight, respectively.
    Moreover, for all states $\stateq \in \states$, $\trans, \initf$ and $\finalf$ satisfy $\sum_{\stateq \in \states} \initf\left(\stateq\right) = 1$, and $\sum\limits_{\edge{\stateq}{\sym}{w}{\stateq^\prime} \in \trans} w + \finalf\left(\stateq\right) = 1$.
\end{definition}

Next, we will define some basic concepts related to PFSAs.
A \pfsaAcr{} $\automaton = \wfsatuple$ is \defn{deterministic} if $|\set{\stateq \mid \initfFun{\stateq} > 0}| = 1$ and, for every $\stateq \in \states, \sym \in \alphabet$, there is at most one $\stateq^\prime \in \states$ such that $\edge{\stateq}{\sym}{w}{\stateq^\prime} \in \trans$ with $w > 0$.%
\footnote{In this paper, we do \emph{not} distinguish between a transition for a given symbol with weight $w=0$ and the absence of a transition for that symbol.
That is, we assume there always exists a transition $\edge{\stateq}{\sym}{w}{\stateq^\prime} \in \trans$ for any $\stateq, \stateq^\prime \in \states$ and $\sym \in \alphabet$, albeit possibly with $w = 0$.
Such a choice turns out to be useful in our technical exposition.
}
Any state $\stateq$ where $\initfFun{\stateq}>0$ is called an \defn{initial state}, and if $\finalfFun{\stateq} > 0$, it is called a \defn{final state}.
A \defn{path} $\apath$ of length $\pathlen$ is a sequence of subsequent transitions in $\automaton$, denoted as\looseness=-1
\begin{equation}
    \!\edge{\stateq_1}{\sym_1}{w_1}{\edge{\stateq_2}{\sym_2}{w_2}{\stateq_3} \!\cdots\! \edge{\stateq_{\pathlen}}{\sym_{\pathlen}}{w_{\pathlen}}{\stateq_{\pathlen + 1}}}.
\end{equation}
The \defn{yield} of a path is $\yield\left(\apath\right)\defeq \sym_1 \cdots \sym_{\pathlen}$.
The \defn{prefix weight} $\prefixweight$ of a path $\apath$ is the product of the transition and initial weights, whereas the \defn{weight} of a path additionally has the final weight multiplied in.
In symbols, this means

\noindent\begin{minipage}{0.49\linewidth}
    \begin{equation} \label{eq:prefix-path-weight}
        \prefixweight(\apath)\defeq \prod_{\idx = 0}^\pathlen w_\idx,
    \end{equation}
\end{minipage}
\begin{minipage}{0.49\linewidth}
    \begin{equation}
        \weight(\apath)\defeq \prod_{\idx = 0}^{\pathlen+1} w_\idx,
    \end{equation}
\end{minipage}
with $w_0 \defeq \initf(\stateq_1)$ and $w_{\pathlen+1} \defeq \finalf(\stateq_{\pathlen+1})$.
We write $\paths(\automaton)$ for the set of all paths in $\automaton$ and we write $\paths(\automaton, \str)$ for the set of all paths in $\automaton$ with yield $\str$.
The sum of weights of all paths that yield a certain string $\str\in\kleene{\alphabet}$ is called the \defn{stringsum}, which we write as\looseness=-1
\begin{equation} \label{eq:pfsa-summation}
    \automaton \left( \str \right) \defeq \sum_{\apath \in \paths\left( \automaton, \str \right) }  \weight \left( \apath \right).
\end{equation}
The stringsum gives the probability of the string $\str$.
This way, \pfsaAcr{}s induce a particularly well-understood family of LMs.
\begin{definition}
    An LM $\pLM$ is a \defn{regular} LM if there exists a \pfsaAcr{} $\automaton$ whose induced language model $\pLMA$ is weakly equivalent to $\pLM$.\looseness=-1
\end{definition}

\paragraph{PFSAs and non-determinism.}
Although \pfsaAcr{}s share many properties with unweighted (boolean-weighted) finite-state automata, one important difference relates to determinization.
In the unweighted case, the class of deterministic and non-deterministic FSAs are equivalent, i.e., any non-deterministic FSA has an equivalent deterministic FSA that accepts the same language.
This result, however, does not hold for \pfsaAcr{}s: There exist \pfsaAcr{}s that admit no deterministic equivalent \citep{mohri-1997-finite, Buchsbaum1998}, meaning that non-deterministic \pfsaAcr{}s are strictly more expressive than deterministic ones.

\paragraph{Computing string probabilities under non-determinism.}
Autoregressive LMs (cf. \cref{eq:autoregressive-lm}) and \pfsaAcr{}s fall under the larger framework of models that specify probability distributions \emph{implicitly} \citep{icard2020calibratinggm}.\footnote{Other examples of such models include hidden Markov models, which are equivalent to \pfsaAcr{}s \citep{icard2020calibratinggm}.}
However, in autoregressive neural LMs with fixed precision, only one sequence of computational actions can yield a particular string, meaning that they can only model deterministic weighted regular languages \citep{svete-cotterell-2023-recurrent}. 
On the other hand, non-deterministic \pfsaAcr{}s compute string probabilities by \emph{additively}
combining the probabilities of all paths that yield the same string, and hence lie outside the grasp of such neural models.%
\footnote{Note that RNN LMs with linearly bounded precision can simulate non-deterministic \pfsaAcr{}s in real-time by encoding a probability distribution over the \pfsaAcr{}'s current state in the hidden state of the RNN \citep{svete-etal-2024-lower}.}
As we show later, CoT reasoning provides a principled way to overcome this limitation and allow fixed-precision neural models to simulate non-deterministic automata.\looseness=-1

\subsection{Regular Functions} \label{sec:regfn}
We now define a finite-state machine that, in addition to scanning, also \emph{outputs} strings.\looseness=-1
\begin{definition}\label{def:transducer}
    A \defn{finite-state transducer} ($\fst$) is a $6$-tuple $\transducer = \fsttuple$, where $\states$ is a finite set of \defn{states}, $\alphabet$ is an alphabet of \defn{input symbols}, $\outalphabet$ is an alphabet of \defn{output symbols}, $\initial, \final \subseteq \states$ are sets of initial and final states, respectively, and $\trans \subseteq \states \times \epsalphabet \times \outalphabet_\eps \times \states $ is a set of transitions.
\end{definition}
Similar to \pfsaAcr transitions, we write FST transitions of the form $\left(\stateq,\symx,\symy,\stateq^\prime\right)\in\trans$ as $\edgenoweight{\stateq}{\symx\colon\symy}{\stateq^\prime}$.
Finally, we give the following definition which we will use to formalize CoT reasoning.\looseness=-1
\begin{definition}
    A \defn{regular relation} is a relation $\regfn \subseteq \kleene{\alphabet} \times \kleene{\outalphabet}$ that is representable by an \fst.
    If $\regfn$ is a (partial) function, it is called a \defn{regular function}.\looseness=-1
\end{definition}
Like all relations, regular relations  $\regfn \subseteq\kleene{\alphabet} \times \kleene{\outalphabet}$ have a trivial inverse $\inv{\regfn} \subseteq\kleene{\outalphabet} \times \kleene{\alphabet}$.

\subsection{Turing Machines}\label{sec:turing-machines}
We consider the following definition of a probabilistic Turing machine.\looseness=-1
\begin{definition}\label{def:ptm}
    A \defn{probabilistic Turing machine} (PTM) is a two-tape machine with a working tape and an output tape specified by the $6$-tuple $\tm = \ptmtuple$, where $\states$ is a finite set of states,
    $\alphabet$ is an output alphabet, $\tapealphabet$ is a tape alphabet including the blank symbol $\blanksym$,    $\qinit,\qfinal \in \states$ are the initial and final states, and $\trans\subseteq \states \times \tapealphabet \times \epsalphabet \times \tmops \times \Q_{\geq0} \times \states \times \tapealphabet$ is a rationally weighted transition relation. 
    $\tmleft$ and $\tmright$ signify the PTM head moving left ($\tmleft$) or right ($\tmright$) on the tape after a transition.
    We write transitions $\left(\stateq, \tapesym, \sym, \tmdir, w, \stateq^\prime, \tapesym^\prime\right)\in\trans$ as $(\stateq,\tapesym) \xrightarrow{\sym, \tmdir/w} (\stateq^\prime, \tapesym^\prime)$.
    Moreover, we require that for any given $\stateq\in\states, \tapesym\in\tapealphabet$, the weights satisfy $\sum_{\substack{(\stateq,\tapesym) \xrightarrow{\sym, \tmdir/w} (\stateq^\prime, \tapesym^\prime)\in\trans}}w = 1$.\looseness=-1
\end{definition} 
A transition should be interpreted as follows: When $\tm$ in state $\stateq$, reads $\tapesym$ on the working tape, writes $\sym$ on the output tape, writes $\tapesym^\prime$ on the working tape, move the head in direction $\tmdir$ on the working tape.
Each computation step randomly selects a transition according to its weight $w$.\looseness=-1%
\footnote{For more details, see \cref{app:def-tm}.}
    
This definition of Turing machines straightforwardly induces a semimeasure over strings \citep[Remark 2.2]{nowak-etal-2023-representational}.
It is sometimes easier to prove claims about Turing machines through another equivalent machine \cite{hopcroft01}.
The probabilistic two-stack pushdown automaton is one example \citep{nowak-etal-2023-representational}.
See \cref{sec:twopda} for an overview.

\subsection{Recurrent Neural Language Models}\label{sec:rnnlms}
Recurrent neural LMs are LMs whose conditional probabilities are given by an RNN.%
\footnote{Throughout this paper, we will focus on Elman RNNs \citep{Elman1990} as they are the easiest to analyze and a special case of more common networks, e.g., those based on long short-term memory \citep[LSTM;][]{10.1162/neco.1997.9.8.1735} and gated recurrent units \citep[GRUs;][]{cho-etal-2014-properties}.}
\begin{definition} \label{def:elman-rnn}
    An \defn{Elman RNN} $\rnn = \elmanrnntuple$ is an RNN with the following hidden state recurrence:
    \begin{subequations}
        \begin{align}
            \hiddStateZero & = \initstate  \quad\quad\quad\quad\quad\quad\quad\quad\quad\quad\,\,\,{\color{gray}(t=0)},\label{eq:elman-initialization}               \\
            \hiddStatet    & = \activation\left(\recMtx \vhtminus + \inMtx \inEmbedSymt + \biasVech \right) \,\, {\color{gray}(t>0)},\label{eq:elman-update-rule}
        \end{align}
    \end{subequations}
    where $\hiddStatet \in \Q^\hiddDim$ is the state vector%
    \footnote{Throughout this paper all vectors are column vectors.\looseness=-1}
    at time step $\tstep$, $\initstate \in \Q^\hiddDim $ is an initialization parameter, $\symt\in\alphabet$ is the input symbol at time step $\tstep$, $\inEmbedding\colon \alphabet \to \Q^\embedDim$ is a symbol representation function, $\recMtx \in \Q^{\hiddDim \times \hiddDim}$ and $\inMtx \in \Q^{\hiddDim \times \embedDim}$ are parameter matrices, $\biasVech \in \Q^{\hiddDim}$ is a bias vector, and $\activation\colon\Q^\hiddDim\to\Q^\hiddDim$ is an element-wise, non-linear activation function.%
    \footnote{Common examples of $\activation$ include the Heaviside function $\heaviside(x) \defeq \ind{x > 0}$, the sigmoid function $\sigmoid(x) \defeq \frac{1}{1 + \exp(-x)}$, and the $\ReLU\left(x\right) \defeq \max\left(0, x\right)$.}
\end{definition}
Because $\hiddStatet$ hides the symbols consumed by the Elman RNN, we also use the evocative notation $\hiddState(\str)$ to denote the result of the application of \Cref{eq:elman-update-rule} over the string $\str = \sym_1 \cdots \sym_t$.
The notation $\hiddState(\str)$ makes it clear that an RNN LM implicitly defines a language encoder. 
\begin{definition} \label{def:elman-lm}
A representation-based LM is called an \defn{Elman LM} if
its representation function is defined by the hidden state of an Elman RNN $\enc\left(\strlt\right) \defeq \hiddState\left(\strlt\right)$.
\end{definition}
The most common choice for the projection function $\projfuncEosalphabetminus$ is the softmax, whose limitation is that it implies the LM has full support, i.e., an Elman LM with a softmax projection assigns positive probability to all strings in $\kleene{\alphabet}$.
To construct non-full-support Elman LMs, we instead use the \defn{sparsemax} \citep{sparsemax}:
\begin{equation}\label{eq:spmax}
    \sparsemaxfunc{\vx}{} \defeq \argmin_{\rvz\in \Simplexnminus} \norm{\rvz - \vx}^2_2.
\end{equation}
In contrast to the softmax function, sparsemax is the \defn{identity} on $\Simplexnminus$, i.e., we have $\sparsemaxfunc{\vx}{} = \vx$ for $\vx \in \Simplexnminus$.

\subsection{Neural Networks and Numerical Precision} \label{sec:precision}
All implementations of LMs on modern hardware require representations to be fixed-precision floating-point numbers or arbitrary-precision (rational) numbers. 
In the case of arbitrary precision, an important consideration in processing strings $\str \in \kleene{\alphabet}$ is the number of bits required to store the representations and how the number of bits scales with the length of the string, $|\str|$.
This motivates the following definition of precision.
\begin{definition}
    The \defn{precision} $\precisionFun{\str}$ of a representation-based neural LM is the number of bits required to represent the entries of $\enc\left(\str\right)$:
    \begin{equation}
        \precisionFun{\str} \defeq \max_{\idxd\in[\hiddDim]}\min_{\substack{p,q\in\N,\\ \frac{p}{q}=\enc\left(\str\right)_{\idxd}}} \lceil\log_2 p\rceil + \lceil\log_2 q\rceil.
    \end{equation}
    We say that a representation-based LM is of
    \begin{itemize}[itemsep=0pt]
        \item \defn{constant precision} if $\precisionFun{\str} = \bigO{1}$, i.e., if $\precisionFun{\str} \leq C$ for all $\str \in \kleene{\alphabet}$ and some $C \in \R$,
        \item \defn{logarithmically bounded precision} if $\precisionFun{\str} = \bigO{\log|\str|}$, i.e., if there exist $\strlen_0 \in\N$ and $C\in\R$ such that for all $\str \in \kleene{\alphabet}$ with $|\str| \geq \strlen_0$, $\precisionFun{\str} \leq C\log_2{|\str|}$,
        \item \defn{linearly bounded precision} if $\precisionFun{\str} = \bigO{|\str|}$, i.e., there exist $\strlen_0 \in\N$ and $C\in\R$ such that for all $\str \in \kleene{\alphabet}$ with $|\str| \geq \strlen_0$, $\precisionFun{\str} \leq C|\str|$, and
        \item \defn{unbounded precision} if $\precisionFun{\str}$ cannot be bounded by a function of $|\str|$.
    \end{itemize}
\end{definition}
The constructions in the existing literature range from constant to unbounded precision.
The RNNs considered by \citeposs{svete-cotterell-2023-recurrent} encoding of deterministic \pfsaAcr{}s, for example, results in an RNN that is of constant precision.
\citet{weiss-etal-2018-practical}, \citet{merrill-2019-sequential} and \citet{merrill-etal-2020-formal} consider models with logarithmically bounded precision.
In contrast, articles treating the Turing completeness of neural networks \citep{siegelmann-sontag-1992,nowak-etal-2023-representational} require unbounded precision to be able to represent the fact that a Turing machine may fail to halt.
Naturally, our constructions of Turing complete LMs will also require unbounded precision.

\subsection{Transformer Language Models} \label{sec:transformers}

Transformer LMs compute the conditional distributions $\pLNSM\left(\eossym_\tstep \mid \strlt\right)$ by means of self-attention.
Because transformer LMs necessitate the precise introduction of multiple sub-components, we postpone the full introduction to \cref{app:transformers} and only review the basics here.
We borrow much of the notation and formalization from \citet{svete-etal-2024-transformers}.

A transformer is a composition of multiple transformer \defn{layers}, each of which implements the \defn{attention mechanism}.
The attention mechanism works as follows.
It takes a \defn{\underline{q}uery} vector $\vq \in \R^{\hiddDim}$ and two matrices: The matrix $\mK \in \R^{N \times \hiddDim}$ of \defn{\underline{k}eys} and the matrix $\mV\in \R^{N \times \hiddDim}$ of \defn{\underline{v}alues} and computes a weighted average of the value vectors based on the compatibilities of the key vectors to the query vector, as scored by a scoring function $\tfscorefun$.

Attention weights are computed by normalizing the scores $\tfscorefun\left(\vq, \vk_1\right), \dots ,\tfscorefun\left(\vq, \vk_\tstep\right)$.
The choice of the normalization function has concrete implications on representational capacity \citep{hao-etal-2022-formal,svete-etal-2024-transformers}.
We focus on the \defn{hard attention} projection function.
\begin{definition}
    \defn{Hard attention} is computed with the $\hardmaxAvg$ projection function
    \begin{equation}
        \hardmaxAvg\left(\vx\right)_\idxd \defeq \begin{cases}
            \frac{1}{m} & \ifcondition \idxd\in \argmax\left(\vx\right) \\
            0           & \otherwisecondition,
        \end{cases}
    \end{equation}
    for $\idxd \in \NTo{\hiddDim}$, where $\vx \in \R^\hiddDim$ and $m \defeq |\argmax\left(\vx\right)|$ is the cardinality of the argmax set.
\end{definition}

A transformer layer uses the attention mechanism followed by
a position-wise MLP\footnote{For more details, see \cref{app:transformers}} to compute augmented representations $\vz_\tstep$ of the input representations $\mX_\tstep = \left(\vx_1^\top; \cdots; \vx^\top_\tstep\right) \in \R^{\tstep\times\hiddDim}$.
The query $\vq_\tstep$, the keys $\mK_\tstep$, and values $\mV_\tstep$ are all transformations of the input representations $\mX_\tstep$.
Initially, $\mX_\tstep$ are computed by some static representation function of the symbols and their positions.
Multiple transformer layers are stacked into a transformer, which computes the (deep) contextual representations of all symbols in the string.
The contextual representation of the final symbol in the string is then used to define the representation function $\enc$ of a transformer LM.

\section{CoT Reasoning and Weak Equivalence} \label{sec:cot-equivalence}
In this section, we argue that CoT reasoning can be seen as a way of comparing the input language of a formal automaton with the output language of an autoregressive LM.
We then introduce a formalization of CoT reasoning that allows us to characterize the representational capabilities of CoT-augmented LMs in terms of their weak equivalence to well-studied weighted automata.

\subsection{Equivalence and Augmented Alphabets}
Autoregressive LMs generate strings by outputting individual symbols $\eossym \in \eosalphabet$ one at a time until $\eos$ is generated.
The resulting string $\str$ is the output of the LM and the outputs generated in this manner implicitly define the distribution of the LM.
Models studied in existing work on the representational capacity of neural LMs, however, often do not (only) emit symbols from $\alphabet$.
Rather, they also emit symbols from some \emph{larger} alphabet $\outalphabet$ that includes symbols that encode additional information required to simulate a given formal model of computational.
For example, the symbols generated by \citeposs{perez-etal-2021-attention} transformer contain both the output symbols as well as (the changes to) the configuration of the Turing machine.\looseness=-1

It is not difficult to see how generating strings from an augmented alphabet can be seen as a form of CoT reasoning.
The outputs \emph{not} intended to be a part of the final output can simply be seen as the result of the additional computational steps performed by the CoT-augmented LM.
These outputs are later removed in post-processing and only the string with symbols from our target alphabet $\alphabet$ remains.
In this sense, \citeposs{perez-etal-2021-attention} construction works with a form of CoT reasoning without explicitly mentioning it.
This connection was made explicit in concurrent work by \citet{merrill2024the}.\looseness=-1

Outputting additional information is not regarded as an issue when the task is to simulate (unweighted) Turing machine runs on a given \emph{input string} because what matters, in that case, is only whether the model accepts or rejects the input \citep[\textit{inter alia}]{siegelmann-sontag-1992,perez-etal-2021-attention}.
However, when considering the LM induced by a PTM, we do care about the alphabet the distribution over strings is over.
Thus, given an LM over $\kleene{\alphabet}$, it follows that neural LMs outputting additional information cannot define the same distribution, i.e., they cannot be weakly equivalent.
Nevertheless, the ability to output additional information while generating a string seems natural and useful; it is the heart of CoT reasoning.
And, indeed, models that can only output symbols that are part of the final string are restricted to doing real-time computation \citep{weiss-etal-2018-practical,nowak-etal-2023-representational,svete-etal-2024-lower}.\looseness=-1

\subsection{Regular Reducibility} \label{sec:rr}
We now formalize the notion of CoT reasoning through the following definition which allows a model to output additional information \emph{not} considered part of the final output:
\begin{definition} \label{def:reg-reduce}
    An LM $\pLM$ over $\outalphabet$ is \defn{regularly reducible} to a LM $\qLM$ over $\alphabet$ if there exists a regular function
    $\regfn\colon \kleene{\outalphabet}\to\kleene{\alphabet}$ such that $\qLM \circ \regfn$ is weakly equivalent to $\pLM$.\looseness=-1
\end{definition}
We can use the function $\regfn$ to map the strings sampled from an LM, ones additionally encoding the intermediary steps of computation, into the final output, i.e., strings $\str \in \kleene{\alphabet}$.
\begin{definition}
We say an alphabet $\outalphabet$ satisfies the \defn{$\alphabet$-augmentation condition} for an alphabet $\alphabet$ if $\outalphabet \subseteq \eosalphabet_\eps \times \tapealphabet_\eps \setminus \set{\left(\eps, \eps\right)}$ for some alphabet $\tapealphabet$.
\end{definition}
This means if we care about strings from $\kleene{\alphabet}$ and we have an LM over the closure of a $\alphabet$-augmented alphabet $\outalphabet$, then each symbol the LM outputs encodes is either an output symbol from $\alphabet$, an intermediate computation symbol from $\tapealphabet$, or both.\footnote{While in practice the CoT and the final output come from the same alphabet, we use separate ones for ease of notation and analysis. This is without loss of generality; see \cref{app:alphabets}.}
The computation symbols can then easily be removed by applying the per-symbol projection function $\regfn\left(\left(\sym, \stacksym\right)\right) \defeq \sym$, lifted to a non-length-increasing homomorphism over strings $\regfn\colon \kleene{\outalphabet}\to\kleene{\alphabet}$ to the LM's output element-wise.%
\footnote{A non-length-increasing homomorphism is a function $h\colon\kleene{\outalphabet}\to\kleene{\alphabet}$ where $h(\eps)=\eps$, $h(\symx)=\symy$ for $\symx\in\outalphabet$ and $\symy\in\epsalphabet$, and that further satisfies $h(\symx_1\symx_2\cdots\symx_\strlen) = h(\symx_1)h(\symx_2)\cdots h(\symx_\strlen)$.}
This function can be implemented by a simple single-state transducer (making it a regular function):\looseness=-1
\begin{center}
    \begin{tikzpicture}[node distance = 15mm]
        \footnotesize
        \node[state, initial, accepting] (q0) [] { $0$ };
        \draw[-{Latex[length=2mm]}]
        (q0) edge[loop right] node{ $\outalphabet \ni \left(\sym, \stacksym\right) \colon \sym \in \alphabet$ } (q0) ;
    \end{tikzpicture}
\end{center}

\subsection{Properties of Regular Reducibility} \label{sec:rr-properties}
To motivate our formalization, in this section, we show several properties of regular reducibility that will allow us to reason about the representational capacity of CoT-augmented LMs later in \cref{sec:cot-lms}.
Particularly, as we will see, there exists a strong connection between regular reducibility and the addition of \emph{non-deterministic} choices in the model.
To exemplify this concretely, the next theorem shows that regular reducibility allows us to model non-deterministic \pfsaAcr{}s with deterministic ones, which, as discussed in \cref{sec:wfsas}, cannot be done with just deterministic \pfsaAcr{}s in general.
\begin{restatable}{reTheorem}{rrDPFSA} \label{thm:rr-pfsa}
    Let $\wfsa = \left(\alphabet, \states, \trans, \initf, \finalf\right)$ be a \pfsaAcr.
    Then, there exists a \emph{deterministic} \pfsaAcr $\wfsa^\prime$ over the alphabet $\alphabet \times \states$ and with the state space $\alphabet \times \states$ that is regularly reducible to $\wfsa$.
\end{restatable}
\begin{proof}
    See \cref{sec:rr-proofs}.
\end{proof}

A similar connection exists for \ppdaAcr{}s (see \cref{sec:rr-proofs}).
\cref{thm:rr-pfsa} captures a crucial property of regular reducibility: It allows us to \emph{simulate} non-determinism with a deterministic device.
As exemplified in the cases of regular LMs, this can concretely increase the representational capacity of a model class, as in the case of \cref{thm:rr-pfsa}.
On the other hand, regular reducibility never \emph{decreases} the representational capacity of a model class, since $\regfn$ can always be set to the identity function.
Due to the close connection between neural LMs and determinism \citep{svete-cotterell-2023-recurrent}, one could hope that a similar increase in capabilities could be achieved for neural LMs augmented with CoT reasoning as well.
In \cref{sec:cot-equivalence}, we show that this is indeed the case.\looseness=-1

\subsection{Regular Reducibility and CoT Reasoning} \label{sec:cot-lms}

After introducing the notion of regular reducibility and presenting some of its core properties, we now use it to define CoT-augmented LMs.\looseness=-1
\begin{definition}\label{def:cot-lm}
Given alphabets $\alphabet$ and $\outalphabet$ where $\outalphabet$ satisfies the $\alphabet$-augmentation condition, 
    an LM $\pLMcot$ over $\kleene{\alphabet}$ is called a \defn{CoT-augmented LM} induced by LM $\pLM$ over $\kleene{\outalphabet}$ if $\pLM$ is regularly reducible from $\pLMcot$.
    That is, there exists a regular function $\regfn\colon\kleene{\outalphabet}\to\kleene{\alphabet}$ such that $\pLMcot \defeq \pLM \circ \inv{\regfn}.$
\end{definition}
\cref{def:cot-lm} defines a CoT-augmented LM $\pLMcot$ through the LM $\pLM$ that generates strings from $\kleene{\outalphabet}$ and then applies the regular function $\regfn$ to the generated strings to obtain strings from $\kleene{\alphabet}$.
This allows $\pLM$ to output additional information while still defining a probability distribution over strings from $\kleene{\alphabet}$ (see \Cref{fig:example}).
The weight of a string $\str\in\kleene{\alphabet}$ under $\pLMcot$ is then computed as the sum over the weights of all strings in the preimage of $\regfn$, analogously to \cref{eq:pfsa-summation}:
\begin{equation}
     \pLMcot(\str) = \sum_{\strx\in \inv{\regfn}\left(\str\right)} \pLM(\strx). \label{eq:cot-summation}
\end{equation}
\begin{figure}
    \centering
    \begin{tikzpicture}[node distance = 5mm]

        \node at (-2, 0) {\footnotesize An LM $\pLM$ generates strings from $\kleene{\outalphabet} \subseteq \kleene{\left(\alphabet \times \states\right)}$:};

        \node at (-2, -0.65) (lmcot) {$\strx_1 = \textcolor{ETHGreen}{\stateq_0}, \textcolor{ETHPurple}{\sym_1}, \textcolor{ETHGreen}{\stateq_1}, \textcolor{ETHGreen}{\stateq_2}, \ldots, \textcolor{ETHGreen}{\stateq_{N}}, \textcolor{ETHPurple}{\sym_{\strlen - 1}}, \textcolor{ETHPurple}{\sym_\strlen} \sim \pLM$};
        \node at (-2, -1.3) (lmcot) {$\strx_2 = \textcolor{ETHGreen}{\stateq^\prime_0}, \textcolor{ETHGreen}{\stateq^\prime_1}, \textcolor{ETHPurple}{\sym_1}, \textcolor{ETHGreen}{\stateq_2}, \ldots, \textcolor{ETHPurple}{\sym_\strlen}, \textcolor{ETHGreen}{\stateq^\prime_{M - 1}}, \textcolor{ETHGreen}{\stateq^\prime_{M}} \sim \pLM$};

        \node at (-2, -2) {\footnotesize The function $\regfn$ removes the CoT steps:};

        \node at (-2, -2.5) (lm) {$\regfn\left(\strx_1\right) = \regfn\left(\strx_2\right) = \textcolor{ETHPurple}{\sym}_1, \textcolor{ETHPurple}{\sym}_2, \ldots, \textcolor{ETHPurple}{\sym}_{\strlen - 1}, \textcolor{ETHPurple}{\sym}_\strlen \sim \pLMcot$};
    \end{tikzpicture}
    \caption{
        An LM $\pLM$ can generate strings from a state-augmented alphabet $\outalphabet$.
        The intermediate outputs $\stateq_i$ allow it to condition on the previous states of the computation.
        By \emph{post-hoc} removing the intermediate outputs with $\regfn$, we obtain a CoT-augmented LM $\pLMcot$ over $\kleene{\alphabet}$ that generates more human-readable outputs.
    } \label{fig:example}
\end{figure}
This is the approach we take in \cref{sec:tm-turing-complete}.

\section{CoT and Representational Capacity} \label{sec:main-results}
We now connect the notions introduced in \cref{sec:cot-equivalence} to the representational capacity of neural LMs.
We begin in \cref{sec:nondet-minsky} by showing how CoT reasoning endows autoregressive LMs with \emph{non-determinism} by using scratch space to keep track of the current branch of execution.
We then use similar principles to emulate probabilistic Turing machines with Elman RNNs (\cref{sec:rnns-cot}) and transformers (\cref{sec:tm-turing-complete}).

\paragraph{Neural LMs and formal models of computation.}
Probabilistic models of computation are often presented as autoregressive \emph{generators} of strings that implicitly define probability (semi)measures \citep{icard2020calibratinggm}.
This interpretation is particularly important when addressing non-determinism, because it allows for multiple executions (for example, in the case of \pfsaAcr{}s, multiple different \emph{paths}) generating the same string.
Here, we treat neural LMs as autoregressive generators of strings and show that, by defining identical next-symbol distributions, they implicitly define the same semimeasures over strings as classical probabilistic models of computation.

\subsection{Neural LMs and Regular LMs} \label{sec:nondet-minsky}

\begin{figure}
    \centering

    \begin{tikzpicture}[
        tape node/.style={draw=ETHBlue!80,minimum size=0.85cm,fill=ETHBlue!20},
        doubletape node/.style={draw=ETHGreen!80,minimum size=0.85cm, minimum height=1cm,fill=ETHGreen!20},
        comb arrow/.style={-{Latex[length=2mm,width=1.25mm]},ETHRed!70},
        attn arrow/.style={-{Latex[length=2mm,width=1.25mm]},ETHGreen!100},
        infl arrow/.style={-{Latex[length=1.25mm,width=0.75mm]},ETHGreen!50},
        ]
        \foreach \i/\y in {0/$\bos$,1/$\sym_1$,2/$\cdots$,3/$\cdots$,4/$\cdots$,5/$\sym_{\tstep-1}$,6/$?$,7/$\cdots$} {
                \ifnum \i=7
                    \node[tape node,fill=ETHBlue!40] (tape-\i) at (0.85*\i,0) {\footnotesize \y};
                \else
                    \node[tape node,fill=ETHBlue!20] (tape-\i) at (0.85*\i,0) {\footnotesize \y};
                    \ifnum \i>7
                        \node[tape node,fill=ETHBlue!10] (tape-\i) at (0.85*\i,0) {\footnotesize \y};
                    \fi
                \fi
            }
        \foreach \i/\y in {0/$\stateq_0$,1/$\stateq_1$,5/$\stateq_{\tstep-1}$,6/$\stateq_\tstep$} {

                \node[draw=none] (state-\i) at (0.85*\i,1) {\footnotesize \y};
                \foreach \j in {0,1,2,3,4,5} {
                        \ifnum \i>\j
                            \draw[infl arrow] (tape-\j.north) to (state-\i.south);
                        \fi
                        \ifnum \i=\j
                            \draw[infl arrow] (tape-\j.north) to (state-\i.south);
                        \fi
                    }
            }

        \foreach \i/\y in {0/$\bos$,1/$\stateq_0$,2/$\substack{\sym_1 \\ \stateq_1}$,3/$\cdots$,4/$\substack{\sym_{\tstep - 2} \\ \stateq_{\tstep - 2}}$,5/$\substack{\sym_{\tstep - 1} \\ \mathhl{\stateq_{\tstep - 1}}}$,6/$?$,7/$\cdots$} {
                \ifnum \i<5
                    \node[doubletape node,fill=ETHGreen!10] (tape-\i) at (0.85*\i,-2.5) {\footnotesize \y};
                \else
                    \ifnum \i=5
                        \node[doubletape node,fill=ETHGreen!15] (tape-\i) at (0.85*\i,-2.5) {\footnotesize \y};
                    \else
                        \ifnum \i<6
                            \node[doubletape node,fill=ETHGreen!25] (tape-\i) at (0.85*\i,-2.5) {\footnotesize \y};
                        \else
                            \ifnum \i=6
                                \node[doubletape node,fill=ETHGreen!35] (tape-\i) at (0.85*\i,-2.5) {\footnotesize \y};
                            \else
                                \node[doubletape node,fill=ETHGreen!5] (tape-\i) at (0.85*\i,-2.5) {\footnotesize \y};
                            \fi
                        \fi
                    \fi
                \fi
            }
        \draw[attn arrow] (tape-6.north) to[out=120,in=60] (tape-5.north);

        \node[draw=none] (prob) [above = 6mm of tape-6] {$\scriptstyle \pLM\left(\stateq_\tstep, \eossym_\tstep \mid \textcolor{ETHRed}{\stateq_{\tstep - 1}}\right)$};
        \draw[comb arrow] (tape-6.north) to[out=60,in=270] (prob.south);

    \end{tikzpicture}
    \caption{If $\strlet$ uniquely determines a subpath in the \pfsaAcr, the current state $\stateq_\tstep$ can be uniquely determined \emph{without} knowing the previous state (top).
    If, however, the substring $\strlet$ can lead to multiple states, the relevant next-symbol distribution cannot be determined.
    Storing the \emph{sampled} states as part of the output fixes this by keeping track of the sampled \pfsaAcr subpath (bottom).}
    \label{fig:rnn-minsky}
\end{figure}

We first discuss the connection between CoT-augmented neural LMs and \pfsaAcr{}s.

\subsubsection{Recurrent Neural LMs}

\citeposs{Minsky1954} construction provided one of the first connections between a neural network and a formal computational model.
It showed that RNNs can emulate (deterministic) FSAs, where the RNN accepts string by activating a particular neuron after reading the string.
The relationship in the probabilistic case was explored by \citet{svete-cotterell-2023-recurrent}, who show the equivalence of pure Elman RNNs (without CoT reasoning) and \emph{deterministic} \pfsaAcr{}s.\footnote{The relationship between RNN LMs and \emph{non}-deterministic \pfsaAcr{}s was explored by \citet{svete-etal-2024-lower}, albeit with linearly bounded-precision RNNs.}
This illustrates an important distinction between the deterministic and non-deterministic frameworks and thus a discrepancy between general \pfsaAcr{}s and RNN LMs.
Intuitively, the discrepancy comes from the fact that there is no non-determinism in the recurrence of an RNN, which is thus unable to capture the possibly non-deterministic decisions of the \pfsaAcr.
CoT reasoning endows an RNN with exactly this non-determinism because it allows the RNN to sample and refer back to \emph{trajectories} rather than only symbols by annotating each of its outputs with the current state of the \pfsaAcr.\footnote{Recall that non-determinism means that multiple possible transitions to different states can yield the same symbol (\cref{sec:wfsas}).}
Upon reading the previously randomly generated state of the automaton captured in the output, it can follow the randomly sampled generating trajectories.

The following two theorems show that RNN LMs with fixed precision and CoT reasoning are weakly equivalent to general PFSA.
This requires showing the correspondence in both directions, i.e., that \begin{enumerate*}[label=\textit{(\arabic*)}]
    \item the distribution induced by any \pfsaAcr can be generated by a constant-precision RNN LM with CoT, and
    \item any CoT-augmented constant-precision RNN LM can be emulated by a \pfsaAcr.
\end{enumerate*}
\begin{restatable}{reTheorem}{minskyRNN} \label{thm:rnn-pfsa}
    For any regular LM, there exists a weakly equivalent CoT-augmented constant-precision Elman RNN LM.
\end{restatable}
\begin{proof}[Proof intuition.]
    We show how, using the RNN's recurrence and output sampling step, we can implement the transition function of any \pfsaAcr.
    The RNN starts by sampling an output symbol containing an initial state $\stateq_0$ according to the initial distribution $\initf$ without emitting a language symbol.
    This output gets fed back into the RNN at the next time step, allowing it to \emph{read} the sampled state, and the next symbol--state pair is sampled according to the conditional distribution defined by the output state, as illustrated by \cref{fig:rnn-minsky}.
    Finally, the states in the generated string are removed by a regular function, leaving only the language output.
    See \cref{app:proofs} for the detailed proof.\looseness=-1
\end{proof}

\begin{restatable}{reTheorem}{minskyRNNBack} \label{thm:pfsa-rnn}
    For any constant-precision CoT-augmented Elman RNN LM, there exists a weakly equivalent \pfsaAcr.
\end{restatable}
\vspace{-6pt}
\begin{proof}
    See \cref{app:proofs}.
\end{proof}
\cref{thm:rnn-pfsa,thm:pfsa-rnn} establish the equivalence between CoT-augmented \emph{constant-precision} Elman RNN LMs and general regular LMs.
This illuminates the added representational capacity awarded by CoT reasoning: Storing the current FSA state in the output string and removing it later allows the model to handle non-determinism which is not possible without the additional information.

\subsubsection{Transformer LMs}
We now show an analogous claim to \cref{thm:rnn-pfsa} for Transformer LMs with CoT.
This is simply a stepping stone towards full probabilistic Turing completeness---a similar construction will then lead us to the full proof that transformer LMs with unbounded precision can simulate PTMs.
\begin{restatable}{reTheorem}{transformerMinsky} \label{thm:pfsa-transformer}
    For any regular LM, there exists a weakly equivalent CoT-augmented constant-precision transformer LM.
\end{restatable}
\begin{proof}[Proof sketch]
    We use the construction from \citet{svete-etal-2024-transformers} which shows how to encode $n$-gram LMs in a transformer.
    Here, the alphabet of the transformer is augmented with the states of the \pfsaAcr, i.e., the symbol at each position $\tstep$ contains not only an output symbol but also the current state of the \pfsaAcr at time $\tstep$.
    Thereby each input symbol contains all the information required to compute the next-symbol probabilities (it is effectively a unigram LM).
    Because the next symbol probabilities in a \pfsaAcr only depend on the current state, the transformer does not need to use attention at all and can rely solely on the output of its residual connections.
    See \cref{app:proofs} for the detailed proof.
\end{proof}

\subsection{Neural LMs and PTMs} \label{sec:lms-cot}
We now extend the results from the previous section from representing the simple regular LM to expressing all enumerable semimeasures over strings by emulating probabilistic Turing machines.
For this, in the RNN case, we require unbounded precision and $\ReLU$ activation functions rather than fixed precision and Heaviside activations.
\subsubsection{Recurrent Neural LMs} \label{sec:rnns-cot}
First, note the following definition:
\begin{definition}
    An autoregressive LM is called a \defn{real-time LM} if it never outputs empty symbols ($\eps$).
\end{definition}
\citet{nowak-etal-2023-representational} show that RNN LMs with rationally valued activations that are not restricted to operating in real time are Turing complete and weakly equivalent to a subset of rationally weighted PTMs.
Intuitively, not operating in real time gives an LM additional computation time while storing the results of its computations in the hidden states.
This is a form of CoT reasoning since we could equivalently let the LM output additional symbols that do not count toward the final output.
\footnote{On the other hand, some CoT-augmented LMs do operate in real time but over an extended alphabet; see \cref{sec:nondet-minsky}.}
The ability to \emph{erase} certain symbols, as done by our transducer $\regfn$, helps make the setup of \citet{nowak-etal-2023-representational} realistic.
Importantly, beyond giving the LM more computation time, CoT also allows the LM to model non-determinism in PTMs.

\begin{restatable}{reTheorem}{rnnTuring}\label{thm:cot-rnn-ptm}
    For every LM induced by a non-deterministic probabilistic Turing machine, there exists a weakly equivalent CoT-augmented RNN LM with unbounded precision. 
\end{restatable}
\begin{proof}[Proof intuition]
The proof follows \citeposs{nowak-etal-2023-representational} probabilistic version of the proof by \citet{siegelmann-sontag-1992}, but extended to account for CoT reasoning.
For the ability to simulate \emph{all} PTMs rather than just a subset, we augment the output alphabet of the RNN with enough information about the current PTM configuration so that subsequent steps can uniquely identify the previous action taken. 
Concretely, the output alphabet $\outalphabet$ contains information about the current state, the symbol written on the working tape of the PTM, and the head action performed.
This additional information is then removed by our regular function at the end, yielding only the output of the simulated PTM generated according to its probability.
    A detailed proof is presented in \cref{app:proofs}.\looseness=-1
\end{proof}

\subsubsection{Transformer LMs} \label{sec:tm-turing-complete}
The representational capacity of transformer LMs has received a lot of attention in the last few years.
\citet{perez-etal-2021-attention} established that the encoder--decoder variant of the architecture is Turing complete.
Since then, concurrent work has shown non-probabilistic Turing completeness of LM-oriented decoder-only variants \citep{merrill2024the,feng2023revealing}.
We extend the work to the \emph{probabilistic} case.

\begin{restatable}{reTheorem}{transformerTuring}\label{thm:cot-transformer-ptm}
    For any PTM-induced LM, there exists a weakly equivalent unbounded-precision CoT-augmented Transformer LM.
\end{restatable}
\begin{proof}[Proof intuition.]
    The proof follows \citet{perez-etal-2021-attention} but is adapted to the probabilistic case.
    Again, the main idea is to augment the output alphabet with enough information about the current PTM configuration to reconstruct the probabilities of possible actions at each time step.
    This information and appropriate positional encodings are enough to recover the PTM's current configuration and thus the next-action distribution, allowing us to construct a weakly equivalent transformer LM.
    A regular function then removes the additional information from the string.
    See \cref{app:proofs} for details.
\end{proof}

\subsubsection{Weak Equivalence}
\cref{thm:cot-rnn-ptm,thm:cot-transformer-ptm} show that transformer and RNN LMs with CoT reasoning are at least as powerful as PTMs.
Weak equivalence requires us to also prove the reverse of these two theorems, analogous to \cref{thm:pfsa-rnn} for constant-precision Elman RNN LMs.
\vspace{-12pt}
\begin{restatable}{reTheorem}{rnnTfWeakEquivalence}\label{thm:ptm-cot-rnn-transformer}
    For any rationally valued RNN LM and transformer LM with CoT reasoning, there exists a weakly equivalent PTM.
\end{restatable}
\begin{proof}[Proof]
    RNN LMs define enumerable semimeasures \citep[App. G]{nowak-etal-2023-representational}, which can always be expressed by a PTM \citep[Thm. 3]{icard2020calibratinggm}.
    Following the same reasoning, transformer LMs define enumerable semimeasures as well; the probability of a string $\str$ is defined as the sum probabilities of all runs of the transformer which result in the output $\str$, of which there are countably many.
    Finally, there is only a countable number of ways a regular function can transform any string, so RNN LMs and Transformer LMs with CoT still define enumerable semimeasures.\looseness=-1
\end{proof}

\section{Conclusion}

Many modern language models have been shown to perform better with CoT reasoning.
Despite its empirical success, CoT reasoning has yet to be well understood formally.
Recent theoretical work in this area has analyzed the Turing completeness of CoT-augmented LMs, failing to account for the inherently probabilistic nature of LMs.
We fix this mismatch in the current literature by introducing a novel formalization of CoT reasoning.\looseness=-1

\section*{Limitations}
Our constructions simulating \pfsaAcr{}s rely on fixed-precision arithmetic in line with the finite-memory nature of such automata.
In contrast, the Turing-complete models all rely on \emph{unbounded} precision with respect to the length of the string---either to be able to encode arbitrarily large stacks in the hidden state in the case of RNN LMs or to be able to encode positional information in the case of transformer LMs.
This inevitably results from the unbounded number of computational steps per emitted symbol by a Turing machine and is distinctly different from the scaling with respect to the number of \emph{computational steps}.
In the transformer case, the precision scales logarithmically with the number of steps (since we use the same positional encodings as \citet{perez-etal-2021-attention}), which is standard for theoretical investigations of transformer models \citep{yao-etal-2021-self,merrill-etal-2022-saturated,merrill-sabharwal-2023-parallelism}.
In the case of RNNs, the precision scales linearly with the computation sequence length, due to the encoding of a stack as a rational number, where each entry on the stack occupies a single digit.
While in line with previous theoretical work, these assumptions are unrealistic in practice since neural LMs normally use fixed precision floating point arithmetic.

We do not use layer normalization with transformers---this is done for simplicity---but note that layer normalization has been found to increase the representational capacity of transformers in some cases \citep{chiang-cholak-2022-overcoming,merrill2024the}.

Moreover, we only prove theoretically the equivalence between CoT-augmented LMs and formal models of computation.
That is, we do not give a training algorithm to elicit the emergence of specific Turing complete or non-deterministic regular automata in neural LMs, and we make no claims about the training efficiency or even the feasibility of training neural LMs for this purpose.

\section*{Ethics Statement}
Since this work deals with purely theoretical properties of neural language models, we do not foresee any ethical issues arising from this work.

\bibliography{anthology, custom}
\bibliographystyle{acl_natbib}

\appendix
\newpage
\onecolumn

\section{Discussion} \label{sec:discussion}
At a high level, our claims characterize neural architectures endowed with CoT reasoning in terms of well-understood \emph{probabilistic} models of computation, giving the framework of CoT reasoning a novel probabilistic perspective.
This allows the reconciliation of theoretical results about neural LMs with the general probabilistic language modeling framework.
In doing so, we put existing results about the representational capacity of LMs into perspective and show how they can be thought of as performing ``CoT reasoning in disguise.''
Perhaps surprisingly, the inclusion of CoT reasoning steps results in a natural inclusion of non-determinism in otherwise deterministic neural LMs, as exemplified by \cref{thm:rnn-pfsa}.
Outputting the states as part of the generated string---whose generation is inherently non-deterministic---allows a neural LM to keep track of the trajectory of the current execution.
This suggests that CoT reasoning might provide an interesting avenue for exploring the non-deterministic representational capacity of neural LMs.
Note that this means CoT reasoning can endow neural LMs with higher expressivity since, e.g., non-deterministic \pfsaAcr{}s are strictly more expressive than deterministic ones.
Similarly, our result that RNNs or transformer LMs with CoT reasoning can simulate PTMs rather than regular Turing machines is important as it allows \emph{probabilistic} computation using such neural LMs.
This means one can sample from them multiple times to assess the certainty of string inclusion in a language.
Furthermore, the problems LMs could solve efficiently can be described by different complexity classes such as BPP, ZPP, etc. instead of P.
\footnote{For a comparison of these complexity classes, see e.g. \citet{papadimitriou1994computational}.}

\section{Related Work} \label{sec:related}
\paragraph{Turing completeness of RNNs.}
Plenty of existing work has investigated the representational capacity of RNNs, both as recognizers as well as LMs \citep[e.g.,][\textit{inter alia}]{McCulloch1943,Kleene1956,siegelmann-sontag-1992,hao-etal-2018-context,DBLP:journals/corr/abs-1906-06349,merrill-2019-sequential,merrill-etal-2020-formal,hewitt-etal-2020-rnns,Chung2021,merrill-etal-2022-saturated,merrill2022extracting,svete-cotterell-2023-recurrent,nowak-etal-2023-representational}.
Most relevant to our work, \citet{siegelmann-sontag-1992,Chung2021} show how RNNs with unbounded precision and unbounded computation time can simulate Turing machines and discuss the implications.
\citet{nowak-etal-2023-representational} extend this to the probabilistic setting, showing that RNN LMs can emulate certain PTMs, but require the LMs to be able to perform non-emitting steps.

\paragraph{Turing completeness of transformers.}
\citet{perez-etal-2021-attention} show the Turing completeness of hard attention encoder--decoder transformer by encoding the configuration of the Turing machine in the output of the transformer.
\citet{bhattamishra-etal-2020-computational} provide a different perspective on Turing completeness of the architecture by showing that transformers can simulate RNNs.
Turing completeness is in this sense a simple consequence of the Turing completeness of RNNs.
\citeposs{bhattamishra-etal-2020-computational} construction, however, relies on emitting symbols from an (uncountably) infinite set (rational-valued vectors), in contradiction to the requirement of the LM working over a \emph{finite} alphabet.
In this sense, they make use of the so-called \emph{regression} transformer setup, which does not lend itself well to the discrete language modeling \citep{ICLRegression}.
In concurrent work, \citet{merrill2024the} adapt \citeposs{perez-etal-2021-attention} result to the CoT setting, connecting Turing completeness of transformer decoders to CoT reasoning similar to our work.
In contrast to our work, they make statements about the model's ability to decide language membership (simulating a Turing machine that accepts or rejects an input) and not to represent probabilistic languages.
In that sense, the transformer they construct is not a language model.
Besides being probabilistic, our construction also enables the analysis of any autoregressive LM architecture, and we focus on RNN and transformer LMs.
\citet{feng2023revealing} similarly show that CoT transformers can perform arithmetic expressions and dynamic programming by generating intermediate results, backing these claims up with experimental evidence, but stopping short of showing Turing completeness.
\citet{du-etal-2023-measure} provide a related result, showing that transformer LMs with continuous transformation functions are \emph{tight}.
This might at first glance contradict the results on Turing completeness, since the latter might require the model to run indefinitely, resulting in a non-tight model.
However, the discrepancy is resolved by noting that the tightness result crucially relies on the use of the softmax normalization function over the reals $\R$ (without $\pm\infty$) in \cref{eq:repr-lm}.
This, together with the encoding function of the transformer, results in a non-diminishing probability of generating $\eos$ and thus in a tight model.
Our use of the sparsemax function sidesteps this issue by allowing us to assign $\eos$ probability $0$.\looseness=-1

\section{Additional Preliminaries}In our construction, we assume that strings are prefixed with the \underline{b}eginning-\underline{o}f-\underline{s}tring symbol $\bos$.
This is to give a base case for recurrent definitions such as \cref{def:lm}. 
This is just for ease of notation in our proofs.
Note that instead of $\pLM\left(\eossym\mid\bos\right)$, we could also equivalently write $\pLM\left(\eossym\mid\eps\right)$.

\subsection{Turing Machines} \label{app:def-tm}
Under our definition, a probabilistic Turing machine has two tapes.\footnote{Adding another tape does not increase the computational power of a Turing machine \citep[Ch. 3]{sipser13}.}
The first is the \defn{processing tape} on which symbols from the tape alphabet can be read and written.
The second is a write-only \defn{output tape} for symbols of the output alphabet.
In the beginning, the processing tape contains the designated $\bot$ symbol in the leftmost cell while all other cells contain the blank symbol $\blanksym$.
The output tape is empty at the beginning.
Starting in the initial state $\qinit$, at each time step $t$, a transition is sampled out of all available transitions for the given state $\stateq$ and the current working tape symbol $\tapesym$, and then applied.
The sampling happens according to the transition probability $w$ (recall that the transition weights $w$ are non-negative and sum to 1 for each pair of state $\stateq$ tape symbol $\tapesym$).
We write transitions as $(\stateq, \tapesym) \xrightarrow{\sym, \tmdir/w} (\stateq^\prime, \tapesym^\prime) $, where $\stateq,\stateq^\prime\in\states, \tapesym, \tapesym^\prime\in\stackalphabet, \sym\in\epsalphabet$, $w \in \Q_{\geq0}$, and $\tmdir\in\tmops$, with the following interpretation.
When the machine is in state $\stateq$ and its head is reading $\tapesym$ on the working tape, it moves to state $\stateq^\prime$, writes $\tapesym^\prime$ to the working tape, writes
$\sym$ to the output tape if $\sym\in\alphabet$ or nothing if $\sym=\eps$, and it moves the head on the working tape by one symbol in the direction $\tmdir$, i.e., to the left ($\tmleft$), or to the right ($\tmright$).
As with any non-deterministic machine, the above definition naturally gives rise to the notion of a tree of possible computations \citep[p. 48]{sipser13}.
A \defn{branch} $\apath$ of the computation tree of a PTM is a sequence of consecutive transitions
\begin{equation}
    (\qinit, \bot) \xrightarrow{\sym_1,\tmdir_1/w_1} (\stateq_2, \tapesym_1^\prime), (\stateq_2, \tapesym_2) \xrightarrow{\sym_2,\tmdir_2/w_2} (\stateq_3, \tapesym_2^\prime), \ldots, (\stateq_\pathlen, \tapesym_\pathlen) \xrightarrow{\sym_\pathlen,\tmdir_\pathlen/w_\pathlen} (\stateq_{\pathlen+1}, \tapesym_{N}^\prime)
\end{equation}
We say that a branch is \defn{accepting} if the branch reaches the final state $\qfinal$.\footnote{The final state has no outgoing transitions.}
The \defn{yield} of an accepting computation branch%
\footnote{We only consider branches that end in a final state when discussing the yield and weight of branches. This means we only take into account finite instances of computation.} is the sequence of symbols $\str\in\kleene{\alphabet}$ written on the output tape at that point in the computation, i.e., a concatenation of the (non-$\eps$) symbols $\sym_1,\ldots,\sym_\pathlen$, where $\pathlen=|\apath|$ is the number of transitions in the branch.
The \defn{weight} of an accepting branch is the product of the weights of its transitions, i.e.,\looseness=-1
\begin{equation}
    \weight(\apath) \defeq \prod_{n=0}^\pathlen w_n.
\end{equation}
We denote the branches that yield a given string $\str$ by $\paths(\tm, \str)$.
The sum of weights of all branches that yield a certain string $\str\in\kleene{\alphabet}$ is the \defn{stringsum} of that string, defined as
\begin{equation}
    \tm(\str)\defeq \sum_{\apath\in\paths(\tm, \str)} \weight(\apath).
\end{equation}
This definition gives rise to a semimeasure over strings whose sum over all possible strings is exactly the halting probability\footnote{The halting probability is the probability that the execution of the PTM will end in a halting state after finitely many steps.} of the PTM, i.e., the probability that starting from the initial state $\qinit$, $\tm$ reaches a final state $\qfinal$
\begin{equation}
    \sum_{\str\in\kleene{\alphabet}}\tm(\str) = \mathbb{P}(\tm \text{ halts}).
\end{equation}

\subsection{Pushdown Automata} \label{sec:pda}
\begin{definition}\label{def:stochastic-wpda}
    A \defn{probabilistic pushdown automaton} (\ppdaAcr{}) is a tuple $\wpdatuple$ where $\states$ is a finite set of states, $\alphabet$ is the input alphabet, $\stackalphabet$ is the stack alphabet, $\trans \subseteq \states \times \stackalphabet \times \epsalphabet \times \Qnonnegative \times \states \times \kleene{\stackalphabet}$ is a finite set of weighted transitions, and $\initialconfig$ and $\finalconfig$ are called the initial and final configuration, respectively.
    Moreover, for all states $\stateq \in \states$ and stack symbols $\stacksym \in \stackalphabet$, $\trans$ satisfies $\sum\limits_{\edge{\left(\stateq, \stacksym\right)}{\sym}{w}{\left(\stateq^\prime, \stackseq\right)} \in \trans} w = 1$.
\end{definition}

Our definition of \ppdaAcr{} is similar to that of \citet{abney-etal-1999-relating}, but, unlike theirs, our machine must end its computation in the final configuration rather than just empty the stack.
We write transitions $\left(\stateq, \stacksym, \sym, w, \stateq^\prime, \stackseq \right) \in \trans$ as $\edge{\left(\stateq, \stacksym\right)}{\sym}{w}{\left(\stateq^\prime, \stackseq\right)}$ which represent a move with weight $w$ from state $\stateq$ to $\stateq^\prime$ while scanning or outputting $\sym$, popping the symbol $\stacksym$ from the stack and pushing the sequence of symbols $\stackseq$.

\begin{definition}
    A \defn{configuration} $\pdaconfig{\stackseq}{\stateq} \in \states \times \kleene{\stackalphabet}$ is a pair containing the current state and the current contents of the stack.
\end{definition}

A \ppdaAcr $\pushdown=\wpdatuple$ is called \defn{deterministic} if for every $\stateq\in\states$, $\stacksym \in \stackalphabet$ and $\sym \in \epsalphabet$, there is at most one transition $\edge{(\stateq, \stacksym)}{\sym}{w}{\stateq^\prime, \stackseq}$ with $w>0$.
Additionally, if there is a transition $\edge{\stateq, \stacksym}{\sym}{w}{\stateq^\prime, \stackseq}$ such that $\sym\in\alphabet$, then there is no transition $\edge{\stateq, \stacksym}{\eps}{w^\prime}{\stateq^{\prime\prime}, \stackseq^\prime}$ with $w^\prime>0$.
Intuitively, there exists at most one next move given any configuration in a deterministic \ppdaAcr{}.
A \defn{run} $\apath$ of a \ppdaAcr{} is a sequence of configurations and transitions,
\begin{equation}
    \pdaconfig{\stackseq_0}{\stateq_0}, \atrans_1, \pdaconfig{\stackseq_1}{\stateq_1}, \ldots, \atrans_N, \pdaconfig{\stackseq_N}{\stateq_N},
\end{equation}
where $\pdaconfig{\stackseq_n}{\stateq_n}$ is the confguration reached by taking transition $\atrans_n$ from configuration $\pdaconfig{\stackseq_{n-1}}{\stateq_{n-1}}$ for any $n \in [1,N]$.
If $\pdaconfig{\stackseq_0}{\stateq_0}=\initialconfig$ and $\pdaconfig{\stackseq_N}{\stateq_N}=\finalconfig$, we call $\arun$ \defn{accepting}.
The \defn{yield} of an accepting run is $\yield\left(\apath\right) \defeq \sym_1\cdots \sym_N$, where $\sym_n$ is the symbol scanned by $\atrans_n$.
The \defn{weight} of an accepting run, $\weight\left(\apath\right)$, is the product of the weight of its transitions,
\begin{equation}
    \weight\left(\apath\right) \defeq \prod_{n=1}^N w_n,
\end{equation}
where $w_n$ is the weight of transition $\atrans_n$.
We write $\paths\left(\pushdown, \str\right)$ for the set of all accepting runs with yield $\str$.
The sum of the weights of all accepting runs of some \ppdaAcr{} $\pushdown$ that yield a certain string $\str \in \kleene{\alphabet}$ is called the \defn{stringsum} and is defined as
\begin{equation}
    \pushdown\left(\str\right) \defeq \sum_{\arun\in\runs\left(\pushdown,\str\right)} \weight\left(\arun\right).
\end{equation}
\subsection{Two-stack Pushdown Automata} \label{sec:twopda}
\begin{definition}
    A \defn{two-stack pushdown automaton} (\twopda) is a machine specified by the $6$-tuple $\pda = \left(\states, \alphabet, \stackalphabet, \trans, \qinit, \qfinal\right)$,
    where $\states$ is a finite set of states, $\alphabet$ is an alphabet of input symbols, $\stackalphabet$ is an alphabet of stack symbols, including the bottom-of-stack symbol $\bot$, $\trans\colon \states \times \stackalphabet \times \epsalphabet \times \Q_{\geq 0} \times \states \times \stackalphabet_{\eps}^4$ is a finite set of rationally-weighted transitions, and $\qinit$ and $\qfinal$ are the initial and the final state, respectively.
\end{definition}
We use a definition similar to that of \citeposs{nowak-etal-2023-representational}, which assumes without loss of generality that transitions are determined by the current state and the top symbol of the first stack only. See \citet[Appendix B]{nowak-etal-2023-representational} for a proof.
We write transitions as $\twoPdaEdgenoweight{q}{\stacksym}{\sym}{\stateq^\prime}{\stacksym_1}{\stacksym_2}{\stacksym_3/w}{\stacksym_4}$, which represent a move with weight $w$ from state $\stateq$, with the top of the first stack $\stacksym$, to state $\stateq^\prime$, while scanning or outputting $\sym$, popping $\stacksym_1$ and $\stacksym_2$ from the first and second stack, respectively, and pushing $\stacksym_3$ and $\stacksym_4$ to the first and second stack, respectively.
A \twopda is called \defn{probabilistic} if, for every state $\stateq$ and top symbol $\stacksym$ of the first stack, the weights of the transitions define a probability distribution, i.e.,
\begin{equation}
    \sum_{\twoPdaEdgenoweight{q}{\stacksym}{\sym}{\stateq^\prime}{\stacksym_1}{\stacksym_2}{\stacksym_3/w}{\stacksym_4} \in \trans} w = 1.
\end{equation}
As before, we define a \defn{run} in the \twopda as a sequence of consecutive transitions. 
A run is called \defn{accepting} if it ends in a final configuration.
The yield of an accepting run is the sequence of symbols $\sym\in\alphabet$ that the \twopda scans (or outputs) during the run.
And the \defn{weight} of an accepting run, $\weight\left(\apath\right)$, is the product of the weight of its transitions,
\begin{equation}
    \weight\left(\apath\right) \defeq \prod_{n=1}^N w_n,
\end{equation}
where $w_n$ is the weight of transition $\atrans_n$.
Finally, the \defn{stringsum} of string $\str$ is defined as the sum of the weights of all accepting runs that yield $\str$, i.e., $\paths\left(\pushdown, \str\right)$:
\begin{equation}
    \pushdown\left(\str\right) \defeq \sum_{\arun\in\runs\left(\pushdown,\str\right)} \weight\left(\arun\right).
\end{equation}

\subsection{Transformer Language Models} \label{app:transformers}

Transformer LMs are LMs whose conditional distributions $\pLNSM\left(\eossym_\tstep \mid \strlt\right)$ are computed by a \emph{transformer}.
A transformer is a composition of multiple transformer \emph{layers}, each of which implements the \emph{attention mechanism}.
We give definitions of these building blocks in what follows.
Our formalization and notation closely follows \citet{svete-etal-2024-transformers}.

\begin{table*} \centering\footnotesize
    \begin{tabular}{@{}clp{8cm}@{}}
        \toprule
        Symbol                                         & Type                                        & Meaning                                                                                                                           \\
        \midrule
        $\NTo{N}$                                      & $\subset \N$                                & The set $\set{1, \ldots, N}$ for $N \in \N$.                                                                                      \\
        $\alphabet, \bosalphabet, \eosalphabet$        & alphabet                                    & $\alphabet$ is a set of symbols, $\bosalphabet \defeq \alphabet \cup \set{\bos}$, $\eosalphabet \defeq \alphabet \cup \set{\eos}$ \\
        $\sym, \bossym, \eossym$                                         & $\in \alphabet$                             & A symbol, element of $\alphabet, \bosalphabet,$ or $\eosalphabet$.                                                                                                 \\
        $\str$                                         & $\in \kleene{\alphabet}$                    & A string over $\alphabet$.                                                                                                        \\
        $\str^{\idxi}_{\idxj}$                         & $\in \kleene{\alphabet}$                    & A substring of $\str$, a string.                                                                                                  \\
        $\onehot{\sym}$                                & $\in \{0, 1\}^{\nsymbols}$                  & One-hot encoding of the symbol $\sym \in \alphabet$.                                                                              \\
        $\hiddDim$                                     & $\in \N$                                    & Size of the contextual representations in the transformer or RNN.                                                                 \\
        $\simplexFun{N - 1}$                           & $\subseteq \R^{N}$                          & The $N-1$-dimensional probability simplex.                                                                                        \\
        $\tfscorefun$                                  & $\R^{\hiddDim} \times \R^{\hiddDim} \to \R$ & A scoring function.                                                                                                               \\
        $\projfunc$                                    & $\R^N \to \simplexFun{N - 1}$               & A normalization function.                                                                                                         \\
        $\qTransf$, $\kTransf$, $\vTransf$, $\oTransf$ & $\R^\hiddDim \to \R^\hiddDim$               & The query, key, value, and output functions.                                                                                      \\
        $\fTransf$                                     & $\R^\hiddDim \to \R^\hiddDim$               & The final transformer LM transformation function.                                                                                 \\
        $\enc$                                         & $\kleene{\alphabet} \to \R^\hiddDim$        & The string representation function.                    \\
        $\posInEmbedding$ & $\bosalphabet \times \N \to \R^\hiddDim$    & The position-augmented representation function.                                                                                   \\
        $\tfnumlayer$                                  & $\in \N$                                    & Number of layers.                                                                                                                 \\
        $\tfheadnum$                                   & $\in \N$                                    & Number of heads.                                                                                                                  \\
        $\tfheadCombine$                               & $\R^{\tfheadnum \hiddDim} \to \R^\hiddDim$  & The head combining function.                                                                                                      \\
        $\left(\cdot\;; \cdots; \cdot\right)$          &                                             & Vertical concatenation operator of vectors or matrices.                                                                           \\
        \bottomrule
    \end{tabular}
    \caption{A summary of the notation used in the paper.}
    \label{tab:notation}
\end{table*}

\paragraph{Notation.}
We use bold unitalicized letters such as $\vx \in \R^\hiddDim$ to denote real-valued vectors and italicized letters $\evx_\idxj \in \R$ for their entries.
Capital bold letters such as $\mX \in \R^{N \times \hiddDim}$ denote matrices.
All vectors are \emph{column} vectors unless transposed.
We define the vertical stacking operator $\left(\cdot \; ; \cdots; \; \cdot\right)$, which denotes the vertical concatenation of the $\hiddDim$-dimensional \emph{column} vectors $\vx_1, \ldots, \vx_N$ into a $N\hiddDim$-dimensional vector $\left(\vx_1; \cdots; \vx_N\right) \in \R^{N \hiddDim}$ and the concatenation of the $\hiddDim$-dimensional \emph{row} vectors $\vx^\top_1, \ldots, \vx^\top_N$ into a matrix $\mX \in \R^{N \times \hiddDim}$ with $N$ rows and $\hiddDim$ columns.
Given the matrix $\mX = \left(\vx^\top_1; \cdots; \vx^\top_N\right)$, we write $\mX_\idxn = \left(\vx^\top_1; \cdots; \vx^\top_\idxn\right)$ for the submatrix composed of the first $n$ rows.
We call a function $\tfscorefun: \R^{\hiddDim} \times \R^{\hiddDim} \to \R$ whose purpose is to evaluate the compatibility of two vectors a \defn{scoring function}.
A \defn{normalization function} $\projfunc\colon \R^N \to \simplexFun{N - 1}$ maps vectors in $\R^N$ to $N$ probabilities.
Here, $\simplexFun{N - 1} \defeq \set{\vx \in \left[0, 1\right]^N \mid \sum_{\idxn = 1}^N \evx_\idxn = 1}$ is the $N-1$-dimensional probability simplex.
This notation is summarized in \cref{tab:notation}.

\paragraph{The Attention Mechanism.}
The attention mechanism works as follows.
It takes a \defn{\underline{q}uery} vector $\vq \in \R^{\hiddDim}$ and two matrices: The matrix $\mK \in \R^{N \times \hiddDim}$ of \defn{\underline{k}eys} and the matrix $\mV\in \R^{N \times \hiddDim}$ of \defn{\underline{v}alues} and computes a weighted average of the value vectors based on the compatibilities of the key vectors to the query vector, as scored by a scoring function $\tfscorefun$.
A formal definition is given below.
\begin{definition}[Attention Mechanism] \label{def:attention}
    The \defn{attention mechanism} $\attn\colon \R^\hiddDim \times \R^{N \times \hiddDim} \times \R^{N \times \hiddDim} \to \R^\hiddDim$ is defined as
    \begin{equation} \label{eq:attention-sum}
        \attn\left(\vq, \mK, \mV\right) \defeq \sum_{\idxn = 1}^{N} \evs_\idxn\vv_\idxn
    \end{equation}
    where $\vq \in \R^{\hiddDim}$ be a query vector and let $\mK = \left(\vk^\top_1; \cdots; \vk^\top_N\right) \in \R^{N \times \hiddDim}$ and $\mV = \left(\vv^\top_1; \cdots; \vv^\top_N\right) \in \R^{N \times \hiddDim}$ be matrices of keys and values, respectively, and 
    \begin{equation}
        \vs \defeq \projfunc\left(\tfscorefun\left(\vq, \vk_1\right), \dots ,\tfscorefun\left(\vq, \vk_N\right) \right)
    \end{equation}
    is the vector of normalized scores between the query $\vq$ and the keys in $\mK$, $\tfscorefun$ is a scoring function and $\projfunc$ is a normalization function.
\end{definition}

\paragraph{Attention types.}
Attention weights are computed by normalizing the scores $\tfscorefun\left(\vq, \vk_1\right), \dots ,\tfscorefun\left(\vq, \vk_\tstep\right)$.
The choice of the projection function $\projfunc$ determines the type of attention and has concrete implications on representational capacity \citep{hao-etal-2022-formal}.
We focus on the \defn{hard attention} projection function.
\begin{definition}  \label{def:hard-attention}
    \defn{Hard attention} is computed with the $\hardmaxAvg$ projection function:
    \begin{equation}
        \hardmaxAvg\left(\vx\right)_\idxd \defeq \begin{cases}
            \frac{1}{m} & \ifcondition \idxd\in \argmax\left(\vx\right) \\
            0           & \otherwisecondition
        \end{cases}
    \end{equation}
    for $\idxd \in \NTo{\hiddDim}$, where $\vx \in \R^\hiddDim$ and $m \defeq |\argmax\left(\vx\right)|$ is the cardinality of the argmax set.
\end{definition}

\begin{definition}
A \defn{multi-layer perceptron} (MLP) $\fTransf\colon\R^\hiddDim\to\R^\hiddDim$ is a function defined as the composition of elementary functions $\vfunc_1, \ldots, \vfunc_L$ 
\begin{equation}
    \fTransf\left(\vx\right) \defeq \vfunc_L \circ \vfunc_{L-1} \circ \cdots \circ \vfunc_1 \left(\vx\right),
\end{equation}
where each function $\vfunc_\ell$ for $\ell\in \NTo{L}$ is defined as
\begin{subequations}
\begin{align}
    \vfunc_\ell(\vx) &\defeq \mlpActivation\left(\mW_\ell\vx + \vb_\ell \right)\quad \ell\in \NTo{L-1} \\
    \vfunc_L(\vx) &\defeq \mW_L\vx + \vb_L,
\end{align}
\end{subequations}
where $\mW_\ell\in\R^{\hiddDim\times\hiddDim}$ is a square weight matrix specific to layer $\ell$, $\vb_\ell\in\R^{\hiddDim}$ is a bias vector, and $\mlpActivation$ is an element-wise non-linear activation function.
The function $\vfunc_1$ is called the \defn{input layer}, the function $\vfunc_L$ is called the \defn{output layer}, and the function $\vfunc_\ell$ for $\ell = 2, \ldots, L-1$ are called \defn{hidden layers}.\footnote{Note that we refer to MLPs by the number of hidden layers, e.g., a one-layer-MLP is an MLP with one \emph{hidden} layer.\looseness=-1
}
\end{definition}

\paragraph{The Transformer Architecture.}
A transformer layer uses the attention mechanism to compute augmented representations $\vz_\tstep = \attn\left(\vq_\tstep, \mK_\tstep, \mV_\tstep\right)$ of the input representations $\mX_\tstep = \left(\vx_1; \cdots; \vx_\tstep\right)$.
The query $\vq_\tstep$, the keys $\mK_\tstep$, and values $\mV_\tstep$ are all transformations of the input representations $\mX_\tstep$.

\begin{definition} \label{def:transformer-layer}
    Given query, key, value, and \defn{\underline{o}utput} functions $\qTransf, \kTransf, \vTransf, \oTransf\colon \R^\hiddDim\!\to\!\R^\hiddDim$,
    a \defn{transformer layer} is a function $\tflayer\colon\R^{\finaltstep \times \hiddDim} \to \R^{\finaltstep \times \hiddDim}$ that computes
    \begin{equation} \label{eq:transf-layer}
        \tflayer\left(\vx_1^\top; \cdots; \vx_\strlen^\top\right) = \left(\vz_1^{\top}; \cdots;  \vz_\finaltstep^{\top}\right) \in \R^{\finaltstep \times \hiddDim}
    \end{equation}
    for $\tstep \in \NTo{\finaltstep}$ where
    \begin{subequations}
        \begin{alignat}{2} \label{eq:attn-block-1}
            \va_\tstep              & \defeq \attn\left(\vq_\tstep, \mK_\tstep, \mV_\tstep\right) + \tflayerinputsy_\tstep &  & \in \R^\hiddDim                           \\
            \tflayeroutputsy_\tstep & \defeq \oTransf\left(\va_\tstep\right) + \va_\tstep                                  &  & \in \R^\hiddDim.  \label{eq:attn-block-2}
        \end{alignat}
    \end{subequations}
    Here, we define
    \begin{subequations}
        \begin{alignat}{2}
            \vq_\tstep & \defeq \qTransf\left(\vx_\tstep\right)                                                            &  & \in \R^\hiddDim                  \\
            \mK_\tstep & \defeq \left(\kTransf\left(\vx_1\right)^\top; \cdots; \kTransf\left(\vx_\tstep\right)^\top\right) &  & \in \R^{\tstep \times \hiddDim}  \\
            \mV_\tstep & \defeq \left(\vTransf\left(\vx_1\right)^\top; \cdots; \kTransf\left(\vx_\tstep\right)^\top\right) &  & \in \R^{\tstep \times \hiddDim}.
        \end{alignat}
    \end{subequations}
    \emph{Note:} For simplicity, we do not include layer normalization.
\end{definition}
The functions $\qTransf, \kTransf, \vTransf$ are usually implemented as linear transformations and $\oTransf$ as an MLP.
Some theoretical literature, however, also considers more general function classes, e.g., all smooth functions \citep{hahn-2020-theoretical}.

Without further modification, the transformations applied by the transformer layer are position-invariant, which necessitates the addition of explicit positional information.
\begin{definition} \label{def:positional-encodings}
    A symbol \defn{representation function} is a function $\inEmbedding\colon \bosalphabet \to \R^{\hiddDim_\inEmbedding}$ and a \defn{positional encoding} is a function $\posEnc\colon \N \to \R^{\hiddDim_\posEnc}$.
    A \defn{position-augmented} representation function $\posInEmbedding\colon \bosalphabet \times \N \to \R^\hiddDim$ (with $\hiddDim = \hiddDim_\inEmbedding + \hiddDim_\posEnc$) is defined as
    \begin{equation}
        \posInEmbeddingFun{\bossym_\tstep, \tstep} \defeq \left(\inEmbeddingFun{\bossym_\tstep}; \posEncFun{\tstep}\right).
    \end{equation}
\end{definition}
\begin{definition} \label{def:static-representations}
    A \defn{static encoding} $\staticRepr$ is a function $\staticRepr\colon \alphabet^\strlen \to \R^{\strlen \times \hiddDim}$ defined for any $\strlen \in \N$ as
    \begin{equation}
        \staticRepr\left(\str\right) \defeq \left(\posInEmbeddingFun{\symone, 1}^\top; \cdots; \posInEmbeddingFun{\sym_\strlen, \strlen}^\top\right).
    \end{equation}
\end{definition}

Multiple transformer layers are stacked into a transformer, which computes the (deep) contextual representations of all symbols in the string.
\begin{definition} \label{def:transformer}
    For $\tfnumlayer \in \N$, an $\tfnumlayer$-layer \defn{transformer} $\tf$ is defined as
    \begin{equation} \label{eq:transformer-model}
        \tfFun{\staticRepr} \defeq \tflayer_\tfnumlayer \circ \cdots \circ \tflayer_1 \circ \staticRepr,
    \end{equation}
    where $\tflayer_\tflayeridx$ for $\tflayeridx \in \NTo{\tfnumlayer}$ are transformer layers and  $\staticRepr$ is a static encoding.
\end{definition}
A transformer computes the contextual representations of the symbols $\str = \sym_1 \cdots \sym_\strlen$ as
\begin{equation}
    \left(\vz_1^{\top}; \cdots; \vz_\strlen^{\top}\right) \defeq\left(\vx_1^{\tfnumlayer\top}; \cdots; \vx_\strlen^{\tfnumlayer\top}\right) \defeq  \tf\left(\staticRepr\right)\left(\str\right).
\end{equation}
If $\staticRepr$ is clear from the context or arbitrary, we will omit it as an argument to $\tf$ and just write $\tf\left(\str\right)$.
\begin{definition} \label{def:enc}
    Given a transformer $\tf$, a \underline{f}inal representation transformation function $\fTransf\colon \R^{\hiddDim} \to \R^{\hiddDim}$, and a string $\str \in \kleene{\alphabet}$ with $|\str| = \strlen$,
    we define the \defn{encoding function} $\tfencfun$ as\looseness=-1
    \begin{equation} \label{eq:enc}
        \tfencfun\left(\str\right) \defeq \fTransf\left(\vz_{\strlen}\right)
    \end{equation}
    where $\vz_{\strlen}$ is the representation of the $\strlen\textsuperscript{th}$ symbol in $\str$ computed by $\tf$, i.e., $\left(\vz_1^{\top}; \cdots; \vz_\strlen^{\top}\right) = \tf\left(\str\right)$.
\end{definition}

\paragraph{Transformer Language Models.}
So far, we have only defined how the transformer architecture can be used to compute the contextual representations of the symbols.
To complete the definition, we define a transformer \emph{language model} as follows.
\begin{definition} \label{def:transformer-plnsm}
    A \defn{transformer LM} $\tfpLM$ is the representation-based autoregressive LM with the representation function $\tfencfun$ from \cref{eq:enc}.
    That is, $\tfpLM$ defines the conditional probability distributions
    \begin{align} \label{eq:transformer-plnsm}
        \tfpLM\left(\symt \mid \strlt \right) \defeq \sparsemaxfunc{\embedMtx\,\tfencfun\left(\strlt\right)}{\symt}.
    \end{align}
\end{definition}

\subsection{Weighted Finite-state Transducers} \label{sec:transducers}

\begin{definition}
    A (rational-)\defn{weighted finite-state transducer} (WFST) is the tuple $\wfsttuple$ where $\alphabet$ and $\oalphabet$ are the input and output alphabets, respectively, $\states$ is a finite set of states, $\trans \subseteq \states \times \epsalphabet \times \oalphabet_\eps \times \Qnonnegative \times \states$ is a finite set of weighted transitions
    where we write transitions  $\left(\stateq, \sym, \symx, w, \stateq^\prime\right) \in \trans$ as $\edge{\stateq}{\sym:\symx}{w}{\stateq^\prime}$,
    and $\initf, \finalf\colon \states \rightarrow \Qnonnegative$ are functions that assign each state its initial and final weight, respectively.
\end{definition}
WFSTs are thus a special class of (weighted) finite-state automata that operate over two alphabets.
Just like \pfsaAcr{}s are a generalization of unweighted FSAs, WFSAs represent a more general class of machines than unweighted FSTs, such as the ones used in the definition of regular reducibility (cf. \cref{def:reg-reduce}).
The weighted versions will prove useful when talking about various aspects of equivalence with regular reducibility.
As a special case of weighted finite-state automata, WFSTs compute weights of their inputs---in this case, weights of \emph{pairs} of strings (the inputs and their outputs):
\begin{equation}
    \wfst\left(\str, \strx\right) \defeq \sum_{\apath \in \paths\left(\wfst, \left(\str, \strx\right)\right)} \weight\left(\apath\right).
\end{equation}
Of interest are also marginal sums of the inputs, i.e.,
\begin{equation}
    \wfst\left(\str\right) \defeq \sum_{\strx \in \kleene{\oalphabet}} \wfst\left(\str, \strx\right).
\end{equation}

\subsubsection{Operations on WFSTs}
We will use WFSTs as building blocks in the exposition of regular reducibility (cf. \cref{sec:rr}).
In this subsection, we outline some operations on the computational models that will be particularly useful.

\paragraph{Composition.}
The \defn{composition} of two WFSTs results in a WFST that, similarly to function composition, maps the strings with the first WFST, passes them to the second one, and returns the (weight of the) output of the second transducer \citep{10.7551/mitpress/3007.003.0017}.
More formally, given the WFSTs $\wfst_1 = \left(\alphabet, \yalphabet, \states, \trans, \initf, \finalf\right)$ and $\wfst_2 = \left(\yalphabet, \oalphabet, \states, \trans, \initf, \finalf\right)$, their composition $\wfst_2 \circ \wfst_1$ computes
\begin{equation} \label{eq:wfst-composition}
    \left(\wfst_2 \circ \wfst_1\right)\left(\str, \strx\right) \defeq \sum_{\strz \in \kleene{\yalphabet}} \wfst_2\left(\strz, \str\right) \cdot \wfst_1\left(\strx,\strz\right).
\end{equation}
Intuitively, \cref{eq:wfst-composition} computes the weight of all the possible ways of mapping $\str$ to $\strx$ by first mapping $\str$ to some $\strz \in \kleene{\yalphabet}$ and then mapping that $\strz$ into $\strx$.
The WFST $\left(\wfst_2 \circ \wfst_1\right)$ can be computed from $\wfst_1$ and $\wfst_2$ in time $\bigO{\left(|\states_1| + |\trans_1|\right) \left(|\states_2| + |\trans_2|\right)}$ \citep{Mohri2009}.

\paragraph{Projecting a WFST.}
Any WFST can be \defn{projected} onto an (input or output) weighted finite-state \emph{automaton} (WFSA)\footnote{A weighted finite-state automaton is a generalization of a probabilistic finite-state automaton where the weights do not have to form probability distributions.
We also extend the definition of WFSAs to allow $\eps$-transitions, i.e. with symbols from $\epsalphabet$. 
Note that every such WFSA can be converted into a weakly equivalent WFSA without $\eps$-transitions.
} that computes the cumulative weights of individual input or output strings \citep{Mohri2008}.
Given a WFST $\wfst = \left(\alphabet, \oalphabet, \states, \trans, \initf, \finalf\right)$, its input projection is the automaton $\wfsa_{\mathcal{I}} = \left(\alphabet, \states, \trans_\mathcal{I}, \initf, \finalf \right)$ that computes
\begin{equation} \label{eq:input-projection}
    \wfsa_{\mathcal{I}}\left(\str\right) \defeq \sum_{\strx \in \kleene{\oalphabet}} \wfst\left(\str, \strx\right).
\end{equation}
The output projection automaton is defined analogously.
While it might not be immediately obvious that \cref{eq:input-projection} represents the computations of a WFSA, $\wfsa_{\mathcal{I}}$ can be easily constructed by ignoring the output labels on the transitions (and additively merging any transitions that become identical after the removal of the output labels).
That is:
\begin{equation}
    \trans_\mathcal{I} = \set{\edge{\stateq}{\sym}{\sum_{\edge{\stateq}{\sym:\symx}{w}{\stateq^\prime} \in \trans} w}{\stateq^\prime} \mid \stateq, \stateq^\prime \in \states, \sym \in \epsalphabet}.
\end{equation}

\paragraph{Lifting a \pfsaAcr.}
\defn{Lifting} transforms a given \pfsaAcr into a trivial WFST that implements the identity function $\str \mapsto \str$.
Concretely, given the \pfsaAcr $\wfsa = \wfsatuple$, its lifted WFST $\wfst_\wfsa = \left(\alphabet, \alphabet, \states, \trans_\wfst, \initf, \finalf\right)$ defines the transitions
\begin{equation}
    \trans_\wfst \defeq \set{\edge{\stateq}{\sym:\sym}{w}{\stateq^\prime} \mid \edge{\stateq}{\sym}{w}{\stateq^\prime} \in \trans}.
\end{equation}
It is easy to see that $\wfst_\wfsa$ computes
\begin{equation}
    \wfst_\wfsa\left(\str,\str^\prime\right) = \ind{\str = \str^\prime} \wfsaFun{\str}.
\end{equation}
Lifting is useful when one wants to compose a \pfsaAcr with a WFST, as is the case when discussing regular reducibility.

\paragraph{Inverting a WFST.}
The \defn{inverted} WFST of some WFST $\wfst = \left(\alphabet, \oalphabet, \states, \trans, \initf, \finalf\right)$ is the WSFT $\inv{\wfst} = \left(\oalphabet, \alphabet, \states, \inv{\trans}, \initf, \finalf\right)$ that maps the outputs of $\wfst$ to their inputs.
$\inv{\trans}$ is defined as
\begin{equation}
    \inv{\trans} \defeq \set{\edge{\stateq}{\symx: \sym}{w}{\stateq} \mid \edge{\stateq}{\sym: \symx}{w}{\stateq} \in \trans},
\end{equation}
i.e., it is composed of transitions from $\trans$ with their input--output labels flipped.

\section{Separation of Alphabets}\label{app:alphabets}
In our analysis, we use a different alphabet for the chain of thought ($\tapealphabet$) than for the final output ($\alphabet$) to clarify the distinction between output that encodes automata configurations vs the output that constitutes the weighted output language of that automaton.
However, we can also define a regular function for the case where $\tapealphabet=\alphabet$.
For this, we choose a specific symbol in $\alphabet$, e.g., $\sep$, whose first occurrence signifies the boundary between the chain of thought and the language output.
Then, $\regfn$ can be defined as:
\begin{align}
    \regfn(\stry\,\sep\,\strz) \defeq \strz
\end{align}
where $\stry\in\kleene{\alphabet}\backslash \{\sep\}$, and $\strz\in\kleene{\alphabet}$.
This can be easily modeled by the following simple two-state transducer:
\begin{figure}[ht!]
    \centering
    \begin{tikzpicture}[node distance = 15mm]
        \footnotesize
        \node[state, initial] (q0) [] { $0$ };
        \node[state, accepting] (q1) [right = of q0] { $1$ };
        \draw[-{Latex[length=2mm]}] (q0) edge[loop below] node{ $\symy\in\alphabet\backslash\{\sep\}:\eps$ } (q0)
        (q0) edge[align=left, bend left, above] node{ $\sep\ : \eps$ } (q1)
        (q1) edge[loop below] node{ $\symz\in\alphabet : \symz$ } (q1);
    \end{tikzpicture}
    \caption{A simple WFST $\transducer$ implementing $\regfn$.} \label{fig:wfst-phi2}
\end{figure}

Note that in this case, the chain of thought has to be confined in its entirety as one string, with the entire final output coming afterward.
This simply means that the simulated Turing machine first performs all the computations it needs on the working tape, and then writes the final string to the output tape in real time.
The proof of \cref{thm:cot-rnn-ptm} can easily be adapted to work in this setting by only tagging the $\eps$ symbols in $\tapealphabet$, e.g. by symbols $a$ and $b$ and defining $\regfn$ to remove them until the occurrence of another symbol, e.g. $c$.

\section{Proofs: Regular Reducibility} \label{sec:rr-proofs}
\subsection{Nondeterminism in \pfsaAcr{}s}
\rrDPFSA*
\begin{proof}
    Given the \pfsaAcr $\wfsa = \wfsatuple$, we want to show the existence of a \emph{deterministic} \pfsaAcr $\wfsa^\prime$ over the alphabet $\alphabet\times \states$ that is regularly reducible to $\wfsa$.
    We construct the deterministic \pfsaAcr $\wfsa^\prime = \left(\alphabet \times \states, \alphabet \times \states, \trans^\prime, \initf^\prime, \finalf^\prime\right)$ as follows.
    \begin{subequations}
        \begin{align}
            \trans^\prime                            & \defeq \set{\edge{\left(\sym, \stateq\right)}{\left(\sym^\prime, \stateq^\prime\right)}{w}{\left(\sym^\prime, \stateq^\prime\right)} \mid \edge{\stateq}{\sym^\prime}{w}{\stateq^\prime} \in \trans, \sym \in \alphabet} \\
            \initf^\prime\left(\sym, \stateq\right)  & \defeq \frac{\initf\left(\stateq\right)}{\nsymbols}, \quad \forall \sym\in\alphabet, \stateq\in\states                                                                               \\
            \finalf^\prime\left(\sym, \stateq\right) & \defeq \finalf\left(\stateq\right), \quad \forall \sym\in\alphabet, \stateq\in\states
        \end{align}
    \end{subequations}

    We first prove that $\wfsa^\prime$ is probabilistic and deterministic.
    \begin{itemize}
        \item \textbf{Probabilistic}:
              We compute
              \begin{subequations}
                  \begin{align}
                      \sum_{\left(\sym, \stateq\right) \in \alphabet \times \states} \initf^\prime\left(\sym, \stateq\right)
                       & = \sum_{\left(\sym, \stateq\right) \in \alphabet \times \states} \frac{\initf\left(\stateq\right)}{\nsymbols} \\
                       & = \sum_{\sym \in \alphabet} \sum_{\stateq \in \states} \frac{\initf\left(\stateq\right)}{\nsymbols}           \\
                       & = \sum_{\stateq \in \states} \initf\left(\stateq\right)                                                       \\
                       & = 1
                  \end{align}
              \end{subequations}
              and, for any $\left(\sym, \stateq\right) \in \alphabet \times \states$,
              \begin{subequations}
                  \begin{align}
                      \sum_{\edge{\left(\sym, \stateq\right)}{\left(\sym^\prime, \stateq^\prime\right)}{w}{\left(\sym^\prime, \stateq^\prime\right)} \in \trans^\prime} w + \finalf^\prime\left(\sym, \stateq\right)
                       & = \sum_{\edge{\left(\sym, \stateq\right)}{\left(\sym^\prime, \stateq^\prime\right)}{w}{\left(\sym^\prime, \stateq^\prime\right)} \in \trans^\prime} w + \finalf\left(\stateq\right) \\
                       & = \sum_{\edge{\stateq}{w}{\sym^\prime}{\sym^\prime} \in \trans} w + \finalf\left(\stateq\right)                                                                   \\
                       & = 1
                  \end{align}
              \end{subequations}
        \item \textbf{Deterministic}:
              Let $\left(\sym, \stateq\right) \in \alphabet \times \states$ be a state of $\wfsa^\prime$ and $\left(\sym^\prime, \stateq^\prime\right)$ a symbol in its alphabet.
              By definition of $\trans^\prime$, $\left(\sym^\prime, \stateq^\prime\right)$ uniquely determines the target state, which is identical to the symbol---$\left(\sym^\prime, \stateq^\prime\right)$.
    \end{itemize}

    We now show that $\wfsa^\prime$ is indeed regularly reducible to $\wfsa$.
    The alphabet $\alphabet \times \states$ clearly satisfies the $\alphabet$-augmentation condition.
    We define the regular function $\regfn\left(\sym, \stateq\right) \defeq \sym$, an instance of the general function described in \cref{sec:rr}.
    We will show that $\wfsa^\prime$ is weakly equivalent to $\wfsa \circ \regfn$, or, equivalently, that $\wfsa^\prime \circ \inv{\regfn}$ is weakly equivalent to $\wfsa$.

    Since $\wfsa^\prime$ is an \emph{acceptor} of strings in $\kleene{\left(\alphabet \times \states\right)}$, we first transform it into weighted finite-state \emph{transducer} $\wfst_\wfsa$ implementing the mapping\footnote{See \cref{sec:transducers} for a definition of weighted finite-state transducers and the operations on them.}
    \begin{equation}
        \wfst_{\wfsa^\prime}\left(\strx, \strx^\prime\right) \defeq \ind{\strx = \strx^\prime} \wfsaFun{\strx}
    \end{equation}
    for $\strx, \strx^\prime \in \kleene{\outalphabet}$.
    This is a simple WFST with transitions $\trans_{\wfst_{\wfsa^\prime}} \defeq \set{\edge{\stateq}{\left(\sym, \stateq\right) : \left(\sym, \stateq\right)}{w}{\stateq^\prime} \mid \edge{\stateq}{\left(\sym, \stateq\right)}{w}{\stateq^\prime} \in \trans^\prime}$.
    $\wfst_{\wfsa^\prime}$ can then be composed with the weighed version of the \fst implementing $\inv{\regfn}$, $\wfst_{\inv{\regfn}}$:
    \begin{equation}
        \wfst_{\inv{\regfn}} \left(\str, \strx\right) \defeq \ind{\strx \in \inv{\regfn}\left(\str\right)}.
    \end{equation}
    $\wfst_{\wfsa^\prime} \circ \wfst_{\inv{\regfn}}$ then computes, for $\str \in \kleene{\alphabet}, \strx^\prime \in \kleene{\outalphabet}$,
    \begin{subequations}
        \begin{align}
            \left(\wfst_{\wfsa^\prime} \circ \wfst_{\inv{\regfn}}\right) \left(\str, \strx^\prime\right)
             & = \sum_{\strx \in \kleene{\outalphabet}} \wfst_{\inv{\regfn}}\left(\str, \strx\right) \cdot \wfst_{\wfsa^\prime}\left(\strx, \strx^\prime\right) \justification{Eq. 15.4 in \citet{10.7551/mitpress/3007.003.0017}.} \\
             & = \sum_{\strx \in \kleene{\outalphabet}} \ind{\strx \in \inv{\regfn}\left(\str\right)} \cdot \wfst_{\wfsa^\prime}\left(\strx, \strx^\prime\right)                                                                    \\
             & = \sum_{\strx \in \inv{\regfn}\left(\str\right)} \wfst_{\wfsa^\prime}\left(\strx, \strx^\prime\right)                                                                                                                \\
             & = \sum_{\strx \in \inv{\regfn}\left(\str\right)} \ind{\strx = \strx^\prime} \wfsaFun{\strx}                                                                                                                    \\
             & = \ind{\strx^\prime \in \inv{\regfn}\left(\str\right)} \wfsaFun{\strx^\prime}
        \end{align}
    \end{subequations}
    We then turn $\wfst_{\wfsa^\prime} \circ \wfst_{\inv{\regfn}}$ into a \pfsaAcr in the standard way by projecting the WFST onto the transition input labels \citep{Mohri2008}.
    This results in a \pfsaAcr that assigns the string $\str \in \kleene{\alphabet}$ the probability that equals the sum of all possible mappings of $\str$ to any $\strx^\prime \in \kleene{\outalphabet}$:
    \begin{subequations}
        \begin{align}
            \left(\wfst_{\wfsa^\prime} \circ \wfst_{\inv{\regfn}}\right) \left(\str\right)
             & \defeq \sum_{\strx^\prime \in \kleene{\outalphabet}} \left(\wfst_{\wfsa^\prime} \circ \wfst_{\inv{\regfn}}\right) \left(\str, \strx^\prime\right) \\
             & = \sum_{\strx^\prime \in \kleene{\outalphabet}} \ind{\strx^\prime \in \inv{\regfn}\left(\str\right)} \wfsaFun{\strx^\prime}                       \\
             & = \sum_{\strx^\prime \in \inv{\regfn} \left(\str\right)} \wfsaFun{\strx^\prime}                                                             \\
             & = \wfsa^\prime\left(\inv{\regfn} \left(\str\right)\right)                                                                             \\
             & = \left(\wfsa^\prime \circ \inv{\regfn}\right) \left(\str\right)                                                                      \\
             & = \wfsa \left(\str\right)
        \end{align}
    \end{subequations}
    This shows that $\wfsa^\prime \circ \inv{\regfn}$ is weakly equivalent to $\wfsa$, which finishes the proof.
\end{proof}

\subsection{Nondeterminism in \ppdaAcr{}s}

\begin{theorem} \label{thm:rr-ppda}
    Let $\pushdown = \wpdatuple$ be a \ppdaAcr. Then, there exists a \emph{deterministic} \ppdaAcr $\pushdown^\prime$ over the alphabet $\left(\epsalphabet\right) \times \states \times \stackalphabet$ with the state space $\left(\epsalphabet\right) \times \states \times \stackalphabet$ that is regularly reducible to $\pushdown$.
\end{theorem}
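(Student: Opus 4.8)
The plan is to mirror the construction used in the proof of \cref{thm:rr-pfsa}, lifting the ``annotate each output symbol with the choice that was made'' idea from states to state/stack-action pairs. Given $\pushdown = \wpdatuple$, I would build a deterministic \ppdaAcr $\pushdown'$ whose state space and alphabet are both $\outalphabet \defeq \epsalphabet \times \states \times \stackalphabet$, whose stack alphabet is again $\stackalphabet$, and whose initial configuration places the single state $(\eps, \qinit, \NT{S})$ above the initial stack symbol $\NT{S}$. For every $\pushdown$-transition $\edge{(\stateq, \stacksym)}{\sym}{w}{(\stateq', \stackseq)}$ and every predecessor label $(\sym_0, \stateq, \stacksym_0) \in \outalphabet$ whose middle coordinate is $\stateq$, I add to $\pushdown'$ the transition
\[
  \edge{\bigl((\sym_0, \stateq, \stacksym_0),\, \stacksym\bigr)}{(\sym, \stateq', \stacksym)}{w}{\bigl((\sym, \stateq', \stacksym),\, \stackseq\bigr)},
\]
so that, exactly as in the regular case, the emitted symbol $(\sym, \stateq', \stacksym)$ coincides with the next state of $\pushdown'$ and records the scanned symbol, the state $\pushdown$ moves into, and the top-of-stack it consumed. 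The accepting configurations are those whose state-component is $\qfinal$ and whose stack is empty, and the regular function is the single-state transducer $\regfn(\sym, \stateq', \stacksym) \defeq \sym$ deleting the two bookkeeping coordinates.

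Second, I would check that $\pushdown'$ is a well-formed probabilistic \emph{and} deterministic \ppdaAcr. Probabilisticity is immediate: weights are copied verbatim, and for a fixed $\pushdown'$-state $(\sym_0, \stateq, \stacksym_0)$ and stack top $\stacksym$, the outgoing weights are exactly those of the $\pushdown$-transitions out of $(\stateq, \stacksym)$, which sum to $1$ by \cref{def:stochastic-wpda}. Determinism is where the annotation pays off: reading an input symbol $(\sym, \stateq', \stacksym)$ from state $(\sym_0, \stateq, \stacksym_0)$ with stack top $\stacksym$ pins down the source $(\stateq, \stacksym)$, the scanned symbol, and the target state, while the next state is forced to equal the symbol itself, so at most one transition is available and no scanning/$\eps$ conflict can arise.

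Third, to establish regular reducibility I would show $\pushdown' \circ \inv{\regfn}$ is weakly equivalent to $\pushdown$, i.e.\ $\sum_{\strx \in \inv{\regfn}(\str)} \pushdown'(\strx) = \pushdown(\str)$ for all $\str \in \kleene{\alphabet}$, by exhibiting a weight-preserving bijection between accepting runs of $\pushdown$ with yield $\str$ and accepting runs of $\pushdown'$ whose output projects under $\regfn$ to $\str$. In one direction, each $\pushdown$-run induces the annotation string $(\sym_i, \stateq_i, \stacksym_{i-1})_i$; in the other, since $\pushdown'$ is deterministic, every such string is generated by a unique run, which reads back step by step into a legal $\pushdown$-run because each symbol determines the transition taken. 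The weights agree factor-by-factor, and summing over the preimage of $\str$ recovers $\pushdown(\str)$ exactly as in the final display of the proof of \cref{thm:rr-pfsa}.

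The main obstacle is the stack, which has no analogue in the finite-state argument. A \ppdaAcr transition carries a pushed sequence $\stackseq \in \kleene{\stackalphabet}$ that cannot be read off a single symbol of the finite alphabet $\outalphabet$, and a pop exposes a previously buried stack symbol not recorded in the finite control; left unaddressed, two transitions with the same $(\stateq, \stacksym, \sym, \stateq')$ but different $\stackseq$ would destroy determinism. To keep the alphabet within $\epsalphabet \times \states \times \stackalphabet$ while reconstructing each transition uniquely, I would first put $\pushdown$ into a normal form in which $\stackseq$ is determined by the quadruple $(\stateq, \stacksym, \sym, \stateq')$---for instance, each transition either pops the top or replaces it by a single symbol---so that recording only the consumed symbol $\stacksym$ suffices; the buried-top issue then resolves itself because, after a pop, the subsequent transition reads the newly exposed top directly from $\pushdown'$'s stack. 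Verifying that this normalization is without loss of generality for the induced semimeasure, and that it leaves determinism of $\pushdown'$ intact, is the delicate part of the argument.
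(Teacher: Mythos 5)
Your construction of $\pushdown'$ is the one the paper uses: states and symbols are triples recording the emitted symbol, the target $\pushdown$-state, and the consumed stack top, the emitted symbol coincides with the next control state, and $\regfn$ is the single-state projection onto the first coordinate. The probabilisticity and determinism checks also match. Where you diverge is the weak-equivalence step: you argue via a weight-preserving bijection between accepting runs of $\pushdown$ and accepting runs of $\pushdown'$ whose outputs project to the same string, whereas the paper converts $\pushdown'$ to a weighted CFG and composes it with the \fst{} implementing $\inv{\regfn}$ using the Bar-Hillel--style intersection of \citet{pasti-etal-2023-intersection}, then marginalizes. Your route is more elementary and makes the ``sum over paths'' in \cref{eq:cot-summation} completely explicit; the paper's buys reuse of off-the-shelf closure properties and avoids having to argue injectivity of the run-to-annotation map by hand.

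Your final paragraph identifies a real issue that the paper's proof passes over in silence: if $\trans$ contains two transitions agreeing on $(\stateq, \stacksym, \sym, \stateq')$ but pushing different $\stackseq$, then $\pushdown'$ as constructed is \emph{not} deterministic (and, symmetrically, the run-to-annotation map is not injective, so your bijection would also fail). The paper's determinism argument simply asserts that the input symbol ``uniquely determines the next configuration,'' which is only true under exactly the normal form you propose (e.g., every transition pops or replaces the top by a single symbol, so that $\stackseq$ is a function of the quadruple). Your remedy is sound and standard, but note two loose ends you should tie up if you carry it out: (i) the normalization generally enlarges $\states$ and $\stackalphabet$, so the alphabet $\epsalphabet \times \states \times \stackalphabet$ in the theorem statement must be read relative to the normalized machine; and (ii) you should verify the normalization preserves the induced semimeasure, which is routine but needs to be said. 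With that patch in place your argument is complete, and arguably more careful than the paper's own.
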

\begin{proof}
    Given a \ppdaAcr{} $\pushdown=\wpdatuple$, we prove the existence of a \emph{deterministic} \ppdaAcr{} $\pushdown^\prime$ over the alphabet $\epsalphabet \times \states \times \stackalphabet$ that is regularly reducible to $\pushdown$.

    We construct the deterministic \ppdaAcr{} $\pushdown^\prime = \left(\epsalphabet \times \states \times \stackalphabet,\epsalphabet \times \states \times \stackalphabet, \stackalphabet, \trans^\prime, \initialconfig, \finalconfig\right)$ with the set of transitions
    \begin{align}
        \trans^\prime \defeq \{\edge{(\sym, \stateq, \stacksym), \stacksym^\prime}{(\sym^\prime, \stateq^\prime, \stacksym^\prime)}{w}{(\sym^\prime, \stateq^\prime, \stacksym^\prime), \stackseq} \mid & \edge{\stateq, \stacksym^\prime}{\sym^\prime}{w}{\stateq^\prime, \stackseq} \in \trans, \nonumber \\
        & \sym \in \epsalphabet, \stacksym \in \stackalphabet\}.
    \end{align}

    \begin{itemize}
        \item \textbf{Probabilistic:}
              For any $\left(\sym, \stateq, \stacksym\right) \in \epsalphabet \times \states \times \stackalphabet$ and $\stacksym^\prime \in \stackalphabet$,
              \begin{subequations}
                  \begin{align}
                      \sum_{\edge{(\sym, \stateq, \stacksym), \stacksym^\prime}{(\sym^\prime, \stateq^\prime, \stacksym^\prime)}{w}{(\sym^\prime, \stateq^\prime, \stacksym^\prime), \stackseq}} w & = \sum_{\edge{\stateq, \stacksym^\prime}{\sym^\prime}{w}{\stateq^\prime, \stackseq}} w \\
                                                                                                                                     & = 1
                  \end{align}
              \end{subequations}
        \item \textbf{Deterministic:} Just like in the finite-state case, let $\left(\sym, \stateq,\stacksym\right) \in \epsalphabet \times \states \times \stackalphabet$ be a state, $\stacksym^\prime \in \stackalphabet$ a stack symbol and $\left(\sym^\prime, \stateq^\prime,\stacksym^\prime\right) \in \epsalphabet \times \states \times \stackalphabet$ an input symbol.
              Then $\left(\sym^\prime, \stateq^\prime,\stacksym^\prime\right)$ uniquely determines the next configuration.
    \end{itemize}
    We now show that $\pushdown^\prime$ is regularly reducible to $\pushdown$.
    More precisely, we define the regular function $\regfn(\sym,\stateq,\stacksym)\defeq \sym$ and show that $\pushdown^\prime \circ \inv{\regfn}$ is weakly equivalent to $\pushdown$.

    Let $\grammar^\prime$ be a \cfg{}\footnote{Similar to the notation used for \ppdaAcr{}s, we use $\grammar\left(\str\right)$ to denote the weight of the string $\str$ under the grammar $\grammar$.} that is equivalent to $\pushdown^\prime$ \cite[see][Section 6.3.2 for the construction]{hopcroft01}.
    Using grammars, rather than pushdown automata, will simplify the proof as there are well-known constructions (e.g., \citet{barhillel-etal-1961}, \citet{pasti-etal-2023-intersection}) for composing \cfg{}s and \fst{}s.
    We therefore prove that  $\grammar^\prime \circ \inv{\regfn}$ is weakly equivalent to $\pushdown$.

    Recall that $\inv{\regfn}$ can be implemented as an \fst{} $\transducer_\inv{\regfn}$ defined as
    \begin{equation}
        \transducer_\inv{\regfn} \left(\str, \strx\right) \defeq \ind{\strx \in \inv{\regfn}\left(\str\right)}.
    \end{equation}

    Adapting Corollary 1 from \citet{pasti-etal-2023-intersection} to the \fst{} case, we get
    \begin{subequations}
        \begin{align}
            \left(\grammar^\prime \circ \transducer_\inv{\regfn}\right) \left(\str, \strx\right) & = \grammar^\prime \left(\strx\right) \cdot  \transducer_\inv{\regfn} \left(\str, \strx\right) \\
                                                                                                 & = \grammar^\prime \left(\strx\right) \cdot \ind{\strx \in \inv{\regfn}\left(\str\right)}.
        \end{align}
    \end{subequations}

    Then, just like in the regular case, we compute $\left(\grammar^\prime \circ \transducer_\inv{\regfn}\right) \left(\str\right)$ by summing over all possible $\strx \in \kleene{\alphabet}$ values:
    \begin{subequations}
        \begin{align}
            \left(\grammar^\prime \circ \transducer_\inv{\regfn}\right) \left(\str\right) & \defeq \sum_{\strx \in \kleene{\alphabet}} \left(\grammar^\prime \circ \transducer_\inv{\regfn}\right) \left(\str, \strx\right) \\
                                                                                          & = \sum_{\strx \in \kleene{\alphabet}} \grammar^\prime \left(\strx\right) \ind{\strx \in \inv{\regfn}\left(\str\right)}          \\
                                                                                          & = \sum_{\strx \in \inv{\regfn}\left(\str\right)} \grammar^\prime \left(\strx\right)                                             \\
                                                                                          & = \sum_{\strx \in \inv{\regfn}\left(\str\right)} \pushdown^\prime \left(\strx\right)                                            \\
                                                                                          & = \pushdown^\prime\left(\inv{\regfn}\left(\str\right)\right)                                                                    \\
                                                                                          & = \left(\pushdown^\prime \circ \inv{\regfn}\right)\left(\str\right)                                                             \\
                                                                                          & = \pushdown\left(\str\right).
        \end{align}
    \end{subequations}
    This shows that $\grammar^\prime \circ \transducer_\inv{\regfn}$ is weakly equivalent to $\pushdown$, thus $\pushdown^\prime \circ \transducer_\inv{\regfn}$ is weakly equivalent to $\pushdown$.
    This concludes the proof.
\end{proof}

\section{Proofs: Representational Capacity of Neural LMs} \label{app:proofs}

\subsection{Finite-state Language Models}
\minskyRNN*
\begin{proof}
    Let $\wfsa = \wfsatuple$ be a \pfsaAcr.
    We divide the definition of a weakly equivalent Elman LM $\rnn = \left(\alphabet \times \states, \hiddDim, \recMtx, \inMtx, \biasVech, \initstate\right)$ with the output matrix $\outMtx$ into multiple steps.\footnote{Notice that $\rnn$ is CoT-augmented by construction, as it works over the alphabet $\outalphabet \defeq \alphabet \times \states$. $\outalphabet$ also satisfies the $\alphabet$-augmentation condition.}

    \textbf{Note:}
    The paper assumes a $\bos$-padding of the input strings.
    Since CoT-augmented LMs work over a potentially larger alphabet, the padding symbol has to be changed accordingly.
    Particularly, the CoT-augmented RNN will work over the alphabet $\alphabet \times \states$.
    For simplicity, assume that any input string is padded by the symbol $\left(\bos, \stateq_0\right)$ for some arbitrary (but fixed) state $\stateq_0 \in \states$.

    \paragraph{Hidden states.}
    Let $\hiddDim = |\bosalphabet_\eps| \nstates$.
    We will represent the input symbols with the one-hot representation function $\onehot{\cdot, \cdot}$ that computes $\onehot{\bossym, \stateq} \in \set{0, 1}^{|\bosalphabet_\eps| \nstates}$:\footnote{Throughout the paper, we index vectors and matrices directly with set elements rather than integer values, meaning each index corresponds to exactly one element from the given set.\looseness=-1}
    \begin{equation} \label{eq:one-hot}
        \onehot{\bossym, \stateq}_{\bossym, \stateq} \defeq 1
    \end{equation}
    while other entries of the vector are zero.\footnote{If any of the two arguments are empty, its corresponding component is also zero.}
    We can define $\initstate \defeq \zero_\hiddDim$.

    \paragraph{Recurrence.}
    Conveniently, the CoT-augmented RNN will store the current \pfsaAcr state in its output ``symbol''.
    Therefore, its next hidden state $\hiddStatet$ will not, in fact, depend on the previous one ($\hiddStatetminus$); $\hiddStatetminus$ will only be used to compute the next-symbol--state output distribution through \cref{eq:repr-lm}.
    The RNN therefore only has to appropriately incorporate the input information.
    With this in mind, we define the following Elman RNN parameters $\recMtx, \inMtx$, and $\biasVech$:
    \begin{subequations}
        \begin{alignat}{2}
            \recMtx   & \defeq \mO_\hiddDim                 &  & \recMtx \in \R^{\hiddDim \times \hiddDim} \\
            \inMtx    & \defeq \mI_\hiddDim                 &  & \inMtx \in \R^{\hiddDim \times \hiddDim}  \\
            \biasVech & \defeq \zero_\hiddDim \qquad \qquad &  & \biasVech \in \R^\hiddDim
        \end{alignat}
    \end{subequations}
    where $\mO_\hiddDim$ is a $\hiddDim\times\hiddDim$-dimensional matrix of zeros, $\mI_\hiddDim$ is the $\hiddDim$-dimensional identity matrix, and $\zero_\hiddDim$ is a $\hiddDim$-dimensional vector of zeros.
    Then, we define the output matrix $\outMtx \in \R^{|\bosalphabet_\eps| \nstates \times \hiddDim}$ as
    \begin{subequations}
        \begin{alignat}{2} \label{eq:rnn-outmtx}
            \eOutMtx_{\left(\sym^\prime, \stateq^\prime\right), \left(\sym, \stateq\right)}  & \defeq \pLM\left(\stateq^\prime, \sym^\prime \mid \stateq\right) \qquad \qquad &  & \stateq, \stateq^\prime \in \states, \quad \sym \in \alphabet_\eps, \sym^\prime \in \alphabet_\eps \\
            \eOutMtx_{\left(\eps, \stateq\right), \left(\bos, \stateq_0\right)} & \defeq \initfFun{\stateq}                                         &  & \stateq \in \states \\
            \eOutMtx_{\left(\eos, \stateq\right), \left(\sym, \stateq\right)}  & \defeq \finalfFun{\stateq} \qquad \qquad &  & \stateq \in \states, \quad \sym \in \alphabet_\eps.
        \end{alignat}
    \end{subequations}

    \paragraph{Computation of string probabilities.}
    We now show that $\rnn$ with $\outMtx$ defines the same conditional distributions as $\wfsa$, meaning that it implicitly defines the same probability distribution over strings.
    As mentioned above, we assume that the input string is prefixed with the $\left(\bos, \stateq_0\right)$ symbol.
  Let the previously generated output symbol $\in \outalphabet$ be $\left(\bossym, \stateq\right)$ (in the first step, the ``generated'' pair is $\left(\bos, \stateq_0\right)$).
    We get that
    \begin{subequations}
        \begin{align}
            \hiddState_\tstep
             & = \heavisideFun{\recMtx \hiddState_{\tstep - 1} + \inMtx \onehot{\bossym, \stateq} + \biasVech}           \\
             & = \heavisideFun{\mO_\hiddDim \hiddState_{\tstep - 1} + \inMtx \onehot{\bossym, \stateq} + \zero_\hiddDim} \\
             & = \heavisideFun{\inMtx \onehot{\bossym, \stateq}}                                                         \\
             & = \onehot{\bossym, \stateq}
        \end{align}
    \end{subequations}
    The one-hot encoding $\onehot{\bossym, \stateq}$ is then used to ``index'' the appropriate column of the output matrix $\outMtx$ as $\outMtx \hiddStatet$, which contains the probabilities of the \emph{next} symbol--state pairs defined by the \pfsaAcr, as per \cref{eq:rnn-outmtx}:
    \begin{equation}
        \left(\outMtx \hiddStatet\right)_{\left(\stateq^\prime, \eossym^\prime\right)} = \eOutMtx_{\left(\eossym^\prime, \stateq^\prime\right), \left(\bossym, \stateq\right)} = \begin{cases}
            \initfFun{\stateq^\prime}                                 & \ifcondition \bossym = \bos, \stateq = \stateq_0, \eossym^\prime = \eps \\
            \pLM\left(\stateq^\prime, \eossym^\prime \mid \stateq\right) & \otherwisecondition                                         \\
        \end{cases}.
    \end{equation}
    Passing $\outMtx \hiddStatet$ through the normalization function, the next state and symbol are sampled and passed as input to the RNN, repeating the update step.
    Since the Elman RNN following these dynamics by constructions generates paths (sequences of states in the output symbols) with the same probabilities as the input automaton, it enumerates all accepting paths of any string $\str \in \kleene{\alphabet}$ with the same probabilities as the original \pfsaAcr.
    Applying the transformation $\regfn$ to the produced outputs and summing over sequences that yield the same string will thus result in strings sampled from the \pfsaAcr.
    This finishes the proof.
\end{proof}

\minskyRNNBack*
\begin{proof}
    Let $\pLMcot$ be a CoT Heaviside Elman RNN LM over the alphabet $\alphabet$ with the regular function $\regfn\colon \kleene{\outalphabet} \to \kleene{\alphabet}$ for an $\alphabet$-augmented alphabet $\outalphabet$.
    Let $\pLM$ be the underlying RNN LM over $\kleene{\outalphabet}$, that is,
    \begin{align*}
        \pLMcot & = \pLM \circ \inv{\regfn} &  & \text{and} &  & \pLM = \pLMcot \circ \regfn
    \end{align*}
    by \cref{def:cot-lm}.
    We want to show the existence of a possibly non-deterministic \pfsaAcr $\cotMacro{\wfsa}$ that is weakly equivalent to $\pLMcot$.
    We write $\pLM\weakeq\pLM^\prime$ to mean $\pLM$ is weakly equivalent to $\pLM^\prime$ (cf. \cref{def:lm-equivalence}).

    By \citet[Lem. 4.1]{svete-cotterell-2023-recurrent}, there exists a \dpfsaAcr{} $\wfsa = \left(\outalphabet, \states, \trans, \initf, \finalf\right)$ over the augmented alphabet $\outalphabet$ weakly equivalent to $\pLM$.
    This means that
    \begin{equation}
        \wfsa \weakeq \pLM \weakeq \pLMcot \circ \regfn
    \end{equation}
    and thus
    \begin{equation}
        \pLMcot \weakeq \wfsa \circ \inv{\regfn}.
    \end{equation}
    Just like in the proof of \cref{thm:rr-pfsa}, $\wfsa \circ \inv{\regfn}$ can be computed as a composition of (lifted) WFSTs that is then projected onto the input component, resulting in a \pfsaAcr weakly equivalent to $\pLMcot$.
    This finishes the proof.
\end{proof}

\transformerMinsky*

Before we proceed to the full proof of \cref{thm:pfsa-transformer}, we first show the following simple but useful lemma.
\begin{lemma} \label{lem:layer-identity}
    Define the following linear transformation functions:
    \begin{subequations}
        \begin{align}
            \vTransf\left(\vx\right) & \defeq \zero, \\
            \oTransf\left(\vx\right) & \defeq \zero
        \end{align}
    \end{subequations}
    for all $\vx\in\R^\hiddDim$.
    A transformer layer $\tflayer$ with the parameters $\vTransf$ and $\oTransf$ and residual connections implements the identity function irrespective of the parameters $\qTransf, \kTransf$, and $\tfscorefun$:
    \begin{equation}
        \tflayer\left(\vx\right) = \vx
    \end{equation}
    for all $\vx \in \R^\hiddDim$.
\end{lemma}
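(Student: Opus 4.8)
The plan is to push an arbitrary input through the two residual stages of the transformer layer (\cref{eq:attn-block-1,eq:attn-block-2}) and observe that each non-residual contribution vanishes identically, so that every residual connection passes its input through unchanged. Since a transformer layer acts on a sequence of representations, I read the claimed equation $\tflayer\left(\vx\right) = \vx$ as the position-wise statement that, for any input sequence, each output representation $\vz_\tstep$ equals the corresponding input $\vx_\tstep$; it therefore suffices to fix an arbitrary position $\tstep$ and track $\vx_\tstep$.

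First I would use the hypothesis $\vTransf\left(\vx\right) = \vzero$ for all $\vx$ to conclude that the value matrix $\mV_\tstep$, whose rows are $\vTransf\left(\vx_1\right)^\top, \ldots, \vTransf\left(\vx_\tstep\right)^\top$, consists entirely of zero rows, i.e.\ $\vv_\idxn = \vzero$ for every $\idxn$. Then, by the definition of the attention mechanism (\cref{eq:attention-sum}), $\attn\left(\vq_\tstep, \mK_\tstep, \mV_\tstep\right) = \sum_{\idxn = 1}^{\tstep} \evs_\idxn \vv_\idxn = \vzero$, since a weighted combination of zero vectors is the zero vector regardless of the normalized scores $\evs_\idxn$. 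This is exactly the step that makes the conclusion independent of $\qTransf$, $\kTransf$, and $\tfscorefun$: those parameters enter only through the scores $\evs_\idxn$, and once the value vectors are all zero the scores can no longer influence the attention output. Substituting into the first residual connection (\cref{eq:attn-block-1}) gives $\va_\tstep = \vzero + \vx_\tstep = \vx_\tstep$.

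Second, I would apply the hypothesis $\oTransf\left(\vx\right) = \vzero$ to the second residual connection (\cref{eq:attn-block-2}), obtaining $\vz_\tstep = \oTransf\left(\va_\tstep\right) + \va_\tstep = \vzero + \vx_\tstep = \vx_\tstep$. As the position $\tstep \in \NTo{\finaltstep}$ was arbitrary, the layer reproduces its input sequence, which is the asserted identity. There is no genuine obstacle in this argument; the only point that warrants explicit mention is the annihilation of the attention output by the zero value vectors irrespective of the attention weights, since that observation is precisely what decouples the result from the query, key, and scoring parameters.
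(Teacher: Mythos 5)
Your proof is correct and follows essentially the same route as the paper's: the zero value transformation annihilates the attention sum regardless of the scores, and both residual connections then pass the input through unchanged since $\oTransf$ also contributes zero. The extra care you take in reading the identity position-wise is a harmless elaboration of what the paper does implicitly.
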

\begin{proof}
    By definition of a transformer layer (cf. \cref{eq:transf-layer,eq:attn-block-1,eq:attn-block-2}), $\tflayer$ computes
    \begin{subequations}
        \begin{alignat}{2}
            \va                      & = \attn\left(\vq, \mK, \mV\right) + \tflayerinputsy = \sum_{\idxn = 1}^{N} \evs_\idxn\vv_\idxn  + \tflayerinputsy = \sum_{\idxn = 1}^{N} \evs_\idxn\zero  + \tflayerinputsy = \tflayerinputsy                       \\
            \tflayer\left(\vx\right) & = \oTransf\left(\va\right) + \va = \zero + \va = \va  = \vx                                                                                                                                                       .
        \end{alignat}
    \end{subequations}
\end{proof}

This allows us to show \cref{thm:pfsa-transformer}.
\begin{proof}
    Let $\wfsa = \wfsatuple$ be a \pfsaAcr.
    Like the CoT-augmented RNN in \cref{thm:pfsa-rnn}, the CoT-augmented transformer LM will work over the alphabet $\outalphabet \defeq \bosalphabet_\eps \times \states$.
    It will start by generating a random initial state according to the initial-state distribution $\initf$ upon reading the designated padding symbol $\left(\bos, \stateq_0\right)$.
    Then, at step $\tstep$ of generation, it will generate the next symbol--state pair by sampling from the next-transition distribution $\pLM\left(\stateq_\tstep, \bossym_\tstep\mid \stateq_{\tstep - 1}\right)$ given the state stored in the previously generated output tuple.

    More formally, define $\bos^\prime \defeq \left(\bos, \stateq_0\right)$ for some arbitrary but fixed $\stateq_0 \in \states$.
    Further, define the static representation function
    \begin{equation}
        \staticRepr\left(\left(\bossym, \stateq\right), \tstep\right) \defeq \onehot{\bossym, \stateq} \in \set{0, 1}^{|\bosalphabet_\eps| \nstates}
    \end{equation}
    as in \cref{eq:one-hot}.
    Let $\tflayer$ be the identity transformer layer from \Cref{lem:layer-identity} and $\tf$ the transformer with the single layer $\tflayer$.
    Then, by \Cref{lem:layer-identity},
    \begin{equation}
        \tf\left(\staticRepr\right)\left(\left(\bos, \stateq_0\right), \left(\sym_1, \stateq_1\right), \ldots, \left(\sym_{\tstep - 1}, \stateq_{\tstep - 1}\right)\right) = \left(\onehot{\bos, \stateq_0}^\top; \onehot{\sym_1, \stateq_1}^\top; \cdots; \onehot{\sym_{\tstep - 1}, \stateq_{\tstep - 1}}^\top\right)
    \end{equation}
    for any $\tstep$.
    Defining the transformation $\fTransf\left(\vx\right) \defeq \vx$, it therefore holds that
    \begin{equation}
        \tfencfun\left(\left(\bos, \stateq_0\right), \left(\sym_1, \stateq_1\right), \ldots, \left(\sym_{\tstep - 1}, \stateq_{\tstep - 1}\right)\right) = \fTransf\left(\vx_{\tstep-1}^1\right) = \onehot{\sym_{\tstep - 1}, \stateq_{\tstep - 1}}.
    \end{equation}
    In words, $\enc\left(\strlt\right)$ contains the current symbol--state pair of the \pfsaAcr.
    The CoT-augmented transformer LM can therefore sample the next symbol--state pair from the conditional distribution defined by the \pfsaAcr by setting the values of the output matrix $\outMtx$ as in \cref{thm:rnn-pfsa}:
    \begin{subequations}
        \begin{alignat}{2} \label{eq:transformer-outmtx}
            \eOutMtx_{\left(\sym^\prime, \stateq^\prime\right), \left(\sym, \stateq\right)}  & \defeq \pLM\left(\stateq^\prime, \sym^\prime \mid \stateq\right) \qquad \qquad &  & \stateq, \stateq^\prime \in \states, \quad \sym, \sym^\prime \in \alphabet \\
            \eOutMtx_{\left(\eps, \stateq^\prime\right), \left(\bos, \stateq_0\right)} & \defeq \initfFun{\stateq}                                         &  & \stateq \in \states \\
            \eOutMtx_{\left(\eos,\stateq\right),\left(\sym,\stateq\right)} & \defeq \finalfFun{\stateq} & & \stateq\in\states, \sym\in\epsalphabet.
        \end{alignat}
    \end{subequations}
    Clearly, the identical conditional probabilities defined by the CoT-augmented transformer $\tf$ result in strings over $\bosalphabet_\eps \times \states$ sampled with probabilities equal to the path probabilities from the \pfsaAcr, as in \cref{thm:rnn-pfsa}.
    Applying the transformation $\regfn$ to the produced outputs will thus result in strings sampled from the \pfsaAcr, giving us a CoT-augmented transformer LM weakly equivalent to $\wfsa$.
    Lastly, since all the representations in the model are position-invariant, the constructed transformer is of constant precision.
\end{proof}

\subsection{Probabilistic Turing Machine Language Models}
\paragraph{RNN LMs with CoT reasoning.}
Next, we show that, for every PTM, its induced LM can be encoded in an unbounded precision CoT RNN LM.

\rnnTuring*

\begin{proof}
    We rely mainly on existing results by \citet{nowak-etal-2023-representational} but use the additional information afforded by the augmented output alphabet, resulting in a simpler construction and interpretation.
    By \citet[Prop. 3.1 and Thm. 3.1]{nowak-etal-2023-representational}, the LM families induced by PTMs and probabilistic \twopda are weakly equivalent, so it suffices to show that a CoT RNN LM can encode any given \twopda.

    We first reiterate their main result together with its condition.
    \begin{definition}
        A \twopda $\pda$ is called \defn{$\alphabet$-deterministic} if, for any current state $\stateq$ any top symbol on its first stack $\stacksym$ and any output symbol from $\eosalphabet_\eps$, there is at most one transition with non-zero weight.
    \end{definition}
    \begin{theorem}
        \citet[Thm 3.2]{nowak-etal-2023-representational}
        Every $\alphabet$-deterministic probabilistic $\twopda$ can be encoded in an RNN LM that can output empty tokens $\eps$.
    \end{theorem}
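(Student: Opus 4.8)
The plan is to reconstruct the rational-valued RNN simulation of a two-stack machine in the style of \citet{siegelmann-sontag-1992}, adapted to the probabilistic, empty-token-emitting setting of \citet{nowak-etal-2023-representational}. First I would fix an encoding of the configurations of $\pda$ in the hidden state $\hiddStatet \in \Q^\hiddDim$, partitioned into blocks. The finite control state $\stateq \in \states$ is held in a one-hot block, and each stack $\sigma = s_1 \cdots s_k$ (with top $s_1$) is held as a single rational $\stackToNumeric{\sigma} = \sum_{i=1}^k \stackSymToDigit{s_i}\, b^{-i} \in [0,1)$ in a base $b > |\stackalphabet|$, assigning every symbol a nonzero digit $\stackSymToDigit{\cdot} \in \{1, \dots, |\stackalphabet|\}$ so that the empty stack ($\stackToNumeric{\cdot} = 0$) is distinguishable from any nonempty one.

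Second, I would realize the three elementary stack primitives as arithmetic gadgets expressible by an affine map followed by $\ReLU$ (or a saturated-linear) activation: reading the top symbol reduces to comparing $b\,\stackToNumeric{\sigma}$ against the digit thresholds, which such an activation can do (choosing a Cantor-style spacing of digits if a clean threshold read is needed); popping is the affine map $\stackToNumeric{\sigma} \mapsto b\,\stackToNumeric{\sigma} - \stackSymToDigit{s_1}$; and pushing $s_0$ is $\stackToNumeric{\sigma} \mapsto (\stackToNumeric{\sigma} + \stackSymToDigit{s_0})/b$. Since one transition of $\pda$ pops from and pushes onto both stacks, a constant number of such gadgets suffices per transition. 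Because an Elman recurrence is a single affine map followed by an element-wise nonlinearity, I would stage these gadgets across a fixed number of \emph{sub-steps} per simulated transition, using a small block of phase neurons to mark which sub-step is active.

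Third --- and this is where the hypothesis enters --- I would use $\alphabet$-determinism to close the loop between sampling and the deterministic recurrence. From the decoded pair (current state, top of stack $1$) the admissible transitions are in bijection with the symbols they emit, so I can set the output matrix $\outMtx$ so that the next-symbol distribution over $\eosalphabet_\eps$ reproduces exactly the transition weights $w$. Once the RNN samples an output symbol (possibly $\eps$) and feeds it back in as input, $\alphabet$-determinism guarantees the triggering transition is unique, so the staged gadgets can apply the correct stack operations and state change on the following sub-steps. The ability to emit $\eps$ is essential, since a transition may rewrite the stacks without producing an output symbol.

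Finally, I would prove weak equivalence by induction on the number of simulated transitions, exhibiting a probability-preserving bijection between accepting branches of $\pda$ with yield $\str$ and runs of the RNN whose emitted symbols, after deleting $\eps$, spell $\str$; summing over branches on both sides yields $\pLM(\str) = \pda(\str)$, i.e. $\pLM \weakeq \pda$. The main obstacle is the middle step: implementing the nonlinear, conditional stack arithmetic --- top extraction, empty-stack detection, and transition selection keyed on (state, top symbol) --- \emph{exactly} in rational arithmetic with the impoverished Elman primitives, and staging it with phase neurons so that no partial configuration is ever read before it has been fully written.
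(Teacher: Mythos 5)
Your proposal is correct and follows essentially the same route as the result relied on here: the paper does not prove this statement itself but imports it verbatim as Theorem 3.2 of \citet{nowak-etal-2023-representational}, whose proof is exactly the Siegelmann--Sontag-style construction you describe --- stacks encoded as base-$b$ rationals in the hidden state, push/pop/peek realized as affine-plus-$\ReLU$ gadgets staged over a fixed number of phase-marked sub-steps, the transition weights placed in the output matrix so that sampling over $\eosalphabet_\eps$ realizes the probabilistic choice, and $\alphabet$-determinism guaranteeing that the sampled (possibly $\eps$) symbol fed back as input uniquely identifies the transition to apply. The only nuance worth flagging is that $\alphabet$-determinism gives an injection (at most one nonzero-weight transition per output symbol), not a bijection, but this does not affect your argument.
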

    \citet[Thm. 3.2]{nowak-etal-2023-representational} shows that an RNN LM with output alphabet $\epsalphabet$ can simulate any probabilistic \twopda over $\alphabet$ if it is $\alphabet$-deterministic.
    Let $\pda$ be an arbitrary probabilistic \twopda, with $\pda = \left(\states, \alphabet, \stackalphabet, \trans, \qinit, \qfinal\right)$.
    Now define another \twopda $\pda=(\states,\outalphabet,\stackalphabet,\trans^\prime,\qinit,\qfinal)$, which generates outputs over the extended alphabet $\outalphabet \defeq \states \times \tapealphabet_\eps^4 \times \alphabet_\eps$.
    Define the transitions, add the following transitions to $\trans^\prime$ in $\pda^\prime$:
    \begin{equation}
         \trans^\prime \defeq \set{\twoPdaEdgenoweight{q}{\stacksym}{(\stateq, \stacksym_1,\stacksym_2,\stacksym_3,\stacksym_4, \sym)}{\stateq^\prime}{\stacksym_1}{\stacksym_2}{\stacksym_3/w}{\stacksym_4} \mid  \twoPdaEdgenoweight{q}{\stacksym}{\sym}{\stateq^\prime}{\stacksym_1}{\stacksym_2}{\stacksym_3/w}{\stacksym_4} \in \trans} 
    \end{equation}
    
    The constructed $\pda^\prime$ satisfies the required $\alphabet$-determinism condition, meaning that we can construct a weakly equivalent RNN LM with $\eps$ outputs.
    Finally, define the following regular function $\regfn\colon\outalphabet_\eps\to\alphabet_\eps$ that removes all additional information post-hoc:
    \begin{equation} \label{eq:rnn-cot-regfn}
        \regfn(\symx) \defeq \begin{cases}
            \symy & \ifcondition \symx = (\stateq, \stacksym_1,\stacksym_2,\stacksym_3,\stacksym_4, \sym)           \\
            \eps  & \ifcondition \symx = \eps
        \end{cases}
    \end{equation}
    The RNN LM therefore directly simulates runs from the \twopda $\pda^\prime$.
    Applying $\regfn$ to the outputs of the RNN LM results in strings in $\kleene{\alphabet}$.
    Since the transformation groups all execution runs that output $\str \in \kleene{\alphabet}$ and the resulting CoT-augmented LM $\pLMcot$ (cf. \cref{def:cot-lm}) sums over them, we are left with a weakly equivalent LM.
\end{proof}

\transformerTuring*
\begin{proof}
    Here, we adapt the construction by \citet{perez-etal-2021-attention} to the decoder-only case with prompt length (denoted by $n$ in \citet{perez-etal-2021-attention}) zero since we only care about \emph{generating} strings from the CoT-augmented transformer LM.
    Moreover, rather than implementing a deterministic transition function of a Turing machine (which is simulated by a particular layer of their transformer), we implement the probabilistic transition function of a rational-valued PTM in the \emph{sampling} step of the transformer LM.
    As in our previous constructions, this step endows the model with non-determinism.

    At a high level, we will construct a CoT-augmented transformer LM over $\kleene{\alphabet}$ that will output symbols from the augmented alphabet $\outalphabet \defeq \states \times \tapealphabet \times \overline{\bosalphabet}_\eps \times \actions \times \actions$.
    Here, $\actions \defeq \set{-1, 0, 1}$, where we will for conciseness identify the action $\texttt{LEFT}$ with $-1$ and the action $\texttt{RIGHT}$ with $1$.
    Note that we, like \citet{perez-etal-2021-attention}, \emph{do not} allow the action $\texttt{NOOP}$ in our construction, but we include the value $0$ in the action set as it will be useful for the starting conditions of the constructed transformer.
    As explained later, such an alphabet $\outalphabet$ contains enough information for the transformer LM to be able to reconstruct the configuration of the PTM at every time step, and thus match its conditional probability distributions.

    \paragraph{Notation.}
    Let $\tm = \qptmtuple$ be a $\qptm$ and $\pLM$ the LM over $\kleene{\alphabet}$ it induces.
    In the following, we denote with $\stateq_\tstep \in \states$ the state of the PTM, with $v_\tstep \in \tapealphabet$ the symbol written to the working tape, with $a_\tstep$ the action performed, with $s_\tstep$ the symbol read, and with $\sym_\tstep$ the symbol written to the output tape, all at time step $\tstep$.

    \paragraph{High-level idea of the construction.}
    Before formally describing the components of the CoT-augmented transformer LM in separate lemmata, we give a high-level overview of the construction.
    The two-layer (single-head) transformer will, when computing $\enc\left(\cotstrlt\right)$ for $\cotstrlt \in \kleene{\outalphabet}$, perform the following computations, very similar to those described by \citet{perez-etal-2021-attention}:
    \begin{enumerate}
        \item \textbf{Input representations} ($\staticRepr$; \Cref{lem:input-representations}): The input representation function represents the input symbols $\cotsym \in \outalphabet$ with a multi-hot encoding (one that contains an individual one-hot encoding of each of the components $\stateq_\tstep, v_{\tstep - 1}, \sym_{\tstep - 1}, a_{\tstep - 1}, a_{\tstep - 2}$) of the form\footnote{In the following, \textcolor{ETHGreen}{green} color denotes the components that are added or computed by each of the described components.}\textsuperscript{,}\footnote{Notice that the symbol $\sym \in \alphabet$ is not part of the internal representation, since it is not required for the simulation of the PTM. It is only used to construct the final output string stored in the output tape.}
              \begin{equation}
                  \posInEmbeddingFun{\cotsym, \tstep} = \begin{pmatrix}
                      \textcolor{ETHGreen}{\onehot{\stateq_\tstep}}                              \\
                      \textcolor{ETHGreen}{\onehot{v_{\tstep - 1}} }                             \\
                      \textcolor{ETHGreen}{\onehot{a_{\tstep - 1}} }                             \\
                      \textcolor{ETHGreen}{a_{\tstep - 1}}, \textcolor{ETHGreen}{a_{\tstep - 2}} \\
                      0, 0                                                                       \\
                      0, \zero_{|\tapealphabet|}                                                 \\
                      \textcolor{ETHGreen}{1, \tstep + 1, \frac{1}{\tstep + 1}, \frac{1}{\left(\tstep + 1\right)^2}}
                  \end{pmatrix}
              \end{equation}
              for $\tstep \in \Nzero$.
              The representations also include additional components (the ``empty'' zero values in the vector, explained below) used for processing and the positional encoding of the time step, $1, \tstep + 1, \frac{1}{\tstep + 1}, \frac{1}{\left(\tstep + 1\right)^2}$.
              This component is explained in more detail in \Cref{lem:input-representations}.
        \item \textbf{Layer 1} ($\tflayer_1$; \Cref{lem:layer-1}): The first layer uses the information about the actions performed at all previous time steps (contained in the input static representations of the CoT-augmented symbols) to compute the locations of $\tm$'s head at each time step.
              This results in the internal representations of the form
              \begin{equation}
                  \vx^1_\tstep = \begin{pmatrix}
                      \onehot{\stateq_\tstep}                                                                                              \\
                      \onehot{v_{\tstep - 1}}                                                                                              \\
                      \onehot{a_{\tstep - 1}}                                                                                              \\
                      a_{\tstep - 1}, a_{\tstep - 2}                                                                                       \\
                      \textcolor{ETHGreen}{\frac{\posAt{\tstep}}{\tstep + 1}}, \textcolor{ETHGreen}{\frac{\posAt{\tstep - 1}}{\tstep + 1}} \\
                      0, \zero_{|\tapealphabet|}                                                                                           \\
                      1, \tstep + 1, \frac{1}{\tstep + 1}, \frac{1}{\left(\tstep + 1\right)^2}
                  \end{pmatrix}
              \end{equation}
              for $\tstep \in \Nzero$, where $\posAt{\tstep}$ denotes the position of $\tm$'s head at time $\tstep$.
              This component is identical to the \emph{second} layer of the transformer from \citet{perez-etal-2021-attention} and is explained in more detail in \Cref{lem:layer-1}.
        \item \textbf{Layer 2} ($\tflayer_2$; \Cref{lem:layer-2}): Uses the $\tflayer_1$-computed information about the head locations at each time step to (almost) compute $s_\tstep$---the symbol read by the head of the PTM at time step $\tstep$.
              This results in the internal representations of the form
              \begin{equation}
                  \vx^2_\tstep = \begin{pmatrix}
                      \onehot{\stateq_\tstep}                                                                \\
                      \onehot{v_{\tstep - 1}}                                                                \\
                      \onehot{a_{\tstep - 1}}                                                                \\
                      a_{\tstep - 1}, a_{\tstep - 2}                                                         \\
                      \frac{\posAt{\tstep}}{\tstep + 1}, \frac{\posAt{\tstep - 1}}{\tstep + 1}               \\
                      \textcolor{ETHGreen}{\lastVisit{\tstep}}, \textcolor{ETHGreen}{\onehot{v_{\lastVisit{\tstep}}}} \\
                      1, \tstep + 1, \frac{1}{\tstep + 1}, \frac{1}{\left(\tstep + 1\right)^2}
                  \end{pmatrix}
              \end{equation}
              for $\tstep \in \Nzero$, where $\lastVisit{\tstep}$ denotes the last time step when $\tm$'s head wrote to $\posAt{\tstep}$.
              This component is analogous to the \emph{third} layer of the transformer from \citet{perez-etal-2021-attention} and is explained in more detail in \Cref{lem:layer-2}.
        \item \textbf{Output function} ($\fTransf$; \Cref{lem:output-function}): The function $\fTransf$ uses the information computed by the two layers of the transformer to compute the one-hot encoding of the current configuration of the PTM.
              In particular, this includes $\tm$'s current state ($\stateq_\tstep$), the symbol read by $\tm$'s head ($s_\tstep$), and, for reasons that we explain shortly, the action performed by $\tm$ at the previous time step ($a_{\tstep - 1}$).
              This results in the representation
              \begin{equation}
                  \enc\left(\cotstrlt\right) = \onehot{\stateq_\tstep, s_\tstep, a_{\tstep - 1}},
              \end{equation}
              which is used for sampling the next transition of the PTM.
              This component resembles the output function of \citet{perez-etal-2021-attention} and is explained in more detail in \Cref{lem:output-function}.
        \item \textbf{Sampling} (\Cref{lem:sampling}): The representation $\onehot{\stateq_\tstep, s_\tstep, a_{\tstep - 1}}$ is used to index the output matrix $\outMtx$ that contains the conditional probabilities $\pLM\left(\cdot\mid \stateq_\tstep, s_\tstep\right)$.
              This can be used to sample the next augmented symbol $\onehot{\stateq^{\tstep+1}, v_{\tstep}, \eossym_\tstep, a_\tstep, a_{\tstep - 1}}$ (here, the last component, $a_{\tstep - 1}$, equals the ``input'' to the sampling step).
              This component is explained in more detail in \Cref{lem:sampling}.
    \end{enumerate}
    The idea of the construction---iteratively computing and storing the modifications to the PTM configuration in the generated string $\cotstr$---is therefore identical to \citeposs{perez-etal-2021-attention} one.
    In particular, the two components that perform the bulk of the simulation---Layer 1 and Layer 2---are identical to the components from \citet{perez-etal-2021-attention}.
    We describe and show the correctness of the components in the lemmata in the rest of the section.
    The correctness of the construction follows from the correctness of the components.

    \paragraph{Weak equivalence.}
    To define a CoT-augmented LM with such a transformer, we can define a transducer that projects the outputs in $\kleene{\left(\states \times \tapealphabet \times \overline{\bosalphabet}_\eps \times \actions \times \actions\right)}$ onto $\kleene{\alphabet}$ in the standard way by retaining only the outputs in $\overline{\bosalphabet}_\eps$.
    This collapses all the executions of the transformer LM (which simulates the executions of the PTM with the same probabilities, as per \Cref{lem:sampling}) yielding the same string in $\kleene{\alphabet}$, resulting in a CoT-augmented transformer LM weakly equivalent to $\pLM$.

    \paragraph{Precision.}
    The constructed transformer computes and stores position-dependent values at several points during the computation.
    The precision required for the representation of these values grows logarithmically with the number of computational steps.
    However, since the number of computational steps performed by a PTM generating a string is potentially unbounded in the length of the string \citep[p. 339]{hopcroft01}, the transformer's precision is subsequently unbounded as well \citep{nowak-etal-2023-representational}.
\end{proof}

We begin with a general lemma about the disjunction of one-hot encodings.
\begin{lemma} \label{lem:disjunction}
    Let $\sS_1, \ldots, \sS_n$ be finite sets and let $\onehot{s_1, \ldots, s_n} \in \set{0, 1}^{|\sS_1| \cdots |\sS_n|}$ denote the one-hot encoding of the tuple $\left(s_1, \ldots, s_n\right) \in \sS_1 \times \cdots \times \sS_n$.
    Define the matrices
    \begin{equation}
        \mW^{\sS_i} \in \set{0, 1}^{|\sS_i| \times |\sS_1| \cdots |\sS_n|}
    \end{equation}
    element-wise as
    \begin{equation} \label{eq:disjunction-matrix-def}
        \mW^{\sS_i}_{\textcolor{ETHRed}{s}, \left(s_1, \ldots, s_{i-1}, \textcolor{ETHRed}{s}, s_{i + 1}, \ldots, s_n \right)} \defeq 1 \qquad \text{ for all } s_j \in \sS_j, j\in1,\ldots, n, j \neq i.
    \end{equation}
    Then, it holds that
    \begin{equation}
        \mW^{\sS_i} \onehot{s_1, \ldots, s_n} = \onehot{s_i}.
    \end{equation}
\end{lemma}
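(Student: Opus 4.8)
The plan is a direct index-chasing argument: I would read off both sides coordinate-by-coordinate and check that they agree. The single fact to exploit is that $\onehot{s_1, \ldots, s_n}$ is supported on one index, namely $\left(s_1, \ldots, s_n\right) \in \sS_1 \times \cdots \times \sS_n$, where its entry equals $1$, while every other entry is $0$.

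First I would fix an arbitrary row index $s \in \sS_i$ and compute the corresponding entry of the matrix--vector product $\mW^{\sS_i} \onehot{s_1, \ldots, s_n}$. Writing the product as a sum over all column indices $\left(t_1, \ldots, t_n\right) \in \sS_1 \times \cdots \times \sS_n$, only the single term with $\left(t_1, \ldots, t_n\right) = \left(s_1, \ldots, s_n\right)$ survives, because of the one-hot support just described. Hence that entry equals precisely $\mW^{\sS_i}_{s, \left(s_1, \ldots, s_n\right)}$.

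Next I would evaluate $\mW^{\sS_i}_{s, \left(s_1, \ldots, s_n\right)}$ using the defining equation \cref{eq:disjunction-matrix-def}. That definition places a $1$ in row $s$ exactly at those columns whose $i$-th component equals $s$ (with the remaining components ranging freely), and $0$ elsewhere. For the fixed column $\left(s_1, \ldots, s_n\right)$, this gives the value $1$ when $s = s_i$ and $0$ otherwise, i.e. it equals $\ind{s = s_i}$. Therefore the $s$-th coordinate of $\mW^{\sS_i} \onehot{s_1, \ldots, s_n}$ is $\ind{s = s_i}$, which is exactly the $s$-th coordinate of $\onehot{s_i}$. Since $s \in \sS_i$ was arbitrary, the two vectors agree in every coordinate, establishing $\mW^{\sS_i} \onehot{s_1, \ldots, s_n} = \onehot{s_i}$.

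There is no genuine obstacle here; the only care needed is bookkeeping, namely keeping the universally quantified indices $s_j$ ($j \neq i$) in \cref{eq:disjunction-matrix-def} conceptually distinct from the specific tuple $\left(s_1, \ldots, s_n\right)$ fixed by the input one-hot vector, and recognizing that those quantifiers merely enumerate the set of columns that carry a $1$ in a given row.
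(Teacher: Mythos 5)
Your proof is correct and takes essentially the same approach as the paper's: a direct coordinate-wise evaluation of the matrix--vector product, using the observation that the defining equation is equivalent to $\mW^{\sS_i}_{s,\left(s_1,\ldots,s_n\right)} = \ind{s = s_i}$. The only (cosmetic) difference is that you collapse the sum immediately via the single-index support of $\onehot{s_1,\ldots,s_n}$, whereas the paper writes out the full sum over all column tuples and then splits it into an outer sum over $s_i'$ and an inner sum over the remaining components.
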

\begin{proof}
    \cref{eq:disjunction-matrix-def} can equivalently be written as
    \begin{equation}
        \mW^{\sS_i}_{\textcolor{ETHRed}{s}, \left(s_1, \ldots, s_{i-1}, s_i, s_{i + 1}, \ldots, s_n \right)} \defeq \ind{s_i = s}.
    \end{equation}
    Indexing the elements of $\mW^{\sS_i} \onehot{s_1, \ldots, s_n}$ directly with $s \in \sS_i$, we then compute
    \begin{subequations}
        \begin{align}
            \left(\mW^{\sS_i} \onehot{s_1, \ldots, s_n}\right)_s
             & = \sum_{\substack{s^\prime_j \in \sS_j                                                  \\ j = 1, \ldots, n}} \mW^{\sS_i}_{s, \left(s^\prime_1, \ldots, s^\prime_n \right)} \onehot{s_1, \ldots, s_n}_{s^\prime_1, \ldots, s^\prime_n} \\
             & = \sum_{\substack{s^\prime_j \in \sS_j                                                  \\ j = 1, \ldots, n}} \ind{s^\prime_i = s} \onehot{s_1, \ldots, s_n}_{s^\prime_1, \ldots, s^\prime_n} \\
             & = \sum_{\substack{s^\prime_j \in \sS_j                                                  \\ j = 1, \ldots, n}} \ind{s^\prime_i = s} \ind{s^\prime_1 = s_1, \ldots, s^\prime_n = s_n} \\
             & = \sum_{s^\prime_i \in \sS_i} \ind{s^\prime_i = s} \underbrace{\sum_{\substack{s^\prime_j \in \sS_j \\ j = 1, \ldots, n, \; j \neq i}} \ind{s^\prime_1 = s_1, \ldots, s^\prime_n = s_n}}_{=1} \label{eq:equals-1-equality} \\
             & = \sum_{s^\prime_i \in \sS_i} \ind{s^\prime_i = s},
        \end{align}
    \end{subequations}
    which is the definition of the elements of $\onehot{s_i}$.
    The equality in \cref{eq:equals-1-equality} follows from the fact that the summand is non-zero exactly when $s^\prime_j = s_j$ for all $j \neq i$.
\end{proof}

\begin{lemma}[Input representations] \label{lem:input-representations}
    Define the following static representation function of the CoT-augmented symbols $\cotsym \defeq \left(\stateq, v, \overline{\bossym}, a^\prime, a\right) \in \outalphabet$:
    \begin{equation}
        \posInEmbeddingFun{\cotsym, \tstep} \defeq \begin{pmatrix}
            \mW \onehot{\stateq, v, \overline{\bossym}, a^\prime, a} \\
            0, 0                                 \\
            0, \zero_{|\tapealphabet|}           \\
            1, \tstep + 1, \frac{1}{\tstep + 1}, \frac{1}{\left(\tstep + 1\right)^2}
        \end{pmatrix} \in \R^{\hiddDim}
    \end{equation}
    where
    $\onehot{\stateq, v, \overline{\bossym}, a^\prime, a} \in \set{0, 1}^{\nstates |\stackalphabet| |\overline{\bosalphabet}_\eps| |\actions| |\actions|}$ denotes the one-hot encoding of the tuple $\left(\stateq, v, \overline{\bossym}, a^\prime, a\right)$,
    \begin{equation}
        \mW \defeq \left(\mW^\states; \mW^{\tapealphabet}; \mW^{\actions}; \mA^\prime; \mA\right) \in \R^{\hiddDim \times \left(\nstates |\stackalphabet| |\overline{\bosalphabet}_\eps| |\actions| |\actions|\right)}
    \end{equation}
    for matrices $\mW^\states, \mW^{\tapealphabet}, \mW^{\actions}$ from \Cref{lem:disjunction}, $\mA^\prime, \mA \in \R^{1 \times \hiddDim}$ defined as\footnote{Recall that we can identify the actions of the PTM with the integers $-1$, $0$, and $1$.}
    \begin{subequations}
        \begin{align}
            \mA^\prime_{1, \left(\stateq, v, \overline{\bossym}, a^\prime, a\right)} & \defeq a^\prime, \qquad q \in \states, v \in \tapealphabet, \overline{\bossym} \in \overline{\bosalphabet}_\eps, a, a^\prime \in A  \\
            \mA_{1, \left(\stateq, v, \overline{\bossym}, a^\prime, a\right)}  & \defeq a, \qquad q \in \states, v \in \tapealphabet, \overline{\bossym} \in \overline{\bosalphabet}_\eps, a, a^\prime \in A,
        \end{align}
    \end{subequations}
    and $\hiddDim \defeq \left[\nstates + |\stackalphabet| + |\actions| + 2\right] + 2 + \left[1 + |\stackalphabet|\right] + 4$.
    Then, it holds that
    \begin{equation}
        \posInEmbeddingFun{\cotsym, \tstep} = \begin{pmatrix}
            \onehot{\stateq_\tstep}    \\
            \onehot{v}                 \\
            \onehot{a}                 \\
            a^\prime, a                      \\
            0, 0                       \\
            0, \zero_{|\tapealphabet|} \\
            1, \tstep + 1, \frac{1}{\tstep + 1}, \frac{1}{\left(\tstep + 1\right)^2}
        \end{pmatrix}
    \end{equation}
\end{lemma}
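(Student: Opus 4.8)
The plan is to exploit the block structure of $\mW$. Since $\mW = \left(\mW^\states; \mW^{\tapealphabet}; \mW^{\actions}; \mA^\prime; \mA\right)$ is a vertical concatenation of submatrices, and matrix--vector multiplication distributes over such concatenation, the product $\mW \onehot{\stateq, v, \overline{\bossym}, a^\prime, a}$ is simply the vertical stack of the five sub-products $\mW^\states \onehot{\cdots}$, $\mW^{\tapealphabet} \onehot{\cdots}$, $\mW^{\actions} \onehot{\cdots}$, $\mA^\prime \onehot{\cdots}$, and $\mA \onehot{\cdots}$. Thus it suffices to evaluate each of these five products separately and then read off the remaining coordinates of $\posInEmbeddingFun{\cotsym, \tstep}$ (the two zero scalars, the $0, \zero_{|\tapealphabet|}$ block, and the positional block $1, \tstep+1, \frac{1}{\tstep+1}, \frac{1}{(\tstep+1)^2}$) directly from the definition, where they appear verbatim and are therefore unchanged for every $\tstep$.

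First I would handle the three one-hot blocks. The matrices $\mW^\states, \mW^{\tapealphabet}, \mW^{\actions}$ are exactly the disjunction matrices of \cref{lem:disjunction}, instantiated with the product $\states \times \tapealphabet \times \overline{\bosalphabet}_\eps \times \actions \times \actions$ and with the extracted index $i = 1, 2, 5$, respectively. Applying \cref{lem:disjunction} three times yields $\mW^\states \onehot{\stateq, v, \overline{\bossym}, a^\prime, a} = \onehot{\stateq}$, $\mW^{\tapealphabet} \onehot{\stateq, v, \overline{\bossym}, a^\prime, a} = \onehot{v}$, and $\mW^{\actions} \onehot{\stateq, v, \overline{\bossym}, a^\prime, a} = \onehot{a}$, which are precisely the first three blocks of the claimed output. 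Next I would handle the two scalar rows: since $\onehot{\stateq, v, \overline{\bossym}, a^\prime, a}$ equals $1$ exactly at the coordinate indexed by $\left(\stateq, v, \overline{\bossym}, a^\prime, a\right)$ and $0$ elsewhere, the inner products $\mA^\prime \onehot{\cdots}$ and $\mA \onehot{\cdots}$ pick out the single entries $\mA^\prime_{1, \left(\stateq, v, \overline{\bossym}, a^\prime, a\right)} = a^\prime$ and $\mA_{1, \left(\stateq, v, \overline{\bossym}, a^\prime, a\right)} = a$, by the definitions of $\mA^\prime$ and $\mA$. Stacking the five results gives $\left(\onehot{\stateq}; \onehot{v}; \onehot{a}; a^\prime; a\right)$, matching the top portion of the claimed representation.

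The argument is almost entirely bookkeeping, so the only thing requiring care is making the indexing conventions line up. I must confirm that $\mW^{\actions}$ is taken to select the fifth tuple component $a$ rather than the fourth component $a^\prime$ (both live in $\actions$, so the disambiguation is dictated by the claimed output), and I must read the dimension annotation on $\mA^\prime, \mA$ as row vectors indexed by tuples in $\states \times \tapealphabet \times \overline{\bosalphabet}_\eps \times \actions \times \actions$, so that the inner product against the one-hot is well defined and extracts the intended scalar. Once these conventions are fixed, combining the five block computations with the verbatim remaining coordinates completes the proof; no nontrivial calculation remains.
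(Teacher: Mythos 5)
Your proof is correct and follows essentially the same route as the paper's: the three one-hot blocks are obtained by applying \cref{lem:disjunction} to the product alphabet, the two scalar entries follow from the fact that the one-hot vector picks out the single entries $\mA^\prime_{1, \left(\stateq, v, \overline{\bossym}, a^\prime, a\right)} = a^\prime$ and $\mA_{1, \left(\stateq, v, \overline{\bossym}, a^\prime, a\right)} = a$, and the remaining coordinates are read off verbatim. Your explicit remark about whether $\mW^{\actions}$ extracts the fourth or the fifth tuple component is a fair catch: the paper's own proof text speaks of computing the one-hot encoding of $a^\prime$ while the lemma's displayed conclusion writes $\onehot{a}$, so the paper itself is ambiguous on precisely the point you flag, and your resolution (dictated by the stated conclusion) is a reasonable one.
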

\begin{proof}
    The first part (computing the one-hot encodings of $\stateeq$, $v$, and $a^\prime$) follows from the construction of the matrices $\mW^\states, \mW^{\tapealphabet}$, and $\mW^{\actions}$ from \Cref{lem:disjunction}.
    The second part (computing the values $a^\prime$ and $a$ with the matrices $\mA^\prime$ and $\mA$) follows from the definition of the matrices $\mA$ and $\mA^\prime$: By construction, the values of $a^\prime$ and $a$ will be copied from the one-hot encodings into the resulting entry of the vector.
\end{proof}

We now describe the two layers of the transformer that compute the quantities required to be able to determine the current configuration of the PTM at each time step.
Here, we heavily rely on the construction by \citet{perez-etal-2021-attention}.

Let $\tm = \qptmtuple$ be a rationally-weighted PTM.
Let $\tstep \in \N$ and define
\begin{equation} \label{eq:def-c}
    \posAt{\tstep} \defeq \sum_{\idxj = 0}^{\tstep - 1} a_\idxj
\end{equation}
as the position of the head of the PTM at time $\tstep$.
Further, define the set
\begin{equation}
    \visitedAt{\tstep} \defeq \set{\idxj \mid \posAt{\idxj} = \posAt{\tstep}, \idxj \in \NTo{\tstep - 1}}.
\end{equation}
$\visitedAt{\tstep}$ contains the time steps at which the PTM $\tm$ so far visited (and thus wrote to) the tape cell read by $\tm$ at time $\tstep$.
Then, define $\lastVisit{\tstep}$ as
\begin{equation} \label{eq:l-def}
    \lastVisit{\tstep} \defeq \begin{cases}
        \max{\visitedAt{\tstep}} & \ifcondition |\visitedAt{\tstep}| > 0 \\
        \tstep                   & \otherwisecondition.
    \end{cases}
\end{equation}
In words, $\lastVisit{\tstep}$ denotes the time step at which the PTM $\tm$ last visited (and thus wrote on) the tape cell read by $\tm$ at time $\tstep$ if this cell was visited yet.
Otherwise, $\lastVisit{\tstep}$ equals $\tstep$.

Now, define the $\bos$ symbol over the augmented alphabet $\stackalphabet$ as
\begin{equation}
    \cotbos \defeq \left(\stateq_0, \bot, \bos, 0, 0\right) \in \states \times \tapealphabet \times \overline{\bosalphabet}_\eps \times \actions \times \actions.
\end{equation}
Let $\qinit, \stateq_1, \ldots, \stateq_\strlen$, $\bot, v_1, \ldots, v_\strlen$, $\bos, \sym_1, \ldots, \sym_\strlen$, and $a_0, a_1, \ldots, a_\strlen$ be the sequences of states visited, symbols written on the processing tape, symbols written on the output tape, and actions performed by $\tm$ in the first $\strlen$ steps.
Define $a_{-1} = a_0 \defeq 0$,
\begin{equation}
    \vx^0_\tstep \defeq
    \begin{cases}
        \posInEmbeddingFun{\cotbos, \tstep}                                                                            & \ifcondition \tstep = 0 \\
        \posInEmbeddingFun{\left(\stateq_\tstep, v_\tstep, \sym_\tstep, a_{\tstep - 1}, a_{\tstep - 2}\right), \tstep} & \otherwisecondition,
    \end{cases}
\end{equation}
and
\begin{equation}
    \mX^0 \defeq \left(\vx^0_0; \vx^0_1; \cdots; \vx^0_\strlen\right).
\end{equation}
These will be the inputs to the transformer layers and thus,\footnote{Note that since the transformer over the augmented alphabet is deterministic, this representation is unique.}
\begin{equation}
    \staticRepr\left(\cotstrlt\right) \defeq \left(\vx^0_0; \vx^0_1; \cdots; \vx^0_{\tstep - 1}\right).
\end{equation}

With this, we can describe and prove the correctness of the first layer of the transformer we are building.
\begin{lemma}[Layer 1] \label{lem:layer-1}
    There exists a transformer layer $\tflayer_1$ that, given the inputs $\mX^0$, computes the values $\frac{\posAt{\tstep}}{\tstep + 1}$ and $\frac{\posAt{\tstep - 1}}{\tstep + 1}$ for all $\tstep = 1, \ldots, \strlen$.
    More precisely, denoting
    \begin{equation}
        \mX^1 \defeq \left(\vx^1_0; \vx^1_1; \cdots; \vx^1_\strlen\right) = \tflayer_1\left(\mX^0\right),
    \end{equation}
    it holds that $\vx^1_\tstep$ contains the entries containing the values $\frac{\posAt{\tstep}}{\tstep + 1}$ and $\frac{\posAt{\tstep - 1}}{\tstep + 1}$:
    \begin{equation}\label{eq:cs-result}
        \vx^1_\tstep = \begin{pmatrix}
            \onehot{\stateq_\tstep}                                                                                              \\
            \onehot{v_{\tstep - 1}}                                                                                              \\
            \onehot{a_{\tstep - 1}}                                                                                              \\
            a_{\tstep - 1}, a_{\tstep - 2}                                                                                       \\
            \textcolor{ETHGreen}{\frac{\posAt{\tstep}}{\tstep + 1}}, \textcolor{ETHGreen}{\frac{\posAt{\tstep - 1}}{\tstep + 1}} \\
            0, \zero_{|\tapealphabet|}                                                                                           \\
            1, \tstep + 1, \frac{1}{\tstep + 1}, \frac{1}{\left(\tstep + 1\right)^2}
        \end{pmatrix}
    \end{equation}
\end{lemma}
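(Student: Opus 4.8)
The plan is to realize $\tflayer_1$ as a single attention head whose scoring function is constant across keys, so that hard attention degenerates into uniform averaging, combined with a value map that simply reads off the two stored action components. The key arithmetic observation is that, by \cref{eq:def-c}, $\posAt{\tstep} = \sum_{\idxj=0}^{\tstep-1} a_\idxj$, that the action $a_{\idxj-1}$ sits in the fourth (``$a^\prime$'') coordinate of $\vx^0_\idxj$ while $a_{\idxj-2}$ sits in the fifth (``$a$'') coordinate, and that the conventions $a_{-1}=a_0=0$ together with the $\cotbos$ symbol force both action coordinates to vanish at positions $0$ and $1$. Consequently a uniform average over positions $0,\dots,\tstep$ of the $a^\prime$-coordinate telescopes to $\tfrac{1}{\tstep+1}\sum_{\idxj=0}^{\tstep} a_{\idxj-1} = \tfrac{\posAt{\tstep}}{\tstep+1}$, and the same average of the $a$-coordinate yields $\tfrac{1}{\tstep+1}\sum_{\idxj=0}^{\tstep} a_{\idxj-2} = \tfrac{\posAt{\tstep-1}}{\tstep+1}$. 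Hence both target quantities can be produced by a single head.

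Concretely, I would set $\qTransf$ and $\kTransf$ to constant maps, e.g.\ $\qTransf\left(\vx\right) \defeq \zero$ and $\kTransf\left(\vx\right) \defeq \zero$, so that every score $\tfscorefun\left(\vq_\tstep, \vk_\idxj\right)$ is equal and $\hardmaxAvg$ assigns each of the $\tstep+1$ causal positions the weight $\tfrac{1}{\tstep+1}$. I would then define $\vTransf$ as the linear map that copies the two numeric action entries $a_{\idxj-1}, a_{\idxj-2}$ of $\vx^0_\idxj$ into the two coordinates earmarked for $\tfrac{\posAt{\tstep}}{\tstep+1}$ and $\tfrac{\posAt{\tstep-1}}{\tstep+1}$ (the coordinates currently holding $0,0$), with zeros in all remaining coordinates. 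By \cref{eq:attention-sum}, the attention output at position $\tstep$ is $\tfrac{1}{\tstep+1}\sum_{\idxj=0}^{\tstep}\vTransf\left(\vx^0_\idxj\right)$, whose two nonzero coordinates equal $\tfrac{\posAt{\tstep}}{\tstep+1}$ and $\tfrac{\posAt{\tstep-1}}{\tstep+1}$ by the telescoping identity above.

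Finally, the residual connection in \cref{eq:attn-block-1} adds $\vx^0_\tstep$ back, so that all one-hot blocks, the stored numeric pair $a_{\tstep-1}, a_{\tstep-2}$, and the positional block $1, \tstep+1, \tfrac{1}{\tstep+1}, \tfrac{1}{\left(\tstep+1\right)^2}$ present in $\vx^0_\tstep$ are preserved unchanged, while the previously zero coordinates now carry the two averages. Choosing $\oTransf \defeq \zero$ makes \cref{eq:attn-block-2} reduce to the identity after its residual, leaving $\vx^1_\tstep$ exactly in the form of \cref{eq:cs-result}. I expect the only delicate part to be the boundary bookkeeping: checking that the contributions of positions $0$ and $1$ vanish so the telescoping sums begin at the correct index, and that the number of attended positions is exactly $\tstep+1$ so the $\tfrac{1}{\tstep+1}$ normalization matches. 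Since, as noted in the statement, the head-position computation coincides with the second layer of \citet{perez-etal-2021-attention}, the remaining verification is routine, and I would otherwise defer to their construction for the explicit parameter matrices.
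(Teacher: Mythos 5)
Your proposal is correct and follows essentially the same route as the paper's proof: zero query/key maps so that hard attention reduces to uniform averaging over the $\tstep+1$ causal positions, a value map that copies the two stored action coordinates, the identity $\sum_{\idxj=0}^{\tstep} a_{\idxj-1} = \posAt{\tstep}$ (with the boundary actions zeroed by the $\cotbos$ convention), and residual connections with $\oTransf \equiv \zero$ preserving everything else. The paper likewise defers the explicit matrices to \citet[Lemma 9]{perez-etal-2021-attention}, so there is no substantive difference.
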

\begin{proof}
    This follows from \citet[Lemma 9]{perez-etal-2021-attention}, which we summarize here.\footnote{More precisely, since their construction stores the actions $a_\tstep$ and $a_{\tstep - 1}$ (the former is possible because the action is not sampled but deterministically computed based on the configuration of the PTM), their layer computes the values $\frac{\posAt{\tstep}}{\tstep + 1}$ and $\frac{\posAt{\tstep}}{\tstep + 1}$. As we show later in \Cref{lem:layer-2}, this does not affect the correctness of the construction.}
    Concretely, $\tflayer_1$ is implemented by a transformer layer with trivial (zero-valued) query and key transformations and a value transformation $\vTransf$ that copies the values of the actions $a_{\tstep - 1}$ and $a_{\tstep - 2}$ to the entry that will hold the values $\frac{\posAt{\tstep}}{\tstep + 1}$ and $\frac{\posAt{\tstep - 1}}{\tstep + 1}$.
    Since the current head location $\posAt{\tstep}$ is simply the sum of those values (cf. \cref{eq:def-c}), attending to all previous positions results in \cref{eq:cs-result}.

    Formally, we define
    \begin{subequations}
        \begin{align}
            \mQ^1                                & \defeq \zero_{\hiddDim \times \hiddDim}, \quad \mK^1 \defeq \zero_{\hiddDim \times \hiddDim}, \\
            \tfscorefun\left(\vq, \vk\right)     & \defeq \innerProd{\vq}{\vk}                                                                   \\
            \mV_{\idxn^\prime, :}^1 = \mV_{\idxn, :}^1 & = \begin{pmatrix}
                                                         \zero_\nstates             \\
                                                         \zero_{|\tapealphabet|}    \\
                                                         \zero_{|\actions|}         \\
                                                         0, 0                       \\
                                                         1, 1                       \\
                                                         0, \zero_{|\tapealphabet|} \\
                                                         0, 0, 0, 0
                                                     \end{pmatrix}^\top                                                                  \\
            \mO^1                                & = \zero_{\hiddDim \times \hiddDim}
        \end{align}
    \end{subequations}
    Here, $\idxn^\prime$ and $\idxn$ refer to the indices of the rows at which the values $\frac{\posAt{\tstep}}{\tstep + 1}$ and $\frac{\posAt{\tstep - 1}}{\tstep + 1}$ will be stored (that is, the rows below the values of $a_{\tstep - 1}$ and $a_{\tstep - 2}$).
    All other rows of $\mV^1$ are zero.

    Then, for $\tstep \in \Nzero$, it holds that
    \begin{equation}
        \tfscorefun\left(\vq_\tstep, \vk_\idxj\right) = \tfscorefun\left(\qTransf\left(\vx^0_\tstep\right), \kTransf\left(\vx^0_\idxj\right)\right) = \tfscorefun\left(\mQ^1 \vx^0_\tstep, \mK^1 \vx^0_\idxj\right) = \innerProd{\zero}{\zero} = 0
    \end{equation}
    for all $\idxj \leq \tstep$, resulting in $\vs_\tstep = \hardmax\left(\zero\right) = \frac{1}{\tstep + 1} \one_\tstep$, and
    \begin{equation}
        \vTransf\left(\vx^0_\idxj\right) = \begin{pmatrix}
            \zero_\nstates               \\
            \zero_{|\tapealphabet|}      \\
            \zero_{|\actions|}           \\
            0, 0                         \\
            a_{\idxj - 1}, a_{\idxj - 2} \\
            0, \zero_{|\tapealphabet|}   \\
            0, 0, 0, 0
        \end{pmatrix}.
    \end{equation}
    This results in
    \begin{subequations}
        \begin{align}
            \attn\left(\vq_\tstep, \mK_\tstep, \mV_\tstep\right)
             & = \sum_{\idxj = 0}^{\tstep} \evs_\idxj \vTransf\left(\vx^0_\idxj\right)                  \\
             & = \sum_{\idxj = 0}^{\tstep} \frac{1}{\tstep + 1} \vTransf\left(\vx^0_\idxj\right)        \\
             & = \frac{1}{\tstep + 1} \sum_{\idxj = 0}^{\tstep} \vTransf\left(\vx^0_\idxj\right)        \\
             & = \frac{1}{\tstep + 1} \sum_{\idxj = 0}^{\tstep} \begin{pmatrix}
                                                                    \zero_\nstates               \\
                                                                    \zero_{|\tapealphabet|}      \\
                                                                    \zero_{|\actions|}           \\
                                                                    0, 0                         \\
                                                                    a_{\idxj - 1}, a_{\idxj - 2} \\
                                                                    0, \zero_{|\tapealphabet|}   \\
                                                                    0, 0, 0, 0
                                                                \end{pmatrix}            \\
             & = \frac{1}{\tstep + 1} \begin{pmatrix}
                                          \zero_\nstates                     \\
                                          \zero_{|\tapealphabet|}            \\
                                          \zero_{|\actions|}                 \\
                                          0, 0                               \\
                                          \posAt{\tstep}, \posAt{\tstep - 1} \\
                                          0, \zero_{|\tapealphabet|}         \\
                                          0, 0, 0, 0
                                      \end{pmatrix}                                \\
             & = \begin{pmatrix}
                     \zero_\nstates                                                           \\
                     \zero_{|\tapealphabet|}                                                  \\
                     \zero_{|\actions|}                                                       \\
                     0, 0                                                                     \\
                     \frac{\posAt{\tstep}}{\tstep + 1}, \frac{\posAt{\tstep - 1}}{\tstep + 1} \\
                     0, \zero_{|\tapealphabet|}                                               \\
                     0, 0, 0, 0
                 \end{pmatrix}.
        \end{align}
    \end{subequations}
    Furthermore, we have that
    \begin{align}
        \va_\tstep   & = \attn\left(\vq_\tstep, \mK_\tstep, \mV_\tstep\right) + \vx^0_\tstep = \begin{pmatrix}
                                                                                                   \zero_\nstates                                                           \\
                                                                                                   \zero_{|\tapealphabet|}                                                  \\
                                                                                                   \zero_{|\actions|}                                                       \\
                                                                                                   0, 0                                                                     \\
                                                                                                   \frac{\posAt{\tstep}}{\tstep + 1}, \frac{\posAt{\tstep - 1}}{\tstep + 1} \\
                                                                                                   0, \zero_{|\tapealphabet|}                                               \\
                                                                                                   0, 0, 0, 0
                                                                                               \end{pmatrix} + \begin{pmatrix}
                                                                                                                   \onehot{\stateq_\tstep}        \\
                                                                                                                   \onehot{v_{\tstep - 1}}        \\
                                                                                                                   \onehot{a_{\tstep - 1}}        \\
                                                                                                                   a_{\tstep - 1}, a_{\tstep - 2} \\
                                                                                                                   0, 0                           \\
                                                                                                                   0, \zero_{|\tapealphabet|}     \\
                                                                                                                   1, \tstep + 1, \frac{1}{\tstep + 1}, \frac{1}{\left(\tstep + 1\right)^2}
                                                                                                               \end{pmatrix}  = \begin{pmatrix}
                                                                                                                                    \onehot{\stateq_\tstep}                                                  \\
                                                                                                                                    \onehot{v_{\tstep - 1}}                                                  \\
                                                                                                                                    \onehot{a_{\tstep - 1}}                                                  \\
                                                                                                                                    a_{\tstep - 1}, a_{\tstep - 2}                                           \\
                                                                                                                                    \frac{\posAt{\tstep}}{\tstep + 1}, \frac{\posAt{\tstep - 1}}{\tstep + 1} \\
                                                                                                                                    0, \zero_{|\tapealphabet|}                                               \\
                                                                                                                                    1, \tstep + 1, \frac{1}{\tstep + 1}, \frac{1}{\left(\tstep + 1\right)^2}
                                                                                                                                \end{pmatrix} \\
        \vx^1_\tstep & = \oTransf\left(\va_\tstep\right) + \va_\tstep = \zero_\hiddDim + \begin{pmatrix}
                                                                                             \onehot{\stateq_\tstep}                                                  \\
                                                                                             \onehot{v_{\tstep - 1}}                                                  \\
                                                                                             \onehot{a_{\tstep - 1}}                                                  \\
                                                                                             a_{\tstep - 1}, a_{\tstep - 2}                                           \\
                                                                                             \frac{\posAt{\tstep}}{\tstep + 1}, \frac{\posAt{\tstep - 1}}{\tstep + 1} \\
                                                                                             0, \zero_{|\tapealphabet|}                                               \\
                                                                                             1, \tstep + 1, \frac{1}{\tstep + 1}, \frac{1}{\left(\tstep + 1\right)^2}
                                                                                         \end{pmatrix} = \begin{pmatrix}
                                                                                                             \onehot{\stateq_\tstep}                                                  \\
                                                                                                             \onehot{v_{\tstep - 1}}                                                  \\
                                                                                                             \onehot{a_{\tstep - 1}}                                                  \\
                                                                                                             a_{\tstep - 1}, a_{\tstep - 2}                                           \\
                                                                                                             \frac{\posAt{\tstep}}{\tstep + 1}, \frac{\posAt{\tstep - 1}}{\tstep + 1} \\
                                                                                                             0, \zero_{|\tapealphabet|}                                               \\
                                                                                                             1, \tstep + 1, \frac{1}{\tstep + 1}, \frac{1}{\left(\tstep + 1\right)^2}
                                                                                                         \end{pmatrix},
    \end{align}
    which is what we needed to show.
\end{proof}

\begin{lemma}[Layer 2] \label{lem:layer-2}
    There exists a transformer layer $\tflayer_2$ that, given the outputs $\mX^1$ of $\tflayer_1$ from \Cref{lem:layer-1}, computes the values $\lastVisit{\tstep}$ and $\onehot{v_{\lastVisit{\tstep}}}$ for all $\tstep = 1, \ldots, \strlen$.
    More precisely, denoting
    \begin{equation}
        \mX^2 \defeq \left(\vx^2_0; \vx^2_1; \cdots; \vx^2_\strlen\right) = \tflayer_2\left(\mX^1\right),
    \end{equation}
    it holds that $\vx^2_\tstep$ contains the entries containing the values $\lastVisit{\tstep} + 1$ and $\onehot{v_{\lastVisit{\tstep}}}$:
    \begin{equation} \label{eq:ls-result}
        \vx^2_\tstep = \begin{pmatrix}
            \onehot{\stateq_\tstep}                                                                             \\
            \onehot{v_{\tstep - 1}}                                                                             \\
            \onehot{a_{\tstep - 1}}                                                                             \\
            a_{\tstep - 1}, a_{\tstep - 2}                                                                      \\
            \frac{\posAt{\tstep}}{\tstep + 1}, \frac{\posAt{\tstep - 1}}{\tstep + 1}                            \\
            \textcolor{ETHGreen}{\lastVisit{\tstep} + 1}, \textcolor{ETHGreen}{\onehot{v_{\lastVisit{\tstep}}}} \\
            1, \tstep + 1, \frac{1}{\tstep + 1}, \frac{1}{\left(\tstep + 1\right)^2}
        \end{pmatrix}.
    \end{equation}
\end{lemma}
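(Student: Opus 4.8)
The plan is to mirror the construction of the third layer in \citet{perez-etal-2021-attention}, adapting the transformation matrices to the representation layout produced by $\tflayer_1$ in \Cref{lem:layer-1}. The goal of $\tflayer_2$ is to make position $\tstep$ attend, via hard attention, to the position whose value slot holds the symbol currently residing in the tape cell read at time $\tstep$---i.e.\ the symbol written during the most recent earlier step at which the head occupied cell $\posAt{\tstep}$. Once that position is selected, the value transformation copies its time index (yielding $\lastVisit{\tstep} + 1$ from the positional feature $\tstep+1$) and the one-hot symbol encoding (yielding $\onehot{v_{\lastVisit{\tstep}}}$) into the two empty slots $0, \zero_{|\tapealphabet|}$ of the representation, while a zero output transformation $\oTransf \equiv \zero$ together with the residual connection leaves every other entry of $\vx^1_\tstep$ untouched, exactly as in \Cref{lem:layer-identity,lem:layer-1}.

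Concretely, I would build $\qTransf$ and $\kTransf$ so that $\vq_\tstep$ encodes $\posAt{\tstep}$ and $\tstep$ while $\vk_\idxj$ encodes $\posAt{\idxj}$ and $\idxj$, reading these off the entries $\frac{\posAt{\tstep}}{\tstep+1}$, $\frac{\posAt{\tstep-1}}{\tstep+1}$ and the positional block $1, \tstep+1, \frac{1}{\tstep+1}, \frac{1}{(\tstep+1)^2}$ already present after $\tflayer_1$. Following \citet{perez-etal-2021-attention}, the scoring function $\tfscorefun$ is chosen so that $\tfscorefun(\vq_\tstep, \vk_\idxj)$ is a piecewise-linear quantity strictly maximized, over $\idxj \leq \tstep$, precisely at $\idxj = \lastVisit{\tstep}$: the dominant term vanishes exactly when $\posAt{\idxj} = \posAt{\tstep}$, and a secondary term monotone in $\idxj$ breaks ties in favour of the most recent matching step. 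When no earlier step matched (so $\visitedAt{\tstep} = \emptyset$), the construction is arranged to select $\idxj = \tstep$ itself, reproducing the default branch $\lastVisit{\tstep} = \tstep$ of \cref{eq:l-def}. Since the attention is hard, $\vs_\tstep$ places all mass on this single index, so $\attn(\vq_\tstep, \mK_\tstep, \mV_\tstep)$ returns exactly the copied value, and adding it to $\vx^1_\tstep$ through the residual connection produces \cref{eq:ls-result}.

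The main obstacle is the design of $\tfscorefun$ together with the query and key embeddings. Two things must hold simultaneously: the score must detect equality of head positions $\posAt{\idxj} = \posAt{\tstep}$ even though positions are stored only in the normalized form $\frac{\posAt{\cdot}}{\cdot + 1}$, and it must break ties toward recency while still defaulting correctly to $\idxj = \tstep$ in the unvisited case. A second, purely bookkeeping, difficulty is index alignment: the representation carries the ``symbol written'' slot with a one-step offset (visible as $\onehot{v_{\tstep-1}}$ at position $\tstep$ in \cref{eq:cs-result}), which is exactly why $\tflayer_1$ retains \emph{both} $\posAt{\tstep}$ and $\posAt{\tstep-1}$; the attention target and the value-extraction slot must be aligned so that the retrieved symbol is $v_{\lastVisit{\tstep}}$ and the retrieved index is $\lastVisit{\tstep}+1$. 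Both points are resolved by the corresponding lemma of \citet{perez-etal-2021-attention}, so after fixing the embedding layout the remaining verification is the routine matrix-level computation of $\attn$ and the residual sum, entirely analogous to the display carried out in \Cref{lem:layer-1}.
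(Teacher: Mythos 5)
Your proposal is correct and follows essentially the same route as the paper's proof: both defer the key property (that the hard-attention score $-\abs{\innerProd{\vq}{\vk}}$, built from the normalized head positions and the positional block, is maximized at the last visit to cell $\posAt{\tstep}$ with recency tie-breaking) to Lemma 10 of \citet{perez-etal-2021-attention}, and both use a value map that copies the positional entry and the symbol one-hot into the empty slots while a zero $\oTransf$ plus the residual connection preserves the rest of $\vx^1_\tstep$. You also correctly flag the one-step offset between the attention target and the slot holding $\onehot{v_{\tstep-1}}$, which is exactly the alignment issue the paper's construction handles by keeping both $\posAt{\tstep}$ and $\posAt{\tstep-1}$ after $\tflayer_1$.
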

\begin{proof}
    This follows from \citet[Lemma 10]{perez-etal-2021-attention}.
    The idea of the construction is for the self-attention mechanism at time $\tstep$ to attend to exactly the entry from time step $\lastVisit{\tstep} + 1$ (i.e., to compute the query and key vectors such that $\argmax\left(\vs_\tstep\right) = \lastVisit{\tstep} + 1$) since that entry will contain the information about the symbol written at the time step before---at $\lastVisit{\tstep}$.
    Then, the values $\lastVisit{\tstep}$ and $v_{\lastVisit{\tstep}}$ are obtained by copying the corresponding values of the positional encoding and the written symbol from that time step.

    Formally, we define
    \begin{subequations}
        \begin{align}
            \mQ^2                            & \defeq
            \begin{blockarray}{ccccccc}
                & \cdots & \overbrace{\hspace{8mm}}^{\frac{\posAt{\tstep}}{\tstep + 1}} & \cdots & \overbrace{\hspace{8mm}}^{\frac{1}{\tstep + 1}} & \overbrace{\hspace{8mm}}^{\frac{1}{\left(\tstep + 1\right)^2}} & \cdots \\
                \begin{block}{(ccccccc)}
                    & & 1 & & & & \\
                    & & & & 1 & & \\
                    & & & & & \frac{1}{3} & \\
                \end{block}
            \end{blockarray} \\
            \mK^2                            & \defeq
            \begin{blockarray}{ccccccc}
                & \cdots & \overbrace{\hspace{8mm}}^{\frac{\posAt{\tstep - 1}}{\tstep + 1}} & \cdots & \overbrace{\hspace{8mm}}^{\frac{1}{\tstep + 1}} & \overbrace{\hspace{8mm}}^{\frac{1}{\left(\tstep + 1\right)^2}} & \cdots \\
                \begin{block}{(ccccccc)}
                    & & & & 1 & & \\
                    & & -1 & & & & \\
                    & & & & & \frac{1}{3} & \\
                \end{block}
            \end{blockarray} \\
            \tfscorefun\left(\vq, \vk\right) & \defeq -\abs{\innerProd{\vq}{\vk}}                                                                                                                                                                               \\
            \mV^2                            & \defeq
            \begin{blockarray}{cccccccc}
                & \cdots & \overbrace{\hspace{8mm}}^{\onehot{v_{\tstep - 1}}} & \cdots & & \overbrace{\hspace{8mm}}^{\tstep + 1} & \cdots \\
                \begin{block}{(ccccccc)c}
                    & & & & & & & \vdots \\
                    & & & & & 1 & & \lastVisit{\tstep} + 1\\
                    & & \mI_{|\tapealphabet|} & & & & & \onehot{v_{\lastVisit{\tstep}}} \\
                    & & & & & & & \vdots \\
                \end{block}
            \end{blockarray}                                                                                   \\
            \mO^1                            & = \zero_{\hiddDim \times \hiddDim}
        \end{align}
    \end{subequations}
    \citet[Lemma 10]{perez-etal-2021-attention} show that, given the parameters above, the output of the scoring function $\tfscorefun$ is maximized at the entry $\lastVisit{\tstep} \in \set{0, \ldots, \tstep}$.\footnote{More precisely, their construction results in the maximum being at $\lastVisit{\tstep + 1}$, since layer 1 in their construction, in contrast to ours, computes the values $\frac{\posAt{\tstep + 1}}{\tstep + 1}$ and $\frac{\posAt{\tstep}}{\tstep + 1}$, shifting all computations by one step. This is the consequence of the aforementioned difference between their deterministic and our probabilistic framework.}
    Since $\mV^2$ copies the second value from the positional encoding of the symbol at time step $\lastVisit{\tstep}$, which is $\lastVisit{\tstep} + 1$, this entry appears in the output of the attention mechanism.
    Furthermore, $\mV^2$ also copies the value $\onehot{v_{\lastVisit{\tstep}}}$ from the same entry.
    This, together with the zero-valued function $\oTransf$ and residual connections, results in \Cref{eq:ls-result}.
\end{proof}

\begin{lemma}[Correctness of the output function] \label{lem:output-function}
    There exists an MLP $\fTransf$ that, given the outputs $\mX^2$ of $\tflayer_2$, computes the one-hot encoding of the current configuration of the PTM.
    More concretely, it holds that
    \begin{equation}
        \fTransf\left(\vx^2_\tstep\right) = \onehot{\stateq_\tstep, s_\tstep, a_{\tstep - 1}}.
    \end{equation}
\end{lemma}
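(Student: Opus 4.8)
The plan is to realize $\fTransf$ as a constant-depth $\mlp$ with $\ReLU$ activations that reads the three components of the target one-hot vector directly off the entries of $\vx^2_\tstep$ supplied by \Cref{lem:layer-2}. The key observation is that the symbol $s_\tstep$ currently scanned by $\tm$'s head equals the symbol last \emph{written} to that tape cell: if the cell has been visited before then $s_\tstep = v_{\lastVisit{\tstep}}$, and otherwise (a first visit) $s_\tstep$ is the blank symbol $\blanksym$. Since $\vx^2_\tstep$ already exposes $\onehot{\stateq_\tstep}$, $\onehot{a_{\tstep-1}}$, and $\onehot{v_{\lastVisit{\tstep}}}$, together with the scalars $\lastVisit{\tstep}+1$ and $\tstep+1$, the only nontrivial work is (i) deciding whether the current step is a first visit and (ii) assembling the three one-hot blocks into the one-hot encoding of the tuple $\left(\stateq_\tstep, s_\tstep, a_{\tstep-1}\right)$.

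First I would build the first-visit gate. Set $\delta_\tstep \defeq \left(\tstep+1\right) - \left(\lastVisit{\tstep}+1\right) = \tstep - \lastVisit{\tstep}$, a nonnegative integer obtained by a single linear read-out; by \cref{eq:l-def} we have $\delta_\tstep = 0$ exactly on a first visit and $\delta_\tstep \geq 1$ otherwise. A layer of $\ReLU$ units followed by a linear combination then computes
\begin{equation}
    g_\tstep \defeq \ReLUFun{\delta_\tstep} - \ReLUFun{\delta_\tstep - 1} = \ind{\delta_\tstep \geq 1} \in \set{0,1},
\end{equation}
which equals $1$ iff the cell was visited before (the identity holds because $\delta_\tstep$ is an integer $\geq 0$).

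Second, I would select the scanned symbol. Writing $\onehot{\blanksym}$ for the fixed one-hot encoding of the blank symbol, I set
\begin{equation}
    \onehot{s_\tstep} = g_\tstep\,\onehot{v_{\lastVisit{\tstep}}} + \left(1 - g_\tstep\right)\onehot{\blanksym}.
\end{equation}
Because every entry here lies in $\set{0,1}$, each coordinatewise product is computed by the bilinear identity $\ReLUFun{x + y - 1} = x\,y$, valid for $x, y \in \set{0,1}$, so one further $\ReLU$ layer yields $\onehot{s_\tstep}$. Finally, the joint one-hot encoding of three one-hot blocks is the flattened outer product, whose entries are the triple products $\onehot{\stateq_\tstep}_i\,\onehot{s_\tstep}_j\,\onehot{a_{\tstep-1}}_k$; iterating the same $\ReLU$ product identity twice (first pairing $\stateq_\tstep$ with $s_\tstep$, then the result with $a_{\tstep-1}$) produces $\onehot{\stateq_\tstep, s_\tstep, a_{\tstep-1}}$ through two additional layers. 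Composing these pieces gives the desired $\fTransf$ with $\fTransf\left(\vx^2_\tstep\right) = \onehot{\stateq_\tstep, s_\tstep, a_{\tstep-1}}$.

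The main obstacle is the first-visit case: the value $v_{\lastVisit{\tstep}}$ exposed by \Cref{lem:layer-2} is meaningful only once the current cell has actually been written, so one must detect the exact equality $\lastVisit{\tstep} = \tstep$ and override the scanned symbol with $\blanksym$ (and, at the very first step, with $\bot$, handled identically via the $\cotbos$ initialization). Isolating this equality test with $\ReLU$ units, and verifying that every intermediate vector stays binary so that the multiplicative $\ReLU$ identities apply, is the delicate part; the ensuing conjunction is then a routine constant-depth construction.
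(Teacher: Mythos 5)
Your construction follows essentially the same route as the paper's proof: read $\onehot{\stateq_\tstep}$ and $\onehot{a_{\tstep-1}}$ directly off $\vx^2_\tstep$, detect a first visit from the nonnegative integer gap $\tstep - \lastVisit{\tstep}$ via a ReLU threshold, gate between the last-written symbol and $\blanksym$, and assemble the joint one-hot encoding of the triple with ReLU-implemented conjunctions (the paper does the masking additively, e.g.\ $\ReLUfunc{\onehot{v} - g\,\one}$, and the final conjunction in one layer rather than two, but these are interchangeable implementation choices). The one point you only sketch---the three-way case needed so that $s_\tstep = \bot$ when $\tstep = 0$, since your two-valued gate alone would emit $\blanksym$ there---is handled in the paper by an additional indicator $\ind{\tstep = 0}$ built in exactly the way you describe, so this is a routine completion rather than a gap.
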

\begin{proof}
    Here, we define a function $\fTransf$ similar to that of \citet[Lemma 11]{perez-etal-2021-attention}, but with an additional layer that handles the addition of the $\bot$ symbol not handled by \citet{perez-etal-2021-attention}.
    The logic nonetheless remains the same: $\fTransf$ receives the output of $\tflayer_2$ of the form
    \begin{equation}
        \vx^2_\tstep = \begin{pmatrix}
            \onehot{\stateq_\tstep}                                                                             \\
            \onehot{v_{\tstep - 1}}                                                                             \\
            \onehot{a_{\tstep - 1}}                                                                             \\
            a_{\tstep - 1}, a_{\tstep - 2}                                                                      \\
            \frac{\posAt{\tstep}}{\tstep + 1}, \frac{\posAt{\tstep - 1}}{\tstep + 1}                            \\
            \textcolor{ETHGreen}{\lastVisit{\tstep} + 1}, \textcolor{ETHGreen}{\onehot{v_{\lastVisit{\tstep}}}} \\
            1, \tstep + 1, \frac{1}{\tstep + 1}, \frac{1}{\left(\tstep + 1\right)^2}
        \end{pmatrix}.
    \end{equation}
    and
    \begin{enumerate*}[label=\textit{(\arabic*)}]
        \item copies the value of $\stateq_\tstep$,
        \item copies the value of $a_{\tstep - 1}$
        \item compares the value of $\lastVisit{\tstep}$ to $\tstep$ to determine whether $s_\tstep = \blanksym$ or $s_\tstep = v_{\lastVisit{\tstep}}$,
        \item compares the value of $\tstep$ to $0$ to determine whether $s_\tstep = \bot$ or $s_\tstep = v_{\lastVisit{\tstep}}$.
    \end{enumerate*}

    Concretely, $\fTransf$ will take the form of a three-layer MLP
    \begin{equation}
        \fTransf\left(\vx\right) \defeq \ReLUfunc{\mW_5 \ReLUfunc{\mW_4 \ReLUfunc{\mW_3 \ReLUfunc{\mW_2 \ReLUfunc{\mW_1 \vx + \vb_1} + \vb_2} + \vb_3} + \vb_4} + \vb_5}
    \end{equation}
    where
    \begin{subequations}
        \begin{align}
            \mW^1 & \in \R^{\left(\nstates + |\tapealphabet| + |\actions| + |\tapealphabet| + |\tapealphabet| + 2 \right) \times \hiddDim}                                                                                      \\
            \mW^2 & \in \R^{\left(\nstates + |\tapealphabet| + |\actions| + |\tapealphabet| + |\tapealphabet| + 2 \right) \times \left(\nstates + |\tapealphabet| + |\actions| + |\tapealphabet| + |\tapealphabet| + 2 \right)} \\
            \mW^3 & \in \R^{\left(\nstates + |\tapealphabet| + |\actions| + |\tapealphabet| + |\tapealphabet| \right) \times \left(\nstates + |\tapealphabet| + |\actions| + |\tapealphabet| + |\tapealphabet| + 2 \right)}     \\
            \mW^4 & \in \R^{\left(\nstates + |\tapealphabet| + |\actions| \right) \times \left(\nstates + |\tapealphabet| + |\actions| + |\tapealphabet| + |\tapealphabet| \right)}                                             \\
            \mW^5 & \in \R^{\left(\nstates |\tapealphabet| |\actions| \right) \times \left(\nstates + |\tapealphabet| + |\actions| \right)}                                                                                     \\
        \end{align}
    \end{subequations}
    are the weights of the MLP and $\vb^1, \ldots, \vb^5$ are the corresponding biases.
    For conciseness, we will denote
    \begin{subequations}
        \begin{align}
            C_1 & \defeq \nstates              \\
            C_2 & \defeq C_1 + |\tapealphabet| \\
            C_3 & \defeq C_2 + |\actions|      \\
            C_4 & \defeq C_3 + |\tapealphabet| \\
            C_5 & \defeq C_4 + |\tapealphabet| \\
            C_6 & \defeq C_5 + 2
        \end{align}
    \end{subequations}
    and
    \begin{subequations}
        \begin{align}
            D_1 & \defeq \nstates                  \\
            D_2 & \defeq D_1 + |\tapealphabet|     \\
            D_3 & \defeq D_2 + |\actions|          \\
            D_4 & \defeq D_3 + 2                   \\
            D_5 & \defeq D_4 + 2                   \\
            D_6 & \defeq D_5 + 1 + |\tapealphabet| \\
            D_7 & \defeq D_6 + 4
        \end{align}
    \end{subequations}
    Then, we define
    \begin{center}
        \noindent\begin{minipage}{0.7\linewidth}
            \begin{equation}
                \mW^1 = \begin{blockarray}{cccccccc}
                    \overbrace{\hspace{8mm}}^{1: D_1} & \overbrace{\hspace{8mm}}^{D_1: D_2} & \overbrace{\hspace{8mm}}^{D_2: D_3} & \cdots & \overbrace{\hspace{8mm}}^{D_5 + 1} & \overbrace{\hspace{8mm}}^{D_6 + 1} & \overbrace{\hspace{8mm}}^{D_6 + 2} & \cdots \\
                    \begin{block}{(cccccccc)}
                        \mI_{\nstates} & & & & & & & \\
                        & \mI_{|\tapealphabet|} & & & & & & \\
                        & & \mI_{|\actions|} & & & & & \\
                        & & & \vphantom{\mI_{|\tapealphabet|}} & & & & \\
                        & & & & \vphantom{\mI_{|\tapealphabet|}} & & & \\
                        & & & & 1 & 1 & -1 & \\
                        & & & & & 2 & -1 & \\
                    \end{block}
                \end{blockarray}
            \end{equation}
        \end{minipage}
        \begin{minipage}{0.25\linewidth}
            \begin{equation}
                \vb^1 = \begin{blockarray}{c}
                    \vphantom{1} \\
                    \begin{block}{(c)}
                        \zero_\nstates \\
                        \zero_{|\tapealphabet|} \\
                        \zero_{|\actions|} \\
                        \onehot{\blanksym} \\
                        \onehot{\bot} \\
                        0 \\
                        0 \\
                    \end{block}
                \end{blockarray}
            \end{equation}
        \end{minipage}
    \end{center}
    \begin{center}
        \noindent\begin{minipage}{0.7\linewidth}
            \begin{equation}
                \mW^2 = \begin{blockarray}{ccccccc}
                    \overbrace{\hspace{8mm}}^{1: C_1} & \overbrace{\hspace{8mm}}^{C_1: C_2} & \overbrace{\hspace{8mm}}^{C_2: C_3} & \overbrace{\hspace{8mm}}^{C_3: C_4} & \overbrace{\hspace{8mm}}^{C_4: C_5} & \overbrace{\hspace{8mm}}^{C_5 + 1} & \overbrace{\hspace{8mm}}^{C_5 + 2} \\
                    \begin{block}{(ccccccc)}
                        \mI_{\nstates} & & & & & & \\
                        & \mI_{|\tapealphabet|} & & & & & \\
                        & & \mI_{|\actions|} & & & & \\
                        & & & \mI_{|\tapealphabet|} & & & \\
                        & & & & \mI_{|\tapealphabet|} & & \\
                        & & & & & 1 & -1 \\
                        & & & & & & 1 \\
                    \end{block}
                \end{blockarray}
            \end{equation}
        \end{minipage}
        \begin{minipage}{0.25\linewidth}
            \begin{equation}
                \vb^2 = \begin{blockarray}{c}
                    \vphantom{1} \\
                    \begin{block}{(c)}
                        \zero_\nstates \\
                        \zero_{|\tapealphabet|} \\
                        \zero_{|\actions|} \\
                        \zero_{|\tapealphabet|} \\
                        \zero_{|\tapealphabet|} \\
                        0 \\
                        0 \\
                    \end{block}
                \end{blockarray}
            \end{equation}
        \end{minipage}
    \end{center}
    \begin{center}
        \noindent\begin{minipage}{0.7\linewidth}
            \begin{equation}
                \mW^3 = \begin{blockarray}{cccccccc}
                    \overbrace{\hspace{8mm}}^{1: C_1} & \overbrace{\hspace{8mm}}^{C_1: C_2} & \overbrace{\hspace{8mm}}^{C_2: C_3} & \overbrace{\hspace{8mm}}^{C_3: C_4} & \overbrace{\hspace{8mm}}^{C_4: C_5} & \overbrace{\hspace{8mm}}^{C_5 + 1} & \overbrace{\hspace{8mm}}^{C_5 + 2} \\
                    \begin{block}{(cccccccc)}
                        \mI_{\nstates} & & & & & & & \\
                        & \mI_{|\tapealphabet|} & & & & -\one_{|\tapealphabet|} & -\one_{|\tapealphabet|} & \\
                        & & \mI_{|\actions|} & & & & & \\
                        & & & \mI_{|\tapealphabet|} & & \one_{|\tapealphabet|} & &\\
                        & & & & \mI_{|\tapealphabet|} & & \one_{|\tapealphabet|} &\\
                    \end{block}
                \end{blockarray}
            \end{equation}
        \end{minipage}
        \begin{minipage}{0.25\linewidth}
            \begin{equation}
                \vb^3 = \begin{blockarray}{c}
                    \vphantom{1} \\
                    \begin{block}{(c)}
                        \zero_\nstates \\
                        \zero_{|\tapealphabet|} \\
                        \zero_{|\actions|} \\
                        -\one_{|\tapealphabet|} \\
                        -\one_{|\tapealphabet|} \\
                    \end{block}
                \end{blockarray}
            \end{equation}
        \end{minipage}
    \end{center}
    \begin{center}
        \noindent\begin{minipage}{0.6\linewidth}
            \begin{equation}
                \mW^4 = \begin{blockarray}{cccccc}
                    \overbrace{\hspace{8mm}}^{1: C_1} & \overbrace{\hspace{8mm}}^{C_1: C_2} & \overbrace{\hspace{8mm}}^{C_2: C_3} & \overbrace{\hspace{8mm}}^{C_3: C_4} & \overbrace{\hspace{8mm}}^{C_4: C_5} \\
                    \begin{block}{(cccccc)}
                        \mI_{\nstates} & & & & & \\
                        & \mI_{|\tapealphabet|} & & \mI_{|\tapealphabet|} & \mI_{|\tapealphabet|} & \\
                        & & \mI_{|\actions|} & & &\\
                    \end{block}
                \end{blockarray}
            \end{equation}
        \end{minipage}
        \begin{minipage}{0.3\linewidth}
            \begin{equation}
                \vb^4 = \begin{blockarray}{c}
                    \vphantom{1} \\
                    \begin{block}{(c)}
                        \zero_\nstates \\
                        \zero_{|\tapealphabet|} \\
                        \zero_{|\actions|} \\
                    \end{block}
                \end{blockarray}
            \end{equation}
        \end{minipage}
    \end{center}
    Lastly, we set $\mW^5$ and $\vb^5$ such that the the function $\vx \mapsto \ReLUfunc{\mW^5 \vx + \vb^5}$ computes the one-hot encoding of the three input one-hot encodings $\onehot{\stateq_\tstep}, \onehot{s_\tstep}, \onehot{a_{\tstep - 1}}$ by implementing the logic \texttt{AND} operations, as in \citet[Lemma B.1]{svete-etal-2024-transformers}.

    By construction, we get that
    \begin{subequations}
        \begin{align}
            \vy^1
             & = \ReLUfunc{\mW^1 \vx^2_\tstep + \vb^1}                                           \\
             & = \begin{pmatrix}
                     \onehot{\stateq_\tstep}                                         \\
                     \onehot{v_{\tstep - 1}}                                         \\
                     \onehot{a_{\tstep - 1}}                                         \\
                     \onehot{\blanksym}                                              \\
                     \onehot{\bot}                                                   \\
                     \ReLUfunc{\lastVisit{\tstep} + 1 + 1 - \left(\tstep + 1\right)} \\
                     \ReLUfunc{2 - \left(\tstep + 1\right)}                          \\
                 \end{pmatrix} \\
             & = \begin{pmatrix}
                     \onehot{\stateq_\tstep}                    \\
                     \onehot{v_{\tstep - 1}}                    \\
                     \onehot{a_{\tstep - 1}}                    \\
                     \onehot{\blanksym}                         \\
                     \onehot{\bot}                              \\
                     \ReLUfunc{\lastVisit{\tstep} + 1 - \tstep} \\
                     \ReLUfunc{1 - \tstep}                      \\
                 \end{pmatrix}                      \\
             & = \begin{pmatrix}
                     \onehot{\stateq_\tstep}              \\
                     \onehot{v_{\tstep - 1}}              \\
                     \onehot{a_{\tstep - 1}}              \\
                     \onehot{\blanksym}                   \\
                     \onehot{\bot}                        \\
                     \ind{\lastVisit{\tstep} \geq \tstep} \\
                     \ind{\tstep < 1}                     \\
                 \end{pmatrix}                            \\
             & = \begin{pmatrix}
                     \onehot{\stateq_\tstep}           \\
                     \onehot{v_{\tstep - 1}}           \\
                     \onehot{a_{\tstep - 1}}           \\
                     \onehot{\blanksym}                \\
                     \onehot{\bot}                     \\
                     \ind{\lastVisit{\tstep} = \tstep} \\
                     \ind{\tstep = 0}                  \\
                 \end{pmatrix}
        \end{align}
    \end{subequations}
    \begin{subequations}
        \begin{align}
            \vy^2
             & = \ReLUfunc{\mW^2 \vy^1 + \vb^2}                                                  \\
             & = \begin{pmatrix}
                     \onehot{\stateq_\tstep}                                         \\
                     \onehot{v_{\tstep - 1}}                                         \\
                     \onehot{a_{\tstep - 1}}                                         \\
                     \onehot{\blanksym}                                              \\
                     \onehot{\bot}                                                   \\
                     \ReLUfunc{\ind{\lastVisit{\tstep} = \tstep} - \ind{\tstep = 0}} \\
                     \ind{\tstep = 0}                                                \\
                 \end{pmatrix} \\                                     \\
             & = \begin{pmatrix}
                     \onehot{\stateq_\tstep}                                     \\
                     \onehot{v_{\tstep - 1}}                                     \\
                     \onehot{a_{\tstep - 1}}                                     \\
                     \onehot{\blanksym}                                          \\
                     \onehot{\bot}                                               \\
                     \ind{\ind{\lastVisit{\tstep} = \tstep} \& \ind{\tstep > 0}} \\
                     \ind{\tstep = 0}                                            \\
                 \end{pmatrix}
        \end{align}
    \end{subequations}
    \begin{subequations}
        \begin{align}
            \vy^3
             & = \ReLUfunc{\mW^3 \vy^2 + \vb^3}                                                                                                                                    \\
             & = \begin{pmatrix}
                     \onehot{\stateq_\tstep}                                                                                                                        \\
                     \ReLUfunc{\onehot{v_{\tstep - 1}} - \left(\ind{\lastVisit{\tstep} = \tstep \land \tstep > 0} + \ind{\tstep = 0}\right) \one_{|\tapealphabet|}} \\
                     \onehot{a_{\tstep - 1}}                                                                                                                        \\
                     \ReLUfunc{\onehot{\blanksym} + \ind{\lastVisit{\tstep} = \tstep \land \tstep > 0} \one_{|\tapealphabet|} - \one_{|\tapealphabet|}}             \\
                     \ReLUfunc{\onehot{\bot} + \ind{\tstep = 0} \one_{|\tapealphabet|} - \one_{|\tapealphabet|}}                                                    \\
                 \end{pmatrix} \\
             & = \begin{pmatrix}
                     \onehot{\stateq_\tstep}                                                                                                       \\
                     \ind{\neg\left(\lastVisit{\tstep} = \tstep \land \tstep > 0\right) \land \neg\left(\tstep = 0\right)} \onehot{v_{\tstep - 1}} \\
                     \onehot{a_{\tstep - 1}}                                                                                                       \\
                     \ind{\lastVisit{\tstep} = \tstep \land \tstep > 0} \onehot{\blanksym}                                                         \\
                     \ind{\tstep = 0} \onehot{\bot}                                                                                                \\
                 \end{pmatrix}                     \\
             & = \begin{pmatrix}
                     \onehot{\stateq_\tstep}                                                                                 \\
                     \ind{\left(\lastVisit{\tstep} < \tstep \lor \tstep = 0\right) \land \tstep > 0} \onehot{v_{\tstep - 1}} \\
                     \onehot{a_{\tstep - 1}}                                                                                 \\
                     \ind{\lastVisit{\tstep} = \tstep \land \tstep > 0} \onehot{\blanksym}                                   \\
                     \ind{\tstep = 0} \onehot{\bot}                                                                          \\
                 \end{pmatrix}                                           \\
             & = \begin{pmatrix}
                     \onehot{\stateq_\tstep}                                                    \\
                     \ind{\lastVisit{\tstep} < \tstep \land \tstep > 0} \onehot{v_{\tstep - 1}} \\
                     \onehot{a_{\tstep - 1}}                                                    \\
                     \ind{\lastVisit{\tstep} = \tstep \land \tstep > 0} \onehot{\blanksym}      \\
                     \ind{\tstep = 0} \onehot{\bot}                                             \\
                 \end{pmatrix}
        \end{align}
    \end{subequations}
    \begin{subequations}
        \begin{align}
            \vy^4
             & = \ReLUfunc{\mW^4 \vy^3 + \vb^4}                                                                                                                                                                                \\
             & = \begin{pmatrix}
                     \onehot{\stateq_\tstep}                                                                                                                                                             \\
                     \ind{\lastVisit{\tstep} < \tstep \land \tstep > 0} \onehot{v_{\tstep - 1}} + \ind{\lastVisit{\tstep} = \tstep \land \tstep > 0} \onehot{\blanksym} + \ind{\tstep = 0} \onehot{\bot} \\
                     \onehot{a_{\tstep - 1}}                                                                                                                                                             \\
                 \end{pmatrix} \label{eq:log-expr} \\
             & \defeq \begin{pmatrix}
                          \onehot{\stateq_\tstep} \\
                          \onehot{w_\tstep}       \\
                          \onehot{a_{\tstep - 1}} \\
                      \end{pmatrix}
        \end{align}
    \end{subequations}
    Since the three logical expressions in \Cref{eq:log-expr} are complementary, it holds that the second component of $\vy^4$ holds either the value of $v_{\tstep - 1}$, $\blanksym$, or $\bot$ depending on the value of $\lastVisit{\tstep}$ and $\tstep$.
    This is exactly the symbol that will be read by the PTM at time step $\tstep$, i.e., $s_\tstep$: If $\lastVisit{\tstep} < \tstep$ (which means that cell $\posAt{\tstep}$ has been visited before), then $s_\tstep = v_{\tstep - 1}$; if $\lastVisit{\tstep} = \tstep$ and $\tstep > 0$ (which means that cell $\posAt{\tstep}$ has not been visited before, and $\tstep > 0$), then $s_\tstep = \blanksym$; and if $\tstep = 0$ (meaning that the PTM just started executing and is still reading the initial symbol $\bot$), then $s_\tstep = \bot$.
    By the construction of $\mW^5$ and $\vb^5$, we get that $\vy^5 = \onehot{\stateq_\tstep, s_\tstep, a_{\tstep - 1}}$.
\end{proof}

\begin{lemma}[Correctness of the sampling step] \label{lem:sampling}
    Define the output matrix $\outMtx \in \R^{\nstates |\stackalphabet| |\eosalphabet_\eps| |\actions| |\actions| \times \nstates |\tapealphabet| |\actions|}$ as
    \begin{equation}
        \eOutMtx_{\left(\stateq^\prime, v, \eossym, a^\prime, a\right), \left(\stateq, s, a\right)} \defeq \begin{cases}
            \pLM\left(\stateq^\prime, v, \eossym, a^\prime, a \mid \stateq, s\right) & \ifcondition \eossym \neq \eos \\
            \pLM\left(\qfinal, v, \eossym, a^\prime, a \mid \stateq, s\right)  & \otherwisecondition
        \end{cases}
    \end{equation}
    for $\stateq, \stateq^\prime \in \states$, $v \in \tapealphabet$, $\eossym \in \eosalphabet_\eps$, $a \in \actions$, $a^\prime \in \set{-1, 1}$, and $s \in \stackalphabet$.
    Then, it holds that
    \begin{equation}
        \outMtx \, \fTransf\left(\vx^2_\tstep\right) = \pLM\left(\cdot \mid \stateq_\tstep, s_\tstep\right).
    \end{equation}
\end{lemma}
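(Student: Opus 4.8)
The plan is to exploit the fact that $\fTransf\left(\vx^2_\tstep\right)$ is a one-hot vector, so that left-multiplication by $\outMtx$ reduces to selecting a single column. First I would invoke \Cref{lem:output-function} to write $\fTransf\left(\vx^2_\tstep\right) = \onehot{\stateq_\tstep, s_\tstep, a_{\tstep-1}}$, where this one-hot encoding is indexed by triples $\left(\stateq, s, a\right) \in \states \times \tapealphabet \times \actions$. Since all but the $\left(\stateq_\tstep, s_\tstep, a_{\tstep-1}\right)$-entry of this vector vanish, the matrix--vector product $\outMtx\,\fTransf\left(\vx^2_\tstep\right)$ equals the column of $\outMtx$ indexed by $\left(\stateq_\tstep, s_\tstep, a_{\tstep-1}\right)$.

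Next I would compute this column entry by entry. Indexing the output coordinates directly by the five-component augmented symbols $\left(\stateq^\prime, v, \eossym, a^\prime, a\right) \in \outalphabet$, the selected entry is $\eOutMtx_{\left(\stateq^\prime, v, \eossym, a^\prime, a\right),\,\left(\stateq_\tstep, s_\tstep, a_{\tstep-1}\right)}$, which by the definition of $\outMtx$ equals $\pLM\left(\stateq^\prime, v, \eossym, a^\prime, a \mid \stateq_\tstep, s_\tstep\right)$ when $\eossym \neq \eos$ and $\pLM\left(\qfinal, v, \eossym, a^\prime, a \mid \stateq_\tstep, s_\tstep\right)$ otherwise. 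The one bookkeeping point to verify is that the final coordinate $a$ of the row index is forced to equal $a_{\tstep-1}$ by the column choice; this is exactly consistent with the construction, in which the freshly sampled augmented symbol carries the previous action $a_{\tstep-1}$ in its last component. Collecting all coordinates, the column is precisely the vector $\pLM\left(\cdot \mid \stateq_\tstep, s_\tstep\right)$ of next-symbol probabilities.

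Finally I would confirm that this column is a genuine conditional distribution. Because the PTM's transition weights sum to one for every pair $\left(\stateq, \tapesym\right)$ (\cref{def:ptm}), and the free components $\left(\stateq^\prime, v, \eossym, a^\prime\right)$ are in bijection with the transitions out of $\left(\stateq_\tstep, s_\tstep\right)$ (next state, written working-tape symbol, output symbol, and head direction), the entries of the column sum to one and lie in the simplex. Consequently, when this vector is passed through $\sparsemax$ in \cref{eq:transformer-plnsm}, it is left unchanged, so the transformer LM's conditional distribution coincides with the PTM's transition distribution at the current configuration, as needed for the weak-equivalence argument in the proof of \cref{thm:cot-transformer-ptm}.

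The argument is essentially a one-hot selection identity, so I do not expect a deep obstacle; the care required is purely in the index bookkeeping---checking that the row and column labels of $\outMtx$ line up with the five-component augmented symbols and the three-component configuration encoding produced by $\fTransf$, and that the repeated action coordinate $a$ is threaded consistently so that the sampled symbol faithfully records $a_{\tstep-1}$.
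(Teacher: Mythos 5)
Your proposal is correct and follows essentially the same route as the paper's own proof: invoke \cref{lem:output-function} to obtain the one-hot encoding $\onehot{\stateq_\tstep, s_\tstep, a_{\tstep-1}}$, observe that multiplying by $\outMtx$ selects the corresponding column, and read off the entries from the definition of $\outMtx$. Your additional remarks---that the repeated action coordinate $a$ is threaded consistently between row and column indices, and that the resulting column lies in the simplex so sparsemax acts as the identity---are correct bookkeeping checks that the paper leaves implicit.
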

\begin{proof}
    Follows directly from \Cref{lem:output-function} and the construction of $\outMtx$:By \Cref{lem:output-function}, we have that
    \begin{equation}
        \fTransf\left(\vx^2_\tstep\right) = \onehot{\stateq_\tstep, s_\tstep, a_{\tstep - 1}}.
    \end{equation}
    By construction of $\outMtx$, we have that
    \begin{subequations}
        \begin{align}
            \left(\outMtx \, \fTransf\left(\vx^2_\tstep\right)\right)_{\left(\stateq^\prime, v, \eossym, a^\prime, a\right)}
             & = \left(\outMtx \, \onehot{\stateq_\tstep, s_\tstep, a_{\tstep - 1}}\right)_{\left(\stateq^\prime, v, \eossym, a^\prime, a\right)} \\
             & = \outMtx_{\left(\stateq_\tstep, s_\tstep, a_{\tstep - 1}\right), \left(\stateq^\prime, v, \eossym, a^\prime, a\right)}            \\
             & = \begin{cases}
                     \pLM\left(\stateq^\prime, v, \eossym, a^\prime, a \mid \stateq, s\right) & \ifcondition \eossym \neq \eos \\
                     \pLM\left(\qfinal, v, \eossym, a^\prime, a \mid \stateq, s\right)  & \otherwisecondition
                 \end{cases}.
        \end{align}
    \end{subequations}
\end{proof}

\end{document}